\documentclass{article}

\usepackage[preprint, nonatbib]{neurips_2026}

\usepackage[utf8]{inputenc} 
\usepackage[T1]{fontenc}    
\usepackage{hyperref}       
\usepackage{url}            
\usepackage{booktabs}       
\usepackage{amsfonts}       
\usepackage{nicefrac}       
\usepackage{microtype}      
\usepackage{xcolor}         
\usepackage{subcaption}

\usepackage[numbers]{natbib}
\usepackage{amsmath}
\usepackage{amssymb}
\usepackage{mathtools}
\usepackage{amsthm}
\usepackage{bm}
\usepackage{comment}
\usepackage{etoc}
\usepackage[capitalize,noabbrev]{cleveref}
\usepackage{float}

\theoremstyle{plain}
\newtheorem{theorem}{Theorem}[section]
\newtheorem{proposition}[theorem]{Proposition}
\newtheorem{lemma}[theorem]{Lemma}

\theoremstyle{definition}
\newtheorem{definition}[theorem]{Definition}

\theoremstyle{remark}

\title{Grokking or Glitching? How Low-Precision Drives Slingshot Loss Spikes}

\author{%
  Liu Hanqing$^{1,2}$\thanks{Correspondence: \texttt{liu.hanqing.phy@gmail.com} or \texttt{lhq23@mails.tsinghua.edu.cn}} \quad 
    Jianjun Cao$^1$ \quad 
    Yuanze Li$^1$ \quad 
    Zijian Zhou$^1$ \\
    \small $^1$Tsinghua University \quad $^2$The University of Tokyo
}

\begin{document}

\addtocontents{toc}{\protect\setcounter{tocdepth}{0}}

\maketitle

\begin{abstract}
  Deep neural networks exhibit periodic loss spikes during unregularized long-term training, a phenomenon known as the ``Slingshot Mechanism''~\cite{thilak2022}.
  Existing work usually attributes this to intrinsic optimization dynamics, but its triggering mechanism remains unclear. 
  This paper proves that this phenomenon is a result of floating-point arithmetic precision limits.
  As training enters a high-confidence stage, the difference between the correct-class logit and 
  the other logits may exceed the absorption-error threshold. Then during backpropagation, 
  the gradient of the correct class is rounded exactly to zero, while the gradients of the incorrect classes remain nonzero. 
  This breaks the zero-sum constraint of gradients across classes and 
  introduces a systematic drift in the parameter update of the classifier layer. 
  We prove that this drift forms a positive feedback loop with the feature, 
  causing the global classifier mean and the global feature mean to grow exponentially. 
  We call this mechanism \emph{Numerical Feature Inflation} ($\mathcal{NFI}$). This mechanism explains the rapid norm growth before a Slingshot spike, 
  the subsequent reappearance of gradients, and the resulting loss spike. We further show that $\mathcal{NFI}$ is not equivalent to 
  an observed loss spike: in more practical tasks, partial absorption may not produce visible spikes, 
  but it can still break the zero-sum constraint and drive rapid growth of parameter norms.
  Our results reinterpret Slingshot as a numerical dynamic of finite-precision training, 
  and provide a testable explanation for abnormal parameter growth and logit divergence in late-stage training.
\end{abstract}

\section{Introduction}

Loss spikes are a persistent puzzle in neural network training, creating difficulties for both theoretical understanding and practical stability. 
One representative example is the \emph{Slingshot Mechanism}, first observed in the study of grokking under no explicit regularization. 
Grokking refers to the phenomenon where neural networks achieve sudden generalization long after reaching perfect training accuracy~\cite{power2022}. 
In such settings, training is often accompanied by periodic instabilities: 
the norm of the last-layer parameters grows rapidly, often close to exponentially, 
and is followed by an abrupt training loss spike.

Existing work has mainly interpreted Slingshot as an intrinsic optimization phenomenon. 
For example, Thilak et al.~\cite{thilak2022} related it to the Edge of Stability (EOS)~\cite{cohen2021}, 
where the optimizer periodically crosses stability boundaries. 
Nanda et al.~\cite{nanda2023} suggested that the effect may arise from the interaction between gradients of 
different scales and adaptive optimizer dynamics. \emph{In contrast}, 
we show that Slingshot is not primarily caused by the intrinsic optimization dynamics of the model. 
Instead, it is triggered by finite-precision arithmetic in the computation of the cross-entropy (CE) loss.

After long training, the model enters a high-confidence regime: 
it reaches perfect training accuracy, and for each training sample, 
the correct-class logit becomes much larger than the other logits. 
CE loss converts logits into probabilities through the softmax function. 
Prieto et al.~\cite{prieto2025} showed that when the gap between the largest logit $z_m$ and 
the other logits exceeds a threshold determined by floating-point precision, 
absorption error causes the computed result to differ from the exact real-number value. 
As a result, during backpropagation, the gradient of the loss with respect to the correct-class logit, 
$\partial L / \partial z_m$, is rounded exactly to zero. They call this phenomenon \emph{Softmax Collapse} (SC).

We show that SC has a second effect that directly drives Slingshot. 
In exact arithmetic, the gradients on the rows $\bm{W}_k$ of the final classifier satisfy 
a zero-sum constraint across classes. Therefore, the global classifier mean
$\bm{W}_G = \frac{1}{K}\sum_{k=1}^{K} \bm{W}_k$
remains unchanged under gradient updates. However, SC breaks this zero-sum constraint. 
We prove that this introduces a drift of $\bm{W}_G$ in the direction of $-\bm{\mu}_G$, where
$\bm{\mu}_G = \frac{1}{B}\sum_{k,i} \bm{h}_{k,i}$
is the mean of the penultimate-layer features in a batch. We further prove that, once $\bm{W}_G$ becomes nonzero, 
the feature mean $\bm{\mu}_G$ also receives a drift in the direction of $-\bm{W}_G$. 
These two effects form a positive feedback loop: $\bm{W}_G$ and $\bm{\mu}_G$ become anti-parallel, 
and their norms grow exponentially. We call this process \emph{Numerical Feature Inflation} ($\mathcal{NFI}$).

$\mathcal{NFI}$ explains how Slingshot loss spikes are triggered. As $\bm{W}_G$ and $\bm{\mu}_G$ grow, 
the sample-level margin $z_m - \max_{k \neq m} z_k$ can become fragile for some samples and 
eventually fall below the SC threshold. The correct-class gradient then reappears abruptly. 
Before this moment, the correct-class gradient has been zero and the incorrect-class gradients have been 
extremely small, so Adam can amplify the effective learning rate to a large value. When the correct-class gradient 
suddenly changes from zero to a finite value, this large effective learning rate produces a large parameter update, 
causing the loss spike.

\begin{figure}[t]
  \centering
     \begin{subfigure}[b]{0.325\textwidth}
         \centering
         \includegraphics[width=0.96\textwidth]{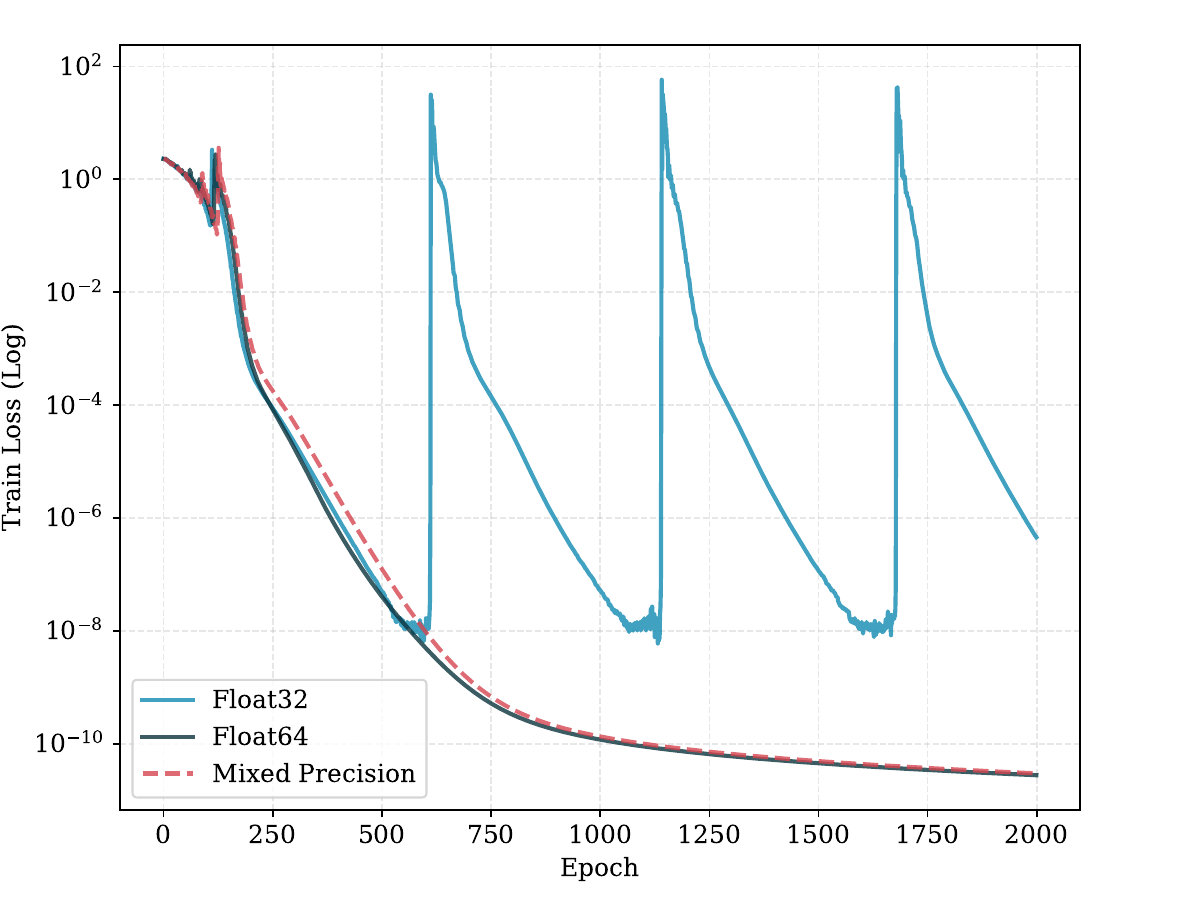}
         \caption{}
         \label{fig:sling}
     \end{subfigure}
     \hfill 
     \begin{subfigure}[b]{0.325\textwidth}
         \centering
         \includegraphics[width=0.96\textwidth]{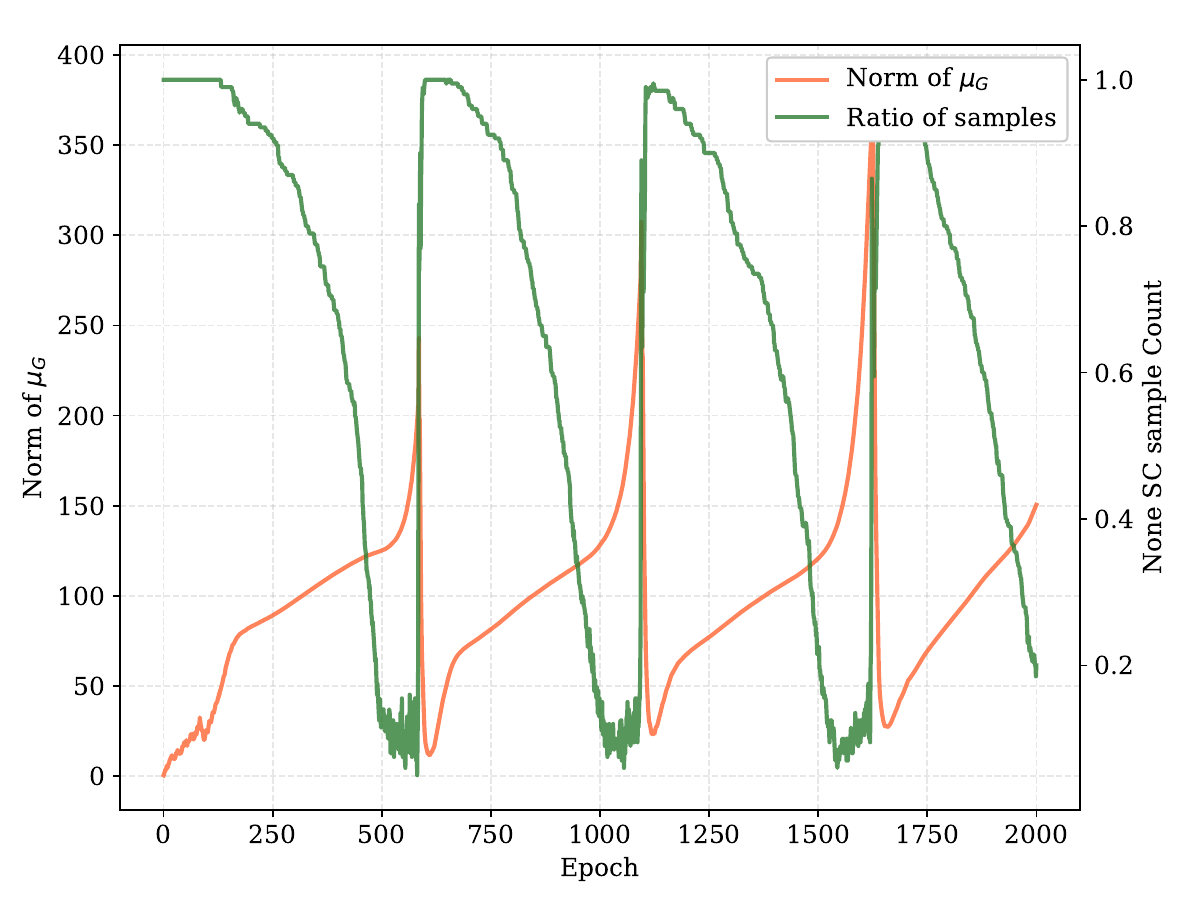}
         \caption{}
         \label{fig:count}
     \end{subfigure}
     \hfill
    \begin{subfigure}[b]{0.325\textwidth}
        \centering
         \includegraphics[width=0.96\textwidth]{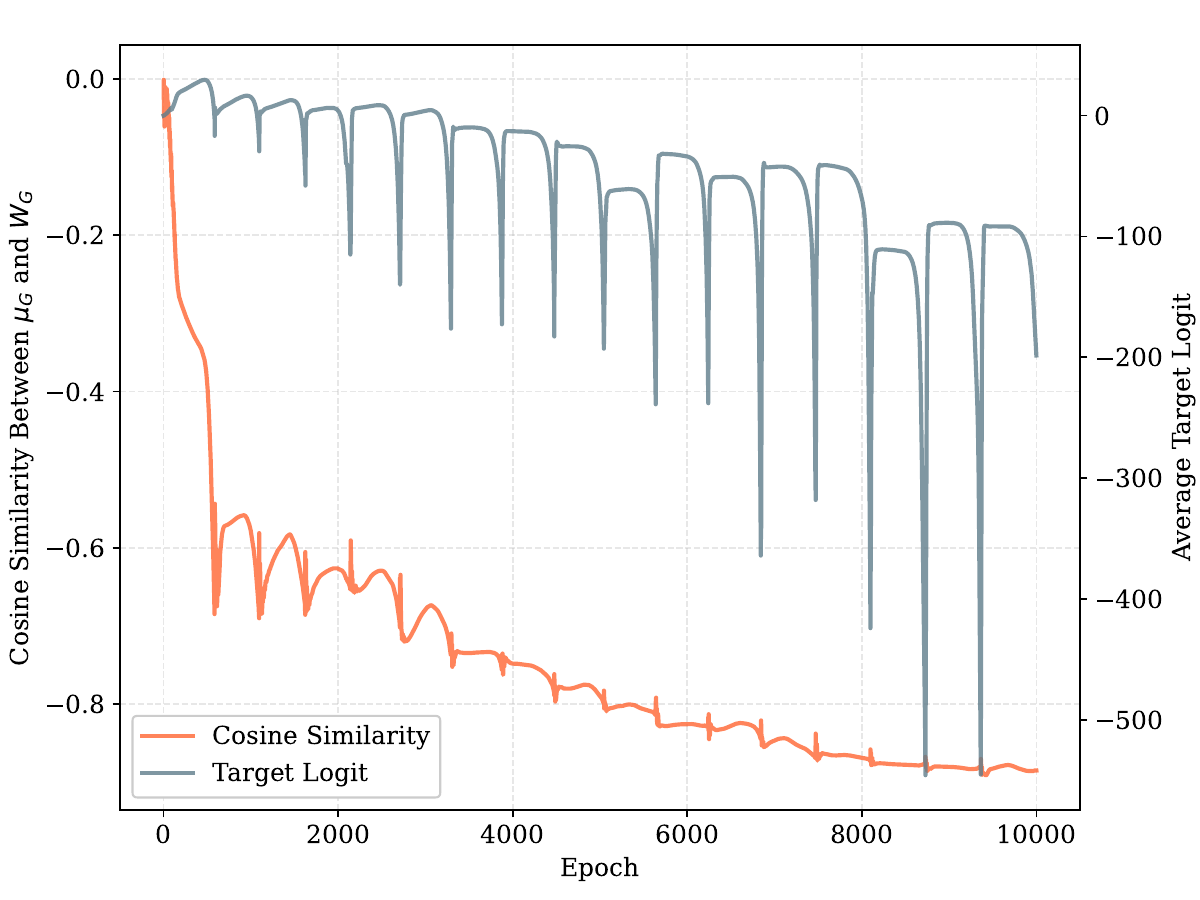}
         \caption{}
         \label{fig:drift}
     \end{subfigure}
        
     \caption{
\textbf{Precision-induced $\mathcal{NFI}$ dynamics.}
\textbf{(a)} Slingshot loss spikes disappear when training is performed in float64. Casting only the logits/loss computation to float64 is also sufficient to remove the spikes, showing that the instability originates from the loss computation.
\textbf{(b)} Before most samples enter Softmax Collapse, the global feature mean grows slowly. Once most samples collapse, $\|\bm{\mu}_G\|$ enters a rapid-growth phase.
\textbf{(c)} The cosine similarity between $\bm{W}_G$ and $\bm{\mu}_G$ approaches $-1$. Under this anti-parallel alignment, even the correct-class logits become deeply negative.
}
\vspace{-0.6cm}
\end{figure}

Beyond Slingshot, we show that $\mathcal{NFI}$ also explains abnormal parameter growth in more realistic settings. 
Classical gradient-flow analyses~\cite{lyu2020,soudry2018} suggest that, without explicit regularization, 
separable models maximize margin and their parameter norms grow only logarithmically. 
This prediction is often inconsistent with practical, 
where unregularized models can show rapid growth in parameter norms and logits. 
We argue that this discrepancy arises because gradient-flow analyses do not account for 
finite-precision effects in loss computation. After removing the $\mathcal{NFI}$ dynamics, 
the growth of model parameters is substantially reduced.

Our work is also relevant to the current shift toward low-precision training. 
Recent studies have identified several sources of instability in low-precision pre-training, 
including quantization errors in matrix multiplication~\cite{wortsman2023}, 
attention-sink-induced loss spikes~\cite{xiao2024}, and rounding-error accumulation in Flash Attention~\cite{qiu2025}. 
Our work identifies a different failure mode: numerical error in the computation of log-probabilities. 
As training systems move toward increasingly low precision, 
understanding this mechanism is important for improving the stability of large-scale model training.

Our contributions are summarized as follows:
\begin{itemize}
    \item We provide a theoretical explanation of the Slingshot Mechanism based on finite-precision cross-entropy computation.
    \item We identify Numerical Feature Inflation, a feedback loop between the global classifier mean and the global feature mean, as a mechanism for abnormal parameter and logit growth after long-term unregularized training.
    \item We propose and validate practical interventions, such as restoring the zero-sum constraint or removing the last-layer mean, that suppress $\mathcal{NFI}$-induced instability and parameter growth.
\end{itemize}

\section{Related Work}

\paragraph{Grokking and Numerical Dynamics.}
Weight decay has traditionally been credited as the primary driver of grokking~\cite{liu2023, lyu2024, boursier2025, tian2025}. 
However, growing empirical evidence shows that grokking can also occur without explicit regularization, 
although many of these demonstrations are conducted under mean-squared error (MSE) loss~\cite{gromov2023, kumar2024, golechha2024, prakash2025}. 
In contrast, achieving grokking with CE loss and no weight decay appears theoretically impractical: 
Lyu et al.~\cite{lyu2024} estimate that it would require approximately $10^{100}$ steps under Gradient Descent. 
Nevertheless, Thilak et al.~\cite{thilak2022} observed grokking under CE loss with Adam, 
where generalization is accompanied by recurrent Slingshot loss spikes. 
Given the theoretical difficulty of unregularized CE grokking, 
recent work has pivoted to examining the role of numerical precision.
Nanda et al.~\cite{nanda2023} first linked the Slingshot Mechanism to numerical instability, noting that lower precision exacerbates these events.
Xu et al.~\cite{xu2025} subsequently found that employing \texttt{float64} precision could mitigate the Slingshot effect. 
More recently, Prieto et al.~\cite{prieto2025} conducted a detailed analysis of these instabilities, identifying ``Softmax Collapse'' (SC) as a critical phenomenon.
They showed that in PyTorch's CE loss implementation, absorption errors occur 
when the correct logit $z_m$ exceeds incorrect logits by a threshold determined by the mantissa precision 
(for \texttt{float32}, $z_{m}-\max_{k\neq m} z_{k} > 23\ln 2\approx 16$). 
Under SC, the gradient on the correct class strictly rounds to zero, halting test accuracy improvement.
However, Prieto et al. stopped short of establishing a causal link between SC and the Slingshot mechanism itself, 
hypothesizing instead that Slingshots might serve as an intrinsic optimization response to avoid SC.

\paragraph{Neural Collapse.}
Recent studies indicate that neural networks tend to converge to a state known as Neural Collapse (NC) after prolonged training \cite{papyan2020}. 
Theoretical works have established that under CE loss without weight decay, 
the NC state represents a global optimum \cite{lu2022, garrod2025}. 
Furthermore, Dang et al. \cite{dang2024} demonstrated that with ReLU activation, 
the class means of feature vectors tend to become mutually orthogonal. 
Sakamoto \& Sato \cite{sakamoto2025} established a connection between grokking and NC. 
While generalization does not strictly require NC, Han et al. \cite{han2025} suggest that models exhibit a strong propensity toward NC in the absence of explicit regularization.

\paragraph{The Instability of Adam and Loss Spikes.}
The convergence properties of Adam~\cite{kingma2015} have been extensively studied. 
Reddi et al. \cite{reddi2018} highlighted the non-convergence issues of Adam. 
Shazeer \& Stern \cite{shazeer2018} showed that a slow decay rate of the second moment accumulator $v_t$ can cause larger-than-desired updates and training instability.
Zhang et al. \cite{zhang2022} showed that appropriate hyperparameter selection can ensure convergence. 
Cohen et al. \cite{cohen2023, cohen2025} investigated the EOS phenomenon and loss spikes in adaptive optimization.
Molybog et al. \cite{molybog2023} attributed loss spikes to time-domain gradient correlations 
that cause the Adam update ratio to shift into a bimodal distribution.
In this regime, anomalous data batches trigger a chain reaction 
that forces the update vector to depart from its suppressed state, causing the training loss to explode. 
Subsequently, Bai et al. \cite{bai2025} provided a stability-based explanation, 
observing that when gradients remain small for extended periods (flat landscapes), 
the adaptive term $v_t$ decays, causing the effective learning rate $\eta / (\sqrt{v_t} + \varepsilon)$ to explode. 
In standard settings ($\eta=10^{-3}, \varepsilon=10^{-8}$), 
this effective step size can be amplified by orders of magnitude (e.g., $10^5$), 
making the model highly susceptible to drastic updates upon the re-emergence of gradient signals. 
However, these analyses typically focus on MSE loss where the Hessian is non-vanishing. 
In contrast, Ma et al. \cite{ma2022} theoretically argued that under CE loss, 
the Hessian eigenvalues vanish in the late training phase, implying that the optimization should stabilize and be immune to such spikes.


\section{The Mechanics of \emph{Numerical Feature Inflation}}
\label{sec:theory}
In this section, we derive the theoretical mechanism behind the training instabilities 
observed in low-precision training.
As shown in \cref{fig:sling}, our results indicate that the Slingshot mechanism is fundamentally a numerical precision artifact. 
Even when model parameters are stored in \texttt{float32}, 
casting the output logits to \texttt{float64} solely during the loss computation is sufficient to eliminate the Slingshot effect. 
As analyzed by Prieto et al.~\cite{prieto2025}, this instability arises from absorption errors in floating-point arithmetic.

\subsection{Preliminaries}
According to the IEEE 754 standard~\cite{IEEE754-2019}, 
a floating-point number consists of 1 sign bit $s$, $E$ exponent bits, and $p$ mantissa bits.

\begin{definition}[\textbf{Absorption Error}]
Consider the addition of two non-zero floating-point numbers $a$ and $b$ (where $|a| \ge |b|$). The operation requires exponent alignment. If the ratio satisfies
$\frac{|b|}{|a|} < 2^{-(p-1)}$,
the smaller value $b$ cannot be represented within the mantissa precision after alignment. This results in the absorption error, defined as $a + b = a$.
\end{definition}
For \texttt{float32}, the mantissa precision is $p=24$ bits. 
Consequently, the critical threshold is $2^{-(24-1)} = 2^{-23} \approx 1.19 \times 10^{-7}$.

Absorption errors occur in Softmax Cross-Entropy loss calculation.
PyTorch \cite{paszke2019} implements the Softmax CE loss using the Log-Sum-Exp trick for numerical stability. 
The denominator is stored as:
\begin{equation}
Z = \log\left(\sum_k \exp(z_k)\right) = z_m + \log\left(\sum_k \exp(z_k - z_m)\right)
\end{equation}
where $z_k$ represents the output logit for class $k$, and $z_m = \max z_k$.
If the margin between the maximum logit and others satisfies $z_m - \max_{k \neq m} z_k > (p-1)\ln 2$, 
the second term vanishes due to absorption error, resulting in $Z = z_m$.
In the late stages of training, the maximum logit $z_m$ typically corresponds to the correct class logit $z_r$. 
\begin{definition}[\textbf{Softmax Collapse}~\cite{prieto2025}]
Under absorption error (where $Z = z_r$), the gradient for the logit of correct class $r$ becomes strictly zero:
\begin{equation}
g_r = \hat{y}_r - y_r = e^{z_r - Z} - 1 = e^{z_r - z_r} - 1 = 0
\end{equation}
At this point, the loss also becomes $-\log(\hat{y}_r) = 0$. 
However, for incorrect classes $k \neq r$, the gradients remain small but non-zero ($g_k = e^{z_k - z_r} \neq 0$). 
\end{definition}

Since Softmax Collapse (SC) arises in the terminal phase of training, 
its impact is inextricably linked to the geometric structure of the feature space in this regime.
We assume the model has converged to the Neural Collapse (NC) state,
which characterizes the geometry of
the classifier weight matrix $\bm{W}$ and its input activations (features) $\bm{h}$.

\begin{definition}[\textbf{Neural Collapse}~\cite{papyan2020}]
Consider a classification task with $K$ classes. The Neural Collapse state is defined by the following conditions: 
\textbf{NC1:} Variability Collapse. Intra-class feature variability vanishes. For any sample $i$ of class $k$, the feature vector $\bm{h}_{k,i}$ converges to the class mean $\bm{\mu}_k$;
\textbf{NC2:} Simplex ETF. The centered class means $\bm{\mu}_k^* = \bm{\mu}_k - \bm{\mu}_G$ (where $\bm{\mu}_G$ is the global mean) form a Simplex Equiangular Tight Frame (ETF);
\textbf{NC3:} Self-Duality. The classifier weight row vectors $\bm{W}_k$ align with the centered class means $\bm{\mu}_k^*$.
\end{definition}

Prieto et al.~\cite{prieto2025} identified SC, 
and hypothesized that it merely halts generalization, causing test accuracy to plateau. 
However, our investigation reveals a critical secondary effect: 
the interaction between SC and NC induces \textit{Numerical Feature Inflation}, 
a process that directly triggers the Slingshot Mechanism.

\subsection{Numerical Feature Inflation}
\label{sec:nfc} 

In this section, we formalize the mechanism of Numerical Feature Inflation ($\mathcal{NFI}$). 
We demonstrate that the interaction between SC and NC creates 
a deterministic feedback loop that drives feature inflation.

As established before, floating-point arithmetic introduces absorption errors. 
We first quantify the breaking of the zero-sum constraint.

\begin{theorem}
\label{thm:weight_drift}
Consider a network with CE loss. 
Assume the model is in an approximate NC state and satisfies the SC condition. 
During Gradient Descent with learning rate $\eta$, 
the expected update to global mean of the classifier weights $\bm{W}_G= \frac{1}{K} \sum_{k=1}^K \bm{W}_k$ 
on a class-balanced batch $\mathcal{B}$ is:
\begin{equation}
    \mathbb{E}_{\mathcal{B}}[\Delta \bm{W}_G] = - \frac{\eta \epsilon }{K} \bm{\mu}_G
\end{equation}
where $\epsilon = \mathbb{E}[\sum_{k \neq r} \hat{y}_k]$ represents the expected residual probability mass on incorrect classes.
\end{theorem}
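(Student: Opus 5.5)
Starting from the last-layer gradient, we write the logits as $z_{k,i} = \bm{W}_k^\top \bm{h}_i$ (biases, if present, are handled the same way and do not affect $\bm{W}_G$). Gradient descent on the batch loss $L = \frac{1}{B}\sum_{i\in\mathcal{B}} \ell_i$ gives
\begin{equation*}
\Delta \bm{W}_k = -\frac{\eta}{B}\sum_{i\in\mathcal{B}} g_{k,i}\, \bm{h}_i, \qquad g_{k,i} = \hat{y}_{k,i} - y_{k,i}.
\end{equation*}
Averaging over $k$ yields
\begin{equation*}
\Delta \bm{W}_G = -\frac{\eta}{KB}\sum_{i} \Bigl(\sum_{k=1}^K g_{k,i}\Bigr)\bm{h}_i .
\end{equation*}
In exact arithmetic $\sum_k g_{k,i} = \sum_k \hat{y}_{k,i} - 1 = 0$, so $\bm{W}_G$ is frozen. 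The whole argument therefore reduces to computing the per-sample gradient sum $s_i := \sum_k g_{k,i}$ under Softmax Collapse.

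The key step uses the definition of SC. For a collapsed sample with correct class $r$, the computed gradient satisfies $g_{r,i}=0$, while for $k\neq r$ we have $g_{k,i} = \hat{y}_{k,i} = e^{z_{k,i}-z_{r,i}}$, which is nonzero. Hence $s_i = \sum_{k\neq r}\hat{y}_{k,i} =: \epsilon_i > 0$. This is exactly the residual incorrect-class mass that would have cancelled $g_r$ in exact arithmetic. Substituting gives
\begin{equation*}
\Delta \bm{W}_G = -\frac{\eta}{KB}\sum_i \epsilon_i \bm{h}_i .
\end{equation*}

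The remaining step is to take the expectation over class-balanced batches and replace $\epsilon_i\bm{h}_i$ by $\epsilon\,\bm{\mu}_G$. Under NC1, $\bm{h}_i \approx \bm{\mu}_{y_i}$. Under the symmetry of the NC2/NC3 geometry (simplex ETF with self-dual classifier), the margins $z_r - z_k$ have the same distribution for every class. So $\mathbb{E}[\epsilon_i \mid y_i = k]$ is independent of $k$ and equals $\epsilon$. Then
\begin{equation*}
\mathbb{E}\Bigl[\tfrac{1}{B}\sum_i \epsilon_i\bm{h}_i\Bigr] = \frac{1}{K}\sum_k \mathbb{E}[\epsilon_i\mid k]\,\bm{\mu}_k = \epsilon\,\bm{\mu}_G,
\end{equation*}
using class balance ($B/K$ samples per class) and $\bm{\mu}_G = \frac{1}{K}\sum_k \bm{\mu}_k$. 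This gives the claimed $-\frac{\eta\epsilon}{K}\bm{\mu}_G$.

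The main obstacle is this decoupling step. In general $\epsilon_i$ and $\bm{h}_i$ are correlated, so $\mathbb{E}[\epsilon_i\bm{h}_i] \neq \mathbb{E}[\epsilon_i]\,\mathbb{E}[\bm{h}_i]$. The argument must justify either that the within-class fluctuation of $\bm{h}_i$ is negligible (NC1), or that the within-class covariance between $\epsilon_i$ and $\bm{h}_i - \bm{\mu}_k$ is a higher-order error term. It must also justify that the approximate NC symmetry makes the per-class residual masses equal. I would state these as the precise meaning of "approximate NC" in the theorem. Any deviation then appears as an additive error proportional to the between-class spread of $\mathbb{E}[\epsilon\mid k]$ and to the intra-class covariance. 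A smaller point is that in practice not all samples collapse. Non-collapsed samples contribute $s_i = 0$ up to rounding, so one can simply set $\epsilon_i=0$ for them and the same formula holds with $\epsilon$ averaged over all samples.
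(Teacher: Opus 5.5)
Your proposal is correct and follows essentially the same route as the paper: the per-sample class sum of logit gradients vanishes in exact arithmetic but becomes $\epsilon_i\bm{h}_i$ under SC, and averaging over a class-balanced batch with NC1 and a common residual $\epsilon$ gives $-\frac{\eta\epsilon}{K}\bm{\mu}_G$. Your class-conditional treatment of $\epsilon_i$ is slightly more careful than the paper's blanket assumption $\epsilon_i\approx\epsilon$. Note, however, that ETF plus self-duality alone do not make the per-class margins equal: this also needs $\bm{\mu}_G\perp\mathrm{span}\{\bm{\mu}_k^*\}$, since otherwise $(\bm{W}_r^*-\bm{W}_k^*)^\top\bm{\mu}_G$ varies with $r$, and the paper's orthogonality lemma for ReLU networks supplies exactly this.
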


\begin{proof}[Proof Sketch]
Consider an input $\bm{x}$ with feature $\bm{h}$ and label $y_r$. Ideally, the gradient of the loss $\mathcal{L}$ with respect to $\bm{W}_k$ satisfies a strict zero-sum constraint:
\begin{equation}
    \sum_{k=1}^K \nabla_{\bm{W}_k} \mathcal{L} = \sum_{k=1}^K (\hat{y}_k - y_k) \bm{h} = \left(\underbrace{\sum_k \hat{y}_k}_{1} - \underbrace{\sum_k y_k}_{1}\right) \bm{h} = 0 \label{eq:zero_sum}
\end{equation}  
However, under SC, the gradient on correct class vanishes mathematically. The sum becomes:
    $\sum_{k=1}^K \nabla_{\bm{W}_k} \mathcal{L} \xrightarrow{SC} \sum_{k \neq r} \hat{y}_k \bm{h} = \epsilon \bm{h}$.
The update rule $\bm{W} \leftarrow \bm{W} - \eta \nabla \bm{W}$ thus imparts a net drift to $\bm{W}_G$ in the direction of $-\bm{\mu}_G$.
\end{proof}

This drift necessitates a redefinition of the geometric alignment.

\begin{definition}[NC3$'$]
When $\bm{W}_G \neq \bm{0}$, the original weights $\bm{W}_k$ no longer form an ETF. Instead, the centered weights $\bm{W}_k^* = \bm{W}_k - \bm{W}_G$ satisfy the self-duality property, aligning with $\bm{\mu}_k^*$ and forming an ETF.
\end{definition}

This weight drift alters the gradient of feature layer.
\begin{proposition}
\label{thm:nfc_force}
Assume the model satisfies NC1, NC2, and NC3$'$, and that $\bm{W}_G$ is orthogonal to 
the classification subspace ($\bm{W}_G \perp \text{span}\{\bm{W}_k^*\}$). 
Under the SC regime, the gradient of the loss with respect to the feature vector $\bm{h}$ contains a non-zero component parallel to $\bm{W}_G$:
\begin{equation}
    \text{Proj}_{\bm{W}_G}(\nabla_{\bm{h}} \mathcal{L}) = \epsilon \bm{W}_G
\end{equation}
\end{proposition}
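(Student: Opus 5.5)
The plan is to write the feature gradient explicitly, decompose each classifier row into its centered part plus the global mean, and then project. For a sample with feature $\bm{h}$ and label $r$, the logits are $z_k = \bm{W}_k^\top \bm{h}$ (bias absorbed or omitted). This gives
\begin{equation*}
\nabla_{\bm{h}}\mathcal{L} = \sum_{k=1}^K g_k \bm{W}_k, \qquad g_k = \hat{y}_k - y_k .
\end{equation*}
Substituting $\bm{W}_k = \bm{W}_k^* + \bm{W}_G$ from NC3$'$, the gradient splits into
\begin{equation*}
\nabla_{\bm{h}}\mathcal{L} = \sum_{k=1}^K g_k \bm{W}_k^* + \Big(\sum_{k=1}^K g_k\Big)\bm{W}_G .
\end{equation*}
In exact arithmetic the coefficient $\sum_k g_k$ is $0$ by the same zero-sum identity used in the proof of \cref{thm:weight_drift}. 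In that case the feature gradient lives entirely in $\text{span}\{\bm{W}_k^*\}$ and carries no $\bm{W}_G$ component.

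Next I would invoke the SC condition, exactly as in \cref{thm:weight_drift}. The computed $g_r$ is rounded to $0$, while for $k\neq r$ we have $g_k = \hat{y}_k = e^{z_k - z_r} > 0$. Hence $\sum_k g_k = \sum_{k\neq r}\hat{y}_k$, which is the residual mass $\epsilon$ (per sample; taking expectation over a class-balanced batch matches the definition in \cref{thm:weight_drift}). So the second term becomes $\epsilon\,\bm{W}_G$.

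Finally I would take the projection onto the line spanned by $\bm{W}_G$. By the orthogonality hypothesis $\bm{W}_G \perp \text{span}\{\bm{W}_k^*\}$, every $\bm{W}_k^*$ has zero projection. Therefore $\text{Proj}_{\bm{W}_G}\big(\sum_k g_k \bm{W}_k^*\big) = 0$, while $\text{Proj}_{\bm{W}_G}(\epsilon\bm{W}_G) = \epsilon\bm{W}_G$, which gives the claim. NC1 and NC2 serve only to justify treating $\bm{h}$ as a class mean, and to justify that the centered rows form an ETF spanning the classification subspace. This also ensures the first term is well defined and lies in that subspace.

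I do not expect a hard step. The only delicate point is interpretive: being consistent about whether $\epsilon$ is per-sample or expected, and noting that under SC the $g_k$ for $k\neq r$ are the computed softmax values, which are nonzero. If one wants the statement per sample, I would state it with $\epsilon(\bm{x}) = \sum_{k\neq r}\hat{y}_k$ and note that it reduces to the expected version on averaging.
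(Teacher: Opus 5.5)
Your proposal is correct and matches the paper's proof essentially step for step: expand $\nabla_{\bm{h}}\mathcal{L} = \sum_k (\hat{y}_k - y_k)\bm{W}_k$, substitute $\bm{W}_k = \bm{W}_G + \bm{W}_k^*$, and use SC to zero the correct-class coefficient so that the $\bm{W}_G$ coefficient becomes $\epsilon$. Then use $\bm{W}_G \perp \text{span}\{\bm{W}_k^*\}$ to drop the centered terms from the inner product with $\bm{W}_G$. Your remark that $\epsilon$ here is the per-sample residual mass, with the batch-expected version of \cref{thm:weight_drift} recovered by averaging, is a useful clarification that the paper leaves implicit.
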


\begin{proof}[Proof Sketch]
The gradient is $\nabla_{\bm{h}} \mathcal{L} = \sum_k (\hat{y}_k - y_k)\bm{W}_k=\sum_k (\hat{y}_k - y_k)(\bm{W}_G + \bm{W}_k^*)$. 
Since $\bm{W}_G \perp \text{span}\{\bm{W}_k^*\}$, the projection onto $\bm{W}_G$ is proportional to $\bm{W}_G$.
\end{proof}

Combining Theorem \ref{thm:weight_drift} and Proposition \ref{thm:nfc_force} reveals the feedback mechanism.
The mutual reinforcement between the weight drift $\bm{W}_G$ and the feature drift $\bm{\mu}_G$ creates a coupled dynamic system:

\begin{theorem}[\textbf{Numerical Feature Inflation}]
\label{thm:nfc}
In a ReLU network with CE loss, the occurrence of Softmax Collapse induces a positive feedback loop. 
If the change of $\epsilon$ is negligible compared to the parameter dynamics (i.e., $|\dot{\epsilon}| \ll \frac{d}{dt}\|\bm{W}_G\|$),
The norms of the global weight mean $\bm{W}_G$ and feature mean $\bm{\mu}_G$ exhibit exponential growth after long-term training:
\begin{align}
    \lim_{t \to \infty} \|\bm{W}_G^{(t)}\| &\propto \left(1 + \frac{\eta\epsilon}{\sqrt{K}}\right)^t \\
    \lim_{t \to \infty}\|\bm{\mu}_G^{(t)}\| &\propto \left(1 + \frac{\eta\epsilon}{\sqrt{K}}\right)^t \label{eq:mu_G}
\end{align}
where $\bm{W}_G$ progressively aligns anti-parallel to $\bm{\mu}_G$:
\begin{equation}
    \lim_{t \to \infty} \cos(\bm{W}_G^{(t)}, \bm{\mu}_G^{(t)}) \to -1
\end{equation}
\end{theorem}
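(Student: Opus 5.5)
We reduce the dynamics to a two-dimensional linear recurrence for the pair $(\bm{W}_G, \bm{\mu}_G)$ and read off its dominant eigenvalue and eigenvector.

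\textbf{Step 1 (weight update).} Theorem~\ref{thm:weight_drift} gives
\[
\bm{W}_G^{(t+1)} = \bm{W}_G^{(t)} - \frac{\eta\epsilon}{K}\,\bm{\mu}_G^{(t)}.
\]

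\textbf{Step 2 (feature update).} Proposition~\ref{thm:nfc_force} says every sample's feature gradient has component $\epsilon\bm{W}_G$ along $\bm{W}_G$. We treat the penultimate features as effectively free parameters, as in the unconstrained-features model, and argue that ReLU preserves this. In the NC regime the class means are nonnegative and mutually orthogonal~\cite{dang2024}, so $\bm{\mu}_G$ lies in the positive orthant and the drift is not clipped on the active coordinates. Averaging over a balanced batch then gives
\[
\bm{\mu}_G^{(t+1)} = \bm{\mu}_G^{(t)} - \eta' \epsilon\, \bm{W}_G^{(t)},
\]
where $\eta'$ is the effective step induced on the features. The components orthogonal to $\bm{W}_G$ are governed by the ETF structure, which is stationary under NC. We therefore restrict attention to $\mathrm{span}\{\bm{W}_G,\bm{\mu}_G\}$.

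\textbf{Step 3 (linear system).} Write $u_t = (\bm{W}_G^{(t)}, \bm{\mu}_G^{(t)})$. Freezing $\epsilon$, which is licensed by the assumption $|\dot\epsilon|\ll \frac{d}{dt}\|\bm{W}_G\|$, we get $u_{t+1} = (I + \eta\epsilon A)\,u_t$ with
\[
A = -\begin{pmatrix} 0 & 1/K \\ c & 0\end{pmatrix},
\]
where $c=\eta'/\eta$. With the normalization in which features and weights share a step and the factor $1/K$ enters symmetrically, the eigenvalues of $A$ are $\pm 1/\sqrt{K}$. The corresponding eigenvalues of the iteration matrix are $1\pm \eta\epsilon/\sqrt{K}$.

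\textbf{Step 4 (growth and alignment).} Decompose the initial condition $u_0$ in the eigenbasis. SC creates a nonzero $\bm{W}_G$ from a nonzero $\bm{\mu}_G$, and ReLU forces $\bm{\mu}_G\neq 0$, so the component on the dominant eigenvector is nonzero generically. That component grows like $(1+\eta\epsilon/\sqrt{K})^t$, while the other grows like $(1-\eta\epsilon/\sqrt{K})^t$ and is exponentially suppressed relative to it. This proves both norm limits.

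The dominant eigenvector of $A$ has the form $(v,-\sqrt{K}\,v)$, so the two blocks point in opposite directions. Since $\bm{W}_G$ and $\bm{\mu}_G$ are vectors in $\mathbb{R}^d$, apply the same argument coordinatewise: for each coordinate of the plane, the pair (weight coordinate, feature coordinate) obeys the same recurrence. Hence $\bm{W}_G^{(t)} \approx -\alpha\,\bm{\mu}_G^{(t)}$ for some $\alpha>0$, which gives $\cos(\bm{W}_G^{(t)},\bm{\mu}_G^{(t)})\to -1$.

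\textbf{Main obstacle.} The hardest part is Step 2. It requires justifying that the feature-space drift is a linear, uniform-in-samples force $-\eta\epsilon\bm{W}_G$ with the right scaling. It also requires that ReLU does not truncate it once the dynamics push $\bm{\mu}_G$ along $-\bm{W}_G$, and that the orthogonality hypothesis $\bm{W}_G\perp\mathrm{span}\{\bm{W}_k^*\}$ is self-consistent along the trajectory.

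I would first prove this self-consistency. The $\bm{W}_G$-update of Theorem~\ref{thm:weight_drift} is proportional to $\bm{\mu}_G$, and the $\bm{\mu}_G$-update is proportional to $\bm{W}_G$, so the plane $\mathrm{span}\{\bm{W}_G,\bm{\mu}_G\}$ is invariant. Under NC2 the centered ETF is orthogonal to $\bm{\mu}_G$, which keeps the decomposition valid. The exact constant $1/\sqrt{K}$ depends on the chosen normalization of the feature step, and I would state that normalization explicitly.
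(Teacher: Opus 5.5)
Your proposal is correct and follows essentially the paper's proof: the paper writes the same coupled recurrence with $\alpha=\eta\epsilon/K$ and $\beta=\eta\epsilon$ (your normalization $c=1$), finds eigenvalues $1\pm\eta\epsilon/\sqrt{K}$ with dominant eigenvector $\bm{W}_G=-\bm{\mu}_G/\sqrt{K}$, and reads off the exponential growth and $\cos(\bm{W}_G,\bm{\mu}_G)\to-1$; your coordinatewise (Kronecker) treatment of the block system and your check that the dominant coefficient is generically nonzero are, if anything, tidier than the paper's presentation. One small correction: $\bm{\mu}_G\perp\mathrm{span}\{\bm{\mu}_k^*\}$ does not follow from NC2 alone, and the paper instead derives it (Lemma~\ref{lemma:orthogonality}) from the ReLU result of Dang et al.~\cite{dang2024} that the uncentered class means are mutually orthogonal with equal norm $R$, so that $\langle\bm{\mu}_G,\bm{\mu}_k\rangle=R^2/K=\|\bm{\mu}_G\|^2$ --- a fact you already invoke in Step~2.
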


See Appendix \ref{appendix:proofs} for the detailed proof.

\subsection{The Mechanism of the Slingshot Mechanism}
\label{sec:mec}
The exponential growth of the global classifier mean and the global feature mean 
does not by itself immediately produce a loss spike. 
Instead, it drives the system into a regime where the sample-level absorption condition becomes fragile.

In an ideal NC state, all features $\bm{h}_{k,i}$ of class $k$ are close to their class mean $\bm{\mu}_k$. 
In this case, within the subspace $\text{span}\{\bm{\mu}_k^*\}$ orthogonal to $\bm{\mu}_G$, 
the remaining gradients from incorrect classes still push the features of different classes away from each other. 
Therefore, a Slingshot spike does not occur immediately. 
However, this approximation no longer holds once the exponential growth of the global feature mean becomes dominant. 
As shown in Proposition \ref{thm:nfc_force}, the growth along the $\bm{\mu}_G / \bm{W}_G$ direction is proportional to 
the residual probability mass on incorrect classes, denoted by $\epsilon$. 
Thus, samples closer to the decision boundary, which have larger residual probability mass, 
move faster along the $\bm{\mu}_G$ direction. After this exponential growth continues for some time, 
the intra-class variance can become comparable to the inter-class variance. 
At this stage, the gradient updates may still increase the distance between two class means $\bm{\mu}_p$ and $\bm{\mu}_q$, 
but they can also reduce the margin between some outlier samples $\bm{h}_{p,o}$ and the decision boundary. 
Once this margin falls below the absorption threshold $z_m - \max_{k \neq m} z_k > (p - 1) \ln 2$, 
the correct-class gradient signal reappears.

Before this re-emergence, the correct-class gradient is zero and the incorrect-class gradients are extremely small. 
As a result, the effective learning rate of Adam is amplified toward its theoretical maximum $\eta / \varepsilon_\text{Adam}$. 
When the correct-class gradient of some samples suddenly changes from zero to a finite value, 
Adam cannot immediately reduce the effective learning rate because of its moment estimates. 
This produces an excessively large update step. In our experiments, 
the update magnitude at the spike epoch is typically about 50 times larger than that in the previous epoch. 
Such a large update substantially changes the model parameters and drives the loss back to the random-guessing level. 
This produces the loss spikes known as the \emph{Slingshot Mechanism}.

\begin{figure}[t]
     \centering
     
     \begin{subfigure}[b]{0.325\textwidth}
         \centering
        \includegraphics[width=\textwidth]{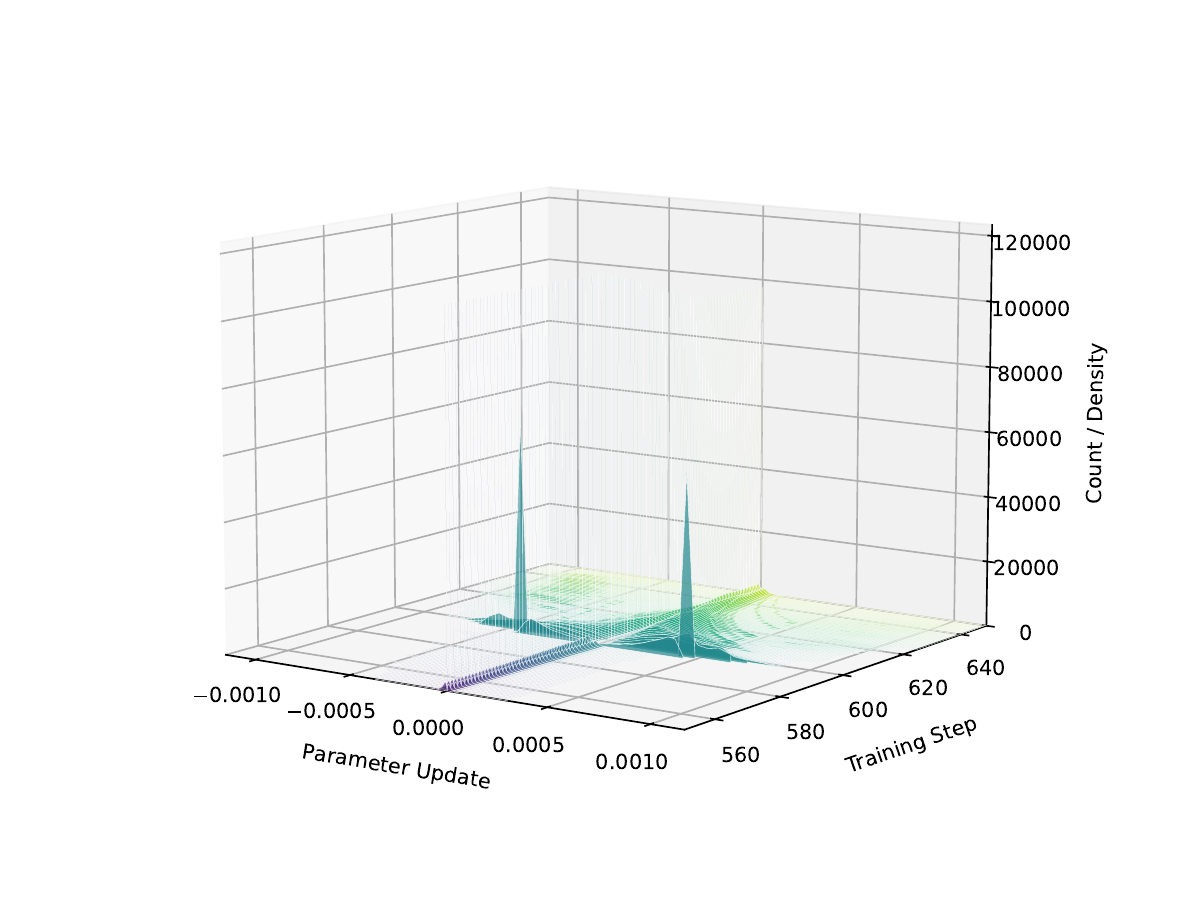}
        \caption{}
        \label{fig:weight_update}
    \end{subfigure}
    \hfill
    \begin{subfigure}[b]{0.325\textwidth}
        \centering
         \includegraphics[width=\textwidth]{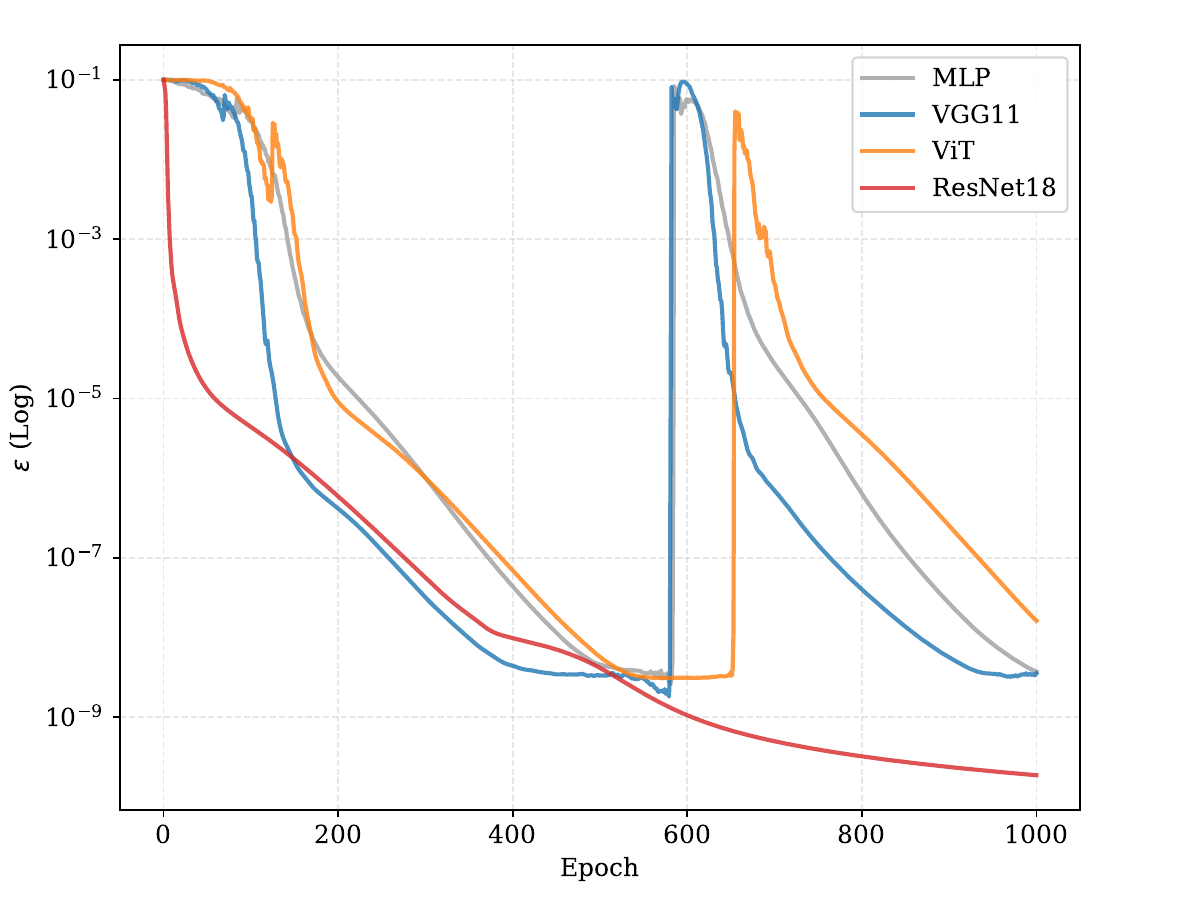}
         \caption{}
         \label{fig:eps}
     \end{subfigure}
     \hfill
     \begin{subfigure}[b]{0.325\textwidth}
         \centering
         \includegraphics[width=\textwidth]{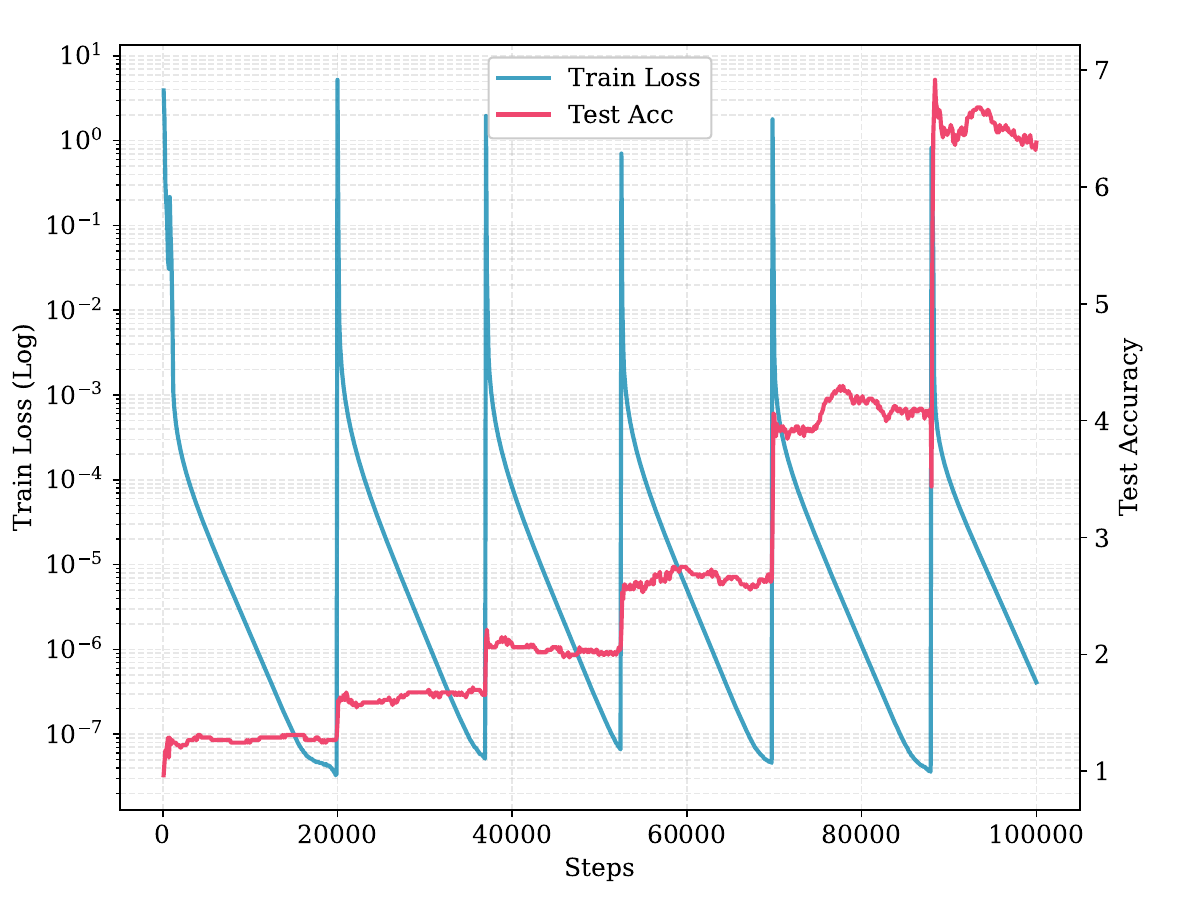}
         \caption{}
         \label{fig:trm_loss}
     \end{subfigure}

     \caption{
\textbf{Mechanistic evidence for Slingshot spikes.}
\textbf{(a)} Distribution of classifier-layer parameter updates around a loss spike. Before the spike, updates concentrate near zero. At the spike step, the distribution forms two sharp modes near $-4\times 10^{-4}$ and $4\times 10^{-4}$. After the spike, update magnitudes become dispersed across parameters.
\textbf{(b)} Evolution of the residual probability mass $\epsilon$ across architectures. Before a spike, $\epsilon$ decreases slowly or stagnates in MLP, VGG11, and ViT, allowing the $\mathcal{NFI}$ feedback loop to grow. In contrast, $\epsilon$ in ResNet18 keeps decreasing rapidly, preventing Slingshot.
\textbf{(c)} In modular division, each Slingshot loss spike is accompanied by a stepwise increase in test accuracy.
}
     \label{fig:three_graphs}
\end{figure}

\section{Empirical Results and Analysis}
\label{sec:experiment}
\paragraph{Experiment Setup.}
We adopt the experimental framework of Thilak et al.~\cite{thilak2022}, 
covering all their configurations except for linear models, 
while extending our evaluation to additional architectures. 
Our experiments focus on 3 datasets:
(1) \textbf{Modular Arithmetic:} For the modular division task with prime $p=97$, 
we employ a 2-layer decoder-only Transformer and a 6-layer fully connected MLP.
(2) \textbf{Image Classification:} We utilize the CIFAR-10 dataset. To ensure a comprehensive evaluation, 
We train a 6-layer MLP, CNNs (VGG11, ResNet18), 
and a small 12-layer Vision Transformer (ViT).
(3) \textbf{Language Modeling:} We train a nanoGPT model with 110M parameters on the FineWeb dataset, following the setup of Wortsman et al.~\cite{wortsman2024}. 
Across all experiments, we use the CE loss with zero weight decay, and the Adam optimizer. 
Detailed hyperparameters are provided in Appendix \ref{app:exp_details}.

\subsection{Mechanistic Validation of $\mathcal{NFI}$}

\paragraph{Gradient Re-emergence and Loss Spikes.}
Following the mechanism described in Section~\ref{sec:mec}, we estimate how a Slingshot loss spike is triggered. 
Consider a scalar toy model with global learning rate $\eta = 10^{-3}$, $\beta_1, \beta_2 = 0.9, 0.95$. 
Before the spike, the average gradient signal measured over all samples is approximately $3 \times 10^{-9}$. 
When the spike is triggered, the re-emerged gradient signal is larger than $\exp(-(p-1)\ln 2) = 1.19 \times 10^{-7}$. 
For the first moment, we obtain $m_t = 0.9 \times 3 \times 10^{-9} + 0.1 \times 1.19 \times 10^{-7} = 1.46 \times 10^{-8}$.
For the second moment, we obtain$\sqrt{v}=\sqrt{9\times 10^{-18}\times 0.95+1.19^2 \times 10^{-14}\times 0.05}=2.7\times 10^{-8}$. 
Therefore, the Adam update magnitude at the spike step is approximately $\eta\frac{m_t}{\sqrt{v_t}+\varepsilon}=4\times 10^{-4}$.

This estimate closely matches our empirical observation. 
See \cref{fig:weight_update}, before the spike, the average update magnitude in the classifier layer is around $1 \times 10^{-5}$. 
At the spike epoch, however, the update distribution develops two sharp modes around $4 \times 10^{-4}$ and $-4 \times 10^{-4}$. 
This more than $40$-fold increase in update magnitude is equivalent to 
applying a large signed displacement to many parameters. 
As a result, the predictions become almost random, and the loss increases to the $10^0$ scale. 
This produces the observed Slingshot loss spike. 
We also test the commonly used Adam hyperparameters in computer vision, $\beta_1, \beta_2 = 0.9, 0.999$, 
and the suggested setting by Orvieto and Gower~\cite{orvieto2026}, $\beta_1, \beta_2 = 0.9, 0.9$. 
In both cases, the observed update spikes occur at magnitudes consistent with the same calculation.
\paragraph{Zero-Sum Projection.}
To verify Theorem~\ref{thm:weight_drift}, we constrain the update of $W$ to remain in the subspace satisfying 
the zero-sum constraint in Eq.~\ref{eq:zero_sum}. Specifically, we apply a projection to the logit gradient $g=\nabla_z L$, 
We replace it by $g \gets g - \frac{1}{K}\sum^K_{k} g_k$. 
This projection makes the gradient update of every sample satisfies the zero-sum constraint in Eq.~\ref{eq:zero_sum}. 
After enforcing this constraint throughout training, the Slingshot spike is eliminated. 
This result supports the prediction that the zero-sum-breaking component is necessary for the $\mathcal{NFI}$ feedback loop.

\paragraph{Architectural Dependence.}
We further test whether Slingshot occurs across different datasets and architectures, as summarized in \cref{tab:slingshot_summary}. 
We find that all tested models exhibit Slingshot spikes except ResNet18. 
A closer analysis shows that this exception is consistent with Theorem~\ref{thm:nfc}. 
The $\mathcal{NFI}$ mechanism requires the change of $\epsilon$ to be negligible compared with the dynamics of $W$ and $\mu$. 
This condition holds for the other architectures. 
As shown in \cref{fig:eps}, before the loss spike occurs, $\epsilon$ decreases slowly or stays nearly constant. 
Therefore, the factor $(1+\frac{\eta\epsilon}{\sqrt{K}})^t$ can support exponential growth.
In contrast, if $\epsilon$ decays too fast, for example $\epsilon(t) \propto \frac{1}{t}$ or 
$\epsilon(t) \propto \frac{\log t}{t}$, 
then the accumulated growth of $(1+\frac{\eta\epsilon(t)}{\sqrt{K}})^t$ is polynomial or logarithmic. 
Therefore, $\mathcal{NFI}$ cannot develop into the exponential-growth regime, and Slingshot does not occur. 
ResNet18 appears to learn fast enough to escape this unstable region before the $\mathcal{NFI}$ feedback loop becomes dominant.

\paragraph{Connection to grokking.}
An interesting consequence of the Slingshot Mechanism is its connection to grokking. 
As shown in Figure~\ref{fig:trm_loss}, in the modular division task, each loss spike is accompanied by 
a stepwise increase in test accuracy, which eventually reaches $100\%$. 
We interpret these periodic spikes as a form of implicit perturbation: 
they repeatedly move the model away from its current low-loss state and allow training to resume in 
a different region of the loss landscape. 
This may help the model reach flatter and more generalizable solutions.

\begin{figure}[ht]
     \centering
     \begin{subfigure}[b]{0.325\textwidth}
         \centering
         \includegraphics[width=0.96\textwidth]{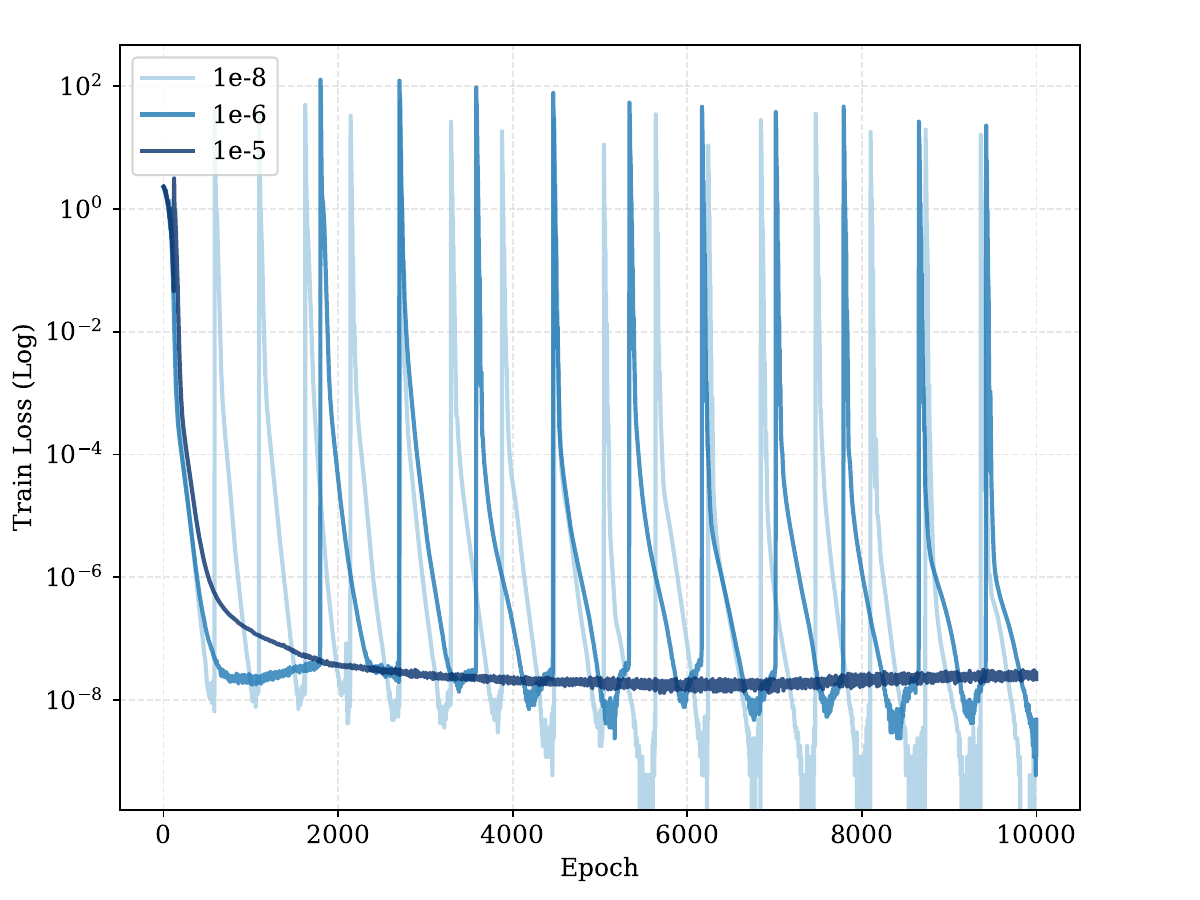}
         \caption{}
         \label{fig:adam}
     \end{subfigure}
     \hfill 
     \begin{subfigure}[b]{0.325\textwidth}
         \centering
        \includegraphics[width=0.96\textwidth]{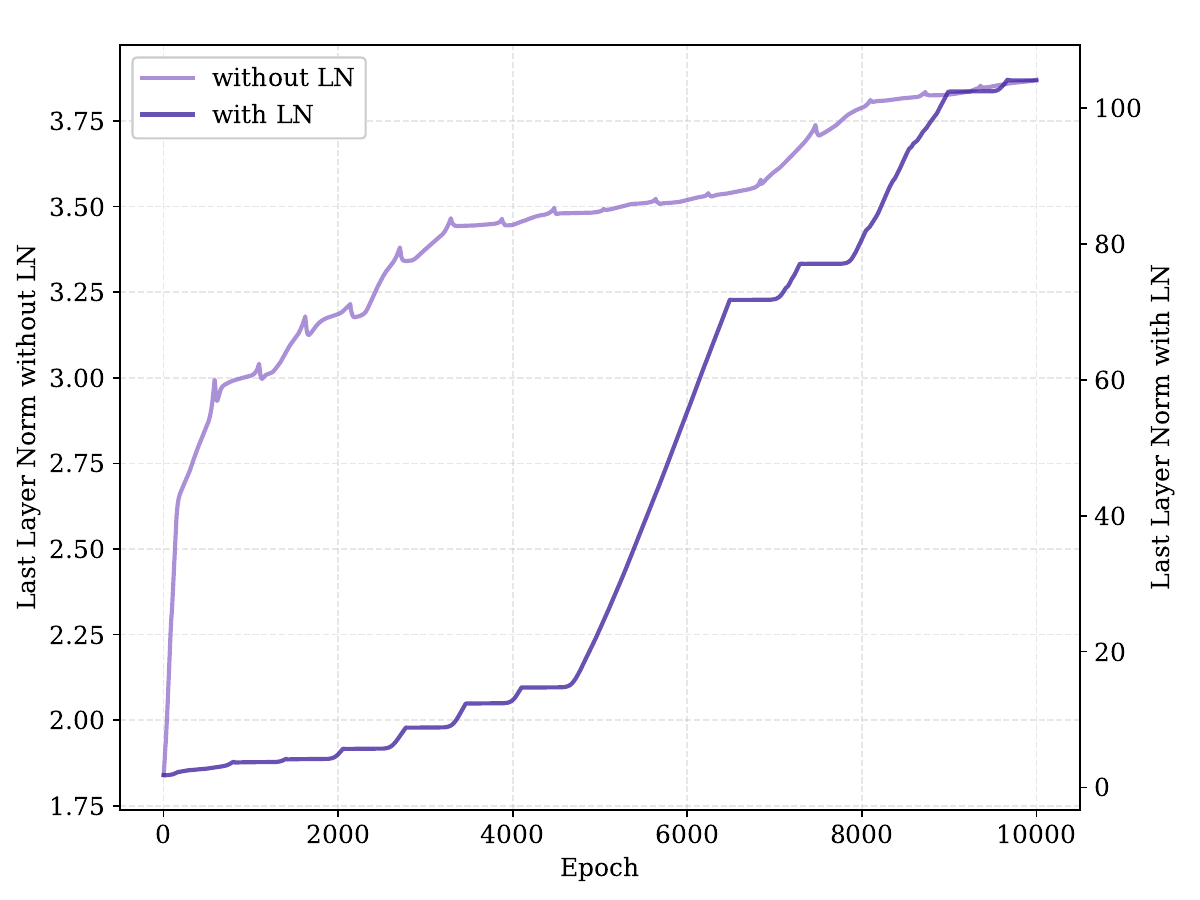}
        \caption{}
        \label{fig:ln}
    \end{subfigure}
    \hfill
    \begin{subfigure}[b]{0.325\textwidth}
        \centering
         \includegraphics[width=0.96\textwidth]{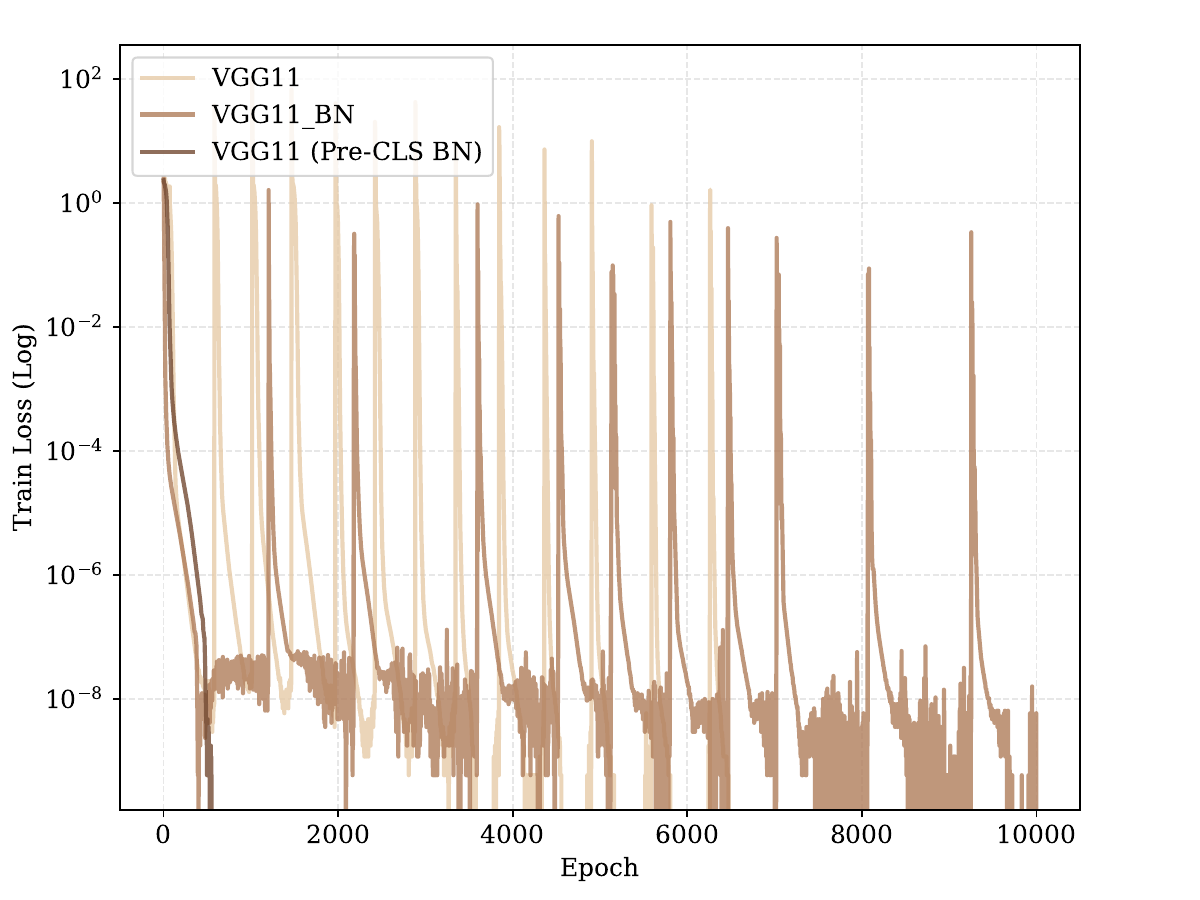}
         \caption{}
         \label{fig:bn}
     \end{subfigure}
        
     \caption{\textbf{Mitigation Study.}
     \textbf{(a) Adam's $\varepsilon$.} 
     Increasing the optimizer's $\varepsilon$ parameter mitigates instability: 
     while $\varepsilon=10^{-6}$ reduces the frequency of spikes, 
     setting $\varepsilon=10^{-5}$ completely eliminates them.
     \textbf{(b) Layer Norm.} 
     Applying LN changes the evolution of the last layer norm from a continuous trajectory 
     to a distinct stepwise pattern. Notably, LN significantly increases the magnitude of the last layer norm.
     \textbf{(c) Batch Norm.} Standard VGG11 with BN within convolutional layers still exhibits Slingshot spikes. 
     However, applying BN before the final classifier eliminates the instability, 
     stabilizing the training at the zero-loss SC state.}
     \label{fig:mitigate}
\end{figure} 

\subsection{Mitigation Study}
\label{sec:mitigation}

Following our identification of the $\mathcal{NFI}$ mechanism, 
we conducted a series of ablation studies to evaluate whether 
specific architectural or algorithmic modifications could mitigate the slingshot effect.
We omit discussion of explicit regularization terms like weight decay or z-loss (constraining the log softmax normalizer $Z$), 
given that their ability to mitigate Slingshot has been demonstrated in existing works~\cite{thilak2022,chowdhery2023,wortsman2024}.

\paragraph{Mixed Precision.}
See \cref{fig:sling}, upgrading the Softmax computation to \texttt{float64} precision is sufficient to eliminate the slingshot. The absorption error threshold for double precision is $\frac{|b|}{|a|} < 2^{-52} \approx 2.22 \times 10^{-16}$. Achieving a training loss magnitude low enough to trigger this threshold is generally infeasible within standard training durations.

\paragraph{Optimizer.} While the Slingshot mechanism is typically associated with adaptive optimizers, 
we demonstrate that the onset of the instability is independent of adaptivity and is driven solely by the magnitude of the step size. 
To verify this, we switched the optimizer from Adam to 
vanilla GD immediately after the training loss reached strict zero. 
We manually set the learning rate to $\eta = 10^5$ to emulate the massive effective step size 
($\approx \eta_{\text{Adam}}/\varepsilon_{\text{Adam}}$). 
Crucially, we observed that GD triggers a Slingshot spike at approximately the same epoch as the baseline Adam run. 
We note, however, a key distinction in the aftermath: 
while Adam is able to adapt its moments to recover from the spike (forming the periodic pattern), 
the high-learning-rate GD fails to reconverge, leading to permanent divergence.

\paragraph{Increasing $\varepsilon_{\text{Adam}}$.} 
Thilak et al.~\cite{thilak2022} reported that increasing the Adam optimizer's $\varepsilon$ parameter to $10^{-5}$ prevents slingshots. 
Our analysis confirms that this intervention works by lowering the upper bound of the effective learning rate ($\eta/\varepsilon_{\text{Adam}}$), 
thereby delaying the amplification of the re-emerging gradient signals.

\paragraph{Layer Normalization (LN).} 
Applying LN to features immediately before the classifier fails to mitigate the effect. 
Since LN normalizes samples individually, it does not prevent the collective alignment of features towards a common direction (i.e., $\mathcal{NFI}$ persists).
In fact, see \cref{fig:ln}, our experiments indicate that LN \textit{accelerates} the onset of slingshots. 
LN constrains the feature norm, 
and smaller feature norms mean that even minor reductions in angular separation can violate the absorption threshold.
Furthermore, LN decouples the dynamics of direction and magnitude. 
In our unnormalized MLP experiments, the last layer norm grows continuously, consistent with Soudry et al. \cite{soudry2018}. 
However, with LN, we observe the stepwise growth noted by Thilak et al.: 
the model alternates between optimizing angular alignment to reduce loss and, when angles stabilize, increasing the scalar scale of the last layer.

\paragraph{Batch Normalization (BN).} 
As shown in \cref{fig:bn}, BN applied directly before the classifier successfully eliminates slingshots. 
By centering the batch features (subtracting the global mean $\bm{\mu}_G$), BN removes the principal drift component.
We note that Thilak et al. observed that BN failed to prevent slingshots. 
This discrepancy arises from their architecture: they applied BN within the convolutional layers of VGG11, 
followed by another ReLU layers which re-introduced rank compression \cite{daneshmand2020}. 
We verify that BN is effective when applied immediately before the final linear layer.

\paragraph{Label Smoothing (LS).}
Label Smoothing sets the target probability to $y < 1$, preventing the infinite growth of correct-class logits. While this successfully eliminates the precision-induced $\mathcal{NFI}$ spikes, our experiments reveal that LS introduces a new class of instabilities independent of numerical precision.
When the target probability is not 1 (and incorrect targets are not 0), 
the theoretical guarantee by Ma et al. \cite{ma2022}—that Hessian eigenvalues vanish late in training—no longer holds. 
Instead, the maximum Hessian eigenvalue remains finite. 
This pushes the optimization into the EOS regime, 
where loss spikes arise from the intrinsic sharpness of the loss landscape (see \cref{fig:ls}). 
A detailed discussion distinguishing Slingshot from EOS is provided in Appendix \ref{app:eos}, 
and why LS leads to finite Hessian eigenvalues is elaborated in Appendix \ref{app:ls}.

\subsection{$\mathcal{NFI}$ in Real-World Tasks}

So far, we have mainly studied the full-batch setting. 
In real-world training, however, mini-batches are used for both efficiency and generalization. 
Mini-batch stochasticity makes it unlikely that all samples fall below the Softmax Collapse threshold at the same time, 
except in highly structured and nearly noiseless tasks such as modular arithmetic. 
Therefore, a visible Slingshot loss spike may not appear. Nevertheless, we find that $\mathcal{NFI}$ can still occur.

\paragraph{Mini batch.}
For VGG11 trained with batch size $256$, 
approximately half of samples still enter Softmax Collapse after $10^6$ steps training, 
with exactly zero loss. Samples that do not collapse keep nonzero correct-class gradients. 
These gradients continue to separate features in the classification subspace, 
keep Adam's moment estimates active, and prevent the effective learning rate from suddenly becoming too large. 
Hence, no Slingshot spike is observed. However, collapsed samples still induce the $\mathcal{NFI}$ drift, 
leading to rapid late-stage growth of $\bm{\mu}_G$ and the logits. 
Consistent with this interpretation, subtracting $\bm{W}_G$ or applying BatchNorm before the classifier, 
which removes $\bm{\mu}_G$ during training, substantially slows down the late-stage parameter growth.

\paragraph{Large Language Models.}
We also examine the language-modeling setting. Softmax Collapse also appears during GPT training: 
in each step with about $1.3 \times 10^5$ tokens, roughly $4000$ tokens have exactly zero loss. 
These tokens correspond to highly predictable contexts. On FineWeb, 
the top-10 tokens with the highest Softmax Collapse frequency are 
``.'', ``org'', ``example'', ``,'', ``to'', ``t'', ``last'', `` '', `` you'', and ``of''. 
Several of them, such as ``example'', ``org'', and ``last'', are not simply the most frequent tokens. 
Instead, they are highly predictable because of dataset-specific templates, 
such as ``example.org'' or form fields containing ``first name'' and ``last name''.

Long language-model training also shows abnormal logit growth. 
To test the effect of numerical precision, we replace the loss computation with \texttt{float64} and 
compare it with standard \texttt{bf16} training (calculate the loss in \texttt{float32}), the result is shown in Figure~\ref{fig:llm_fp}. 
Interestingly, unlike the MLP, CNN, and ViT experiments, 
higher precision does not reduce logit growth in this setting. 
After $10^5$ steps, the mean logit is $183$ under \texttt{float32} training, 
but increases to $498$ when the loss is computed in \texttt{float64}.

We believe this difference comes from the special structure of natural language data. 
In the previous classification tasks, classes are balanced. In language modeling, 
token frequencies follow Zipf's law, 
so the output embedding vectors $\bm{W}_k$ are inherently imbalanced toward frequent tokens. 
This imbalance can create a large global mean $\bm{W}_G$ even without numerical collapse~\cite{gao2019}. 
As a result, the features and output embeddings can reinforce each other in the same direction: 
frequent-token embeddings guide their corresponding features to grow along aligned directions. 
In contrast, $\mathcal{NFI}$ creates an anti-parallel interaction between $\bm{W}_G$ and $\bm{\mu}_G$. 
Therefore, in this setting, low precision can partially suppress 
the faster logit divergence caused by the frequency-induced $\bm{W}_G$.

In both cases, the key factor is the global mean component of the output embedding. 
Removing the last-layer mean $\bm{W}_G$ eliminates this mutual reinforcement and stabilizes the logits. 
We provide further discussion of logit divergence in language models in Appendix~\ref{app:llm}.


\section{Conclusion}
In this work, we have demystified the Slingshot Mechanism, 
revealing it to be artifacts of floating-point arithmetic—
specifically Numerical Feature Inflation ($\mathcal{NFI}$)—rather than intrinsic optimization dynamics. 
Our results reveal a gap between gradient-flow analysis and real optimization. 
Existing refinements, such as central flows~\cite{cohen2025} and neural thermodynamics~\cite{ziyin2025}, still mainly describe the intrinsic dynamics of the model, while numerical dynamics are often ignored. 
We further show that SC and $\mathcal{NFI}$ are not limited to toy models: they also appear in practical large-model training and can substantially affect the optimization trajectory. 
Thus, finite-precision loss computation should be treated as a first-order factor in the analysis of long-term training stability.

\paragraph{Limitations.} Our analysis of $\mathcal{NFI}$ primarily investigates the interaction between the penultimate layer features and the last-layer classifier. 
This perspective implicitly adopts the Unconstrained Feature Model assumption \cite{mixon2022}, 
positing that the backbone network is sufficiently expressive to generate arbitrary features 
and that gradients on features translate directly to feature displacement. 
While this assumption generally holds for deep neural networks, 
it may oversimplify the dynamics in shallow networks, 
where the class means $\bm{\mu}_k$ might be insufficient to characterize feature evolution. 
Indeed, Nanda et al.~\cite{nanda2023} reported difficulties in observing Slingshots in shallow networks.

\newpage

\section*{Impact Statement}

This paper presents work whose goal is to advance the field of Machine
Learning. There are many potential societal consequences of our work, none
which we feel must be specifically highlighted here.

\small{
\bibliographystyle{unsrtnat}
\bibliography{my_paper}

@article{power2022,
  title={Grokking: Generalization beyond overfitting on small algorithmic datasets},
  author={Power, Alethea and Burda, Yuri and Edwards, Harri and Babuschkin, Igor and Misra, Vedant},
  journal={arXiv preprint arXiv:2201.02177},
  year={2022}
}

@inproceedings{tian2025,
  title={Provable Scaling Laws of Feature Emergence from Learning Dynamics of Grokking},
  author={Tian, Yuandong},
  booktitle={The Fourteenth International Conference on Learning Representations},
  year={2026}
}

@inproceedings{liu2023,
  author       = {Ziming Liu and
                  Eric J. Michaud and
                  Max Tegmark},
  title        = {Omnigrok: Grokking Beyond Algorithmic Data},
  booktitle    = {The Eleventh International Conference on Learning Representations},
  year         = {2023},
  bibsource    = {dblp computer science bibliography, https://dblp.org}
}

@inproceedings{ziyin2025,
title={Neural Thermodynamics: Entropic Forces in Deep and Universal Representation Learning},
author={Liu Ziyin and Yizhou Xu and Isaac L. Chuang},
booktitle={The Thirty-ninth Annual Conference on Neural Information Processing Systems},
year={2025}
}

@inproceedings{lyu2024,
title={Dichotomy of Early and Late Phase Implicit Biases Can Provably Induce Grokking},
author={Kaifeng Lyu and Jikai Jin and Zhiyuan Li and Simon Shaolei Du and Jason D. Lee and Wei Hu},
booktitle={The Twelfth International Conference on Learning Representations},
year={2024}
}

@article{thilak2022,
  title={The slingshot mechanism: An empirical study of adaptive optimizers and the grokking phenomenon},
  author={Thilak, Vimal and Littwin, Etai and Zhai, Shuangfei and Saremi, Omid and Paiss, Roni and Susskind, Joshua},
  journal={arXiv preprint arXiv:2206.04817},
  year={2022}
}

@article{gromov2023,
  title={Grokking modular arithmetic},
  author={Gromov, Andrey},
  journal={arXiv preprint arXiv:2301.02679},
  year={2023}
}

@inproceedings{kumar2024,
title={Grokking as the transition from lazy to rich training dynamics},
author={Tanishq Kumar and Blake Bordelon and Samuel J. Gershman and Cengiz Pehlevan},
booktitle={The Twelfth International Conference on Learning Representations},
year={2024}
}

@article{golechha2024,
  title={Progress Measures for Grokking on Real-world Tasks},
  author={Golechha, Satvik},
  journal={arXiv preprint arXiv:2405.12755},
  year={2024}
}

@article{prakash2025,
  title={Grokking and Generalization Collapse: Insights from $\texttt{HTSR}$ theory},
  author={Prakash, Hari K and Martin, Charles H},
  journal={arXiv preprint arXiv:2506.04434},
  year={2025}
}

@inproceedings{nanda2023,
title={Progress measures for grokking via mechanistic interpretability},
author={Neel Nanda and Lawrence Chan and Tom Lieberum and Jess Smith and Jacob Steinhardt},
booktitle={The Eleventh International Conference on Learning Representations },
year={2023}
}

@inproceedings{reddi2018,
  title={On the Convergence of Adam and Beyond},
  author={Reddi, Sashank J and Kale, Satyen and Kumar, Sanjiv},
  booktitle={International Conference on Learning Representations},
  year={2018}
}

@InProceedings{ma2022,
  title = 	 {A Qualitative Study of the Dynamic Behavior for Adaptive Gradient Algorithms},
  author =       {Ma, Chao and Wu, Lei and E, Weinan},
  booktitle = 	 {Proceedings of the 2nd Mathematical and Scientific Machine Learning Conference},
  pages = 	 {671--692},
  year = 	 {2022},
  volume = 	 {145},
  series = 	 {Proceedings of Machine Learning Research},
  publisher =    {PMLR}
}

@inproceedings{cohen2021,
title={Gradient Descent on Neural Networks Typically Occurs at the Edge of Stability},
author={Jeremy Cohen and Simran Kaur and Yuanzhi Li and J Zico Kolter and Ameet Talwalkar},
booktitle={International Conference on Learning Representations},
year={2021}
}

@inproceedings{cohen2023,
title={Adaptive Gradient Methods at the Edge of Stability},
author={Jeremy Cohen and Behrooz Ghorbani and Shankar Krishnan and Naman Agarwal and Sourabh Medapati and Michal Badura and Daniel Suo and Zachary Nado and George E. Dahl and Justin Gilmer},
booktitle={NeurIPS 2023 Workshop Heavy Tails in Machine Learning},
year={2023}
}

@inproceedings{cohen2025,
title={Understanding Optimization in Deep Learning with Central Flows},
author={Jeremy Cohen and Alex Damian and Ameet Talwalkar and J Zico Kolter and Jason D. Lee},
booktitle={The Thirteenth International Conference on Learning Representations},
year={2025}
}

@inproceedings{prieto2025,
title={Grokking at the Edge of Numerical Stability},
author={Lucas Prieto and Melih Barsbey and Pedro A. M. Mediano and Tolga Birdal},
booktitle={The Thirteenth International Conference on Learning Representations},
year={2025}
}

@inproceedings{kingma2015,
  author       = {Diederik P. Kingma and
                  Jimmy Ba},
  title        = {Adam: {A} Method for Stochastic Optimization},
  booktitle    = {The Third International Conference on Learning Representations},
  year         = {2015},
}

@article{bai2025,
  title={Adaptive Preconditioners Trigger Loss Spikes in {Adam}},
  author={Bai, Zhiwei and Zhou, Zhangchen and Zhao, Jiajie and Li, Xiaolong and Li, Zhiyu and Xiong, Feiyu and Yang, Hongkang and Zhang, Yaoyu and Xu, Zhi-Qin John},
  journal={arXiv preprint arXiv:2506.04805},
  year={2025}
}

@inproceedings{lyu2020,
title={Gradient Descent Maximizes the Margin of Homogeneous Neural Networks},
author={Kaifeng Lyu and Jian Li},
booktitle={International Conference on Learning Representations},
year={2020}
}

@article{soudry2018,
  title={The implicit bias of gradient descent on separable data},
  author={Soudry, Daniel and Hoffer, Elad and Nacson, Mor Shpigel and Gunasekar, Suriya and Srebro, Nathan},
  journal={Journal of Machine Learning Research},
  volume={19},
  number={70},
  pages={1--57},
  year={2018}
}

@article{papyan2020,
  title={Prevalence of neural collapse during the terminal phase of deep learning training},
  author={Papyan, Vardan and Han, XY and Donoho, David L},
  journal={Proceedings of the National Academy of Sciences},
  volume={117},
  number={40},
  pages={24652--24663},
  year={2020b},
  publisher={National Academy of Sciences}
}

@inproceedings{boursier2025,
title={A Theoretical Framework for Grokking: Interpolation followed by Riemannian Norm Minimisation},
author={Etienne Boursier and Scott Pesme and Radu-Alexandru Dragomir},
booktitle={The Thirty-ninth Annual Conference on Neural Information Processing Systems},
year={2025},
}

@inproceedings{xu2025,
title={Let Me Grok for You: Accelerating Grokking via Embedding Transfer from a Weaker Model},
author={Zhiwei Xu and Zhiyu Ni and Yixin Wang and Wei Hu},
booktitle={The Thirteenth International Conference on Learning Representations},
year={2025}
}

@inproceedings{garrod2025,
title={The Persistence of Neural Collapse Despite Low-Rank Bias},
author={Connall Garrod and Jonathan P. Keating},
booktitle={The Thirty-ninth Annual Conference on Neural Information Processing Systems},
year={2025}
}

@ARTICLE{IEEE754-2019,
  author={},
  journal={IEEE Std 754-2019 (Revision of IEEE 754-2008)}, 
  title={IEEE Standard for Floating-Point Arithmetic}, 
  year={2019},
  volume={},
  number={},
  pages={1-84}
}

@article{paszke2019,
  title={{PyTorch}: An imperative style, high-performance deep learning library},
  author={Paszke, Adam and Gross, Sam and Massa, Francisco and Lerer, Adam and Bradbury, James and Chanan, Gregory and Killeen, Trevor and Lin, Zeming and Gimelshein, Natalia and Antiga, Luca and others},
  journal={Advances in neural information processing systems},
  volume={32},
  year={2019}
}

@article{lu2022,
title = {Neural collapse under cross-entropy loss},
journal = {Applied and Computational Harmonic Analysis},
volume = {59},
pages = {224-241},
year = {2022},
note = {Special Issue on Harmonic Analysis and Machine Learning},
issn = {1063-5203},
author = {Jianfeng Lu and Stefan Steinerberger}
}

@article{sakamoto2025,
  title={Explaining Grokking and Information Bottleneck through Neural Collapse Emergence},
  author={Sakamoto, Keitaro and Sato, Issei},
  journal={arXiv preprint arXiv:2509.20829},
  year={2025}
}

@inproceedings{han2025,
title={Flatness is Necessary, Neural Collapse is Not: Rethinking Generalization via Grokking},
author={Ting Han and Linara Adilova and Henning Petzka and Jens Kleesiek and Michael Kamp},
booktitle={The Thirty-ninth Annual Conference on Neural Information Processing Systems},
year={2025},
}

@inproceedings{dang2024,
title={Neural Collapse for Cross-entropy Class-Imbalanced Learning with Unconstrained Re{LU} Features Model},
author={Hien Dang and Tho Tran Huu and Tan Minh Nguyen and Nhat Ho},
booktitle={Forty-first International Conference on Machine Learning},
year={2024},
}

@inproceedings{zhang2022,
title={Adam Can Converge Without Any Modification On Update Rules},
author={Yushun Zhang and Congliang Chen and Naichen Shi and Ruoyu Sun and Zhi-Quan Luo},
booktitle={Advances in Neural Information Processing Systems},
year={2022}
}

@article{daneshmand2020,
  title={Batch normalization provably avoids ranks collapse for randomly initialised deep networks},
  author={Daneshmand, Hadi and Kohler, Jonas and Bach, Francis and Hofmann, Thomas and Lucchi, Aurelien},
  journal={Advances in Neural Information Processing Systems},
  volume={33},
  pages={18387--18398},
  year={2020}
}

@inproceedings{xiao2024,
title={Efficient Streaming Language Models with Attention Sinks},
author={Guangxuan Xiao and Yuandong Tian and Beidi Chen and Song Han and Mike Lewis},
booktitle={The Twelfth International Conference on Learning Representations},
year={2024}
}

@article{mixon2022,
  title={Neural collapse with unconstrained features},
  author={Mixon, Dustin G and Parshall, Hans and Pi, Jianzong},
  journal={Sampling Theory, Signal Processing, and Data Analysis},
  volume={20},
  number={2},
  pages={11},
  year={2022},
  publisher={Springer}
}

@article{papyan2020a,
  author  = {Vardan Papyan},
  title   = {Traces of Class/Cross-Class Structure Pervade Deep Learning Spectra},
  journal = {Journal of Machine Learning Research},
  year    = {2020a},
  volume  = {21},
  number  = {252},
  pages   = {1--64}
}

@inproceedings{xie2024,
title={Implicit Bias of AdamW: $\ell_{\infty}$-Norm Constrained Optimization},
author={Shuo Xie and Zhiyuan Li},
booktitle={Forty-first International Conference on Machine Learning},
year={2024}
}

@article{chowdhery2023,
  title={Palm: Scaling language modeling with pathways},
  author={Chowdhery, Aakanksha and Narang, Sharan and Devlin, Jacob and Bosma, Maarten and Mishra, Gaurav and Roberts, Adam and Barham, Paul and Chung, Hyung Won and Sutton, Charles and Gehrmann, Sebastian and others},
  journal={Journal of Machine Learning Research},
  volume={24},
  number={240},
  pages={1--113},
  year={2023}
}

@inproceedings{shazeer2018,
  title={Adafactor: Adaptive learning rates with sublinear memory cost},
  author={Shazeer, Noam and Stern, Mitchell},
  booktitle={International Conference on Machine Learning},
  pages={4596--4604},
  year={2018},
  organization={PMLR}
}

@inproceedings{wortsman2024,
title={Small-scale proxies for large-scale Transformer training instabilities},
author={Mitchell Wortsman and Peter J Liu and Lechao Xiao and Katie E Everett and Alexander A Alemi and Ben Adlam and John D Co-Reyes and Izzeddin Gur and Abhishek Kumar and Roman Novak and Jeffrey Pennington and Jascha Sohl-Dickstein and Kelvin Xu and Jaehoon Lee and Justin Gilmer and Simon Kornblith},
booktitle={The Twelfth International Conference on Learning Representations},
year={2024}
}

@inproceedings{wortsman2023,
title={Stable and low-precision training for large-scale vision-language models},
author={Mitchell Wortsman and Tim Dettmers and Luke Zettlemoyer and Ari S. Morcos and Ali Farhadi and Ludwig Schmidt},
booktitle={Thirty-seventh Conference on Neural Information Processing Systems},
year={2023}
}

@inproceedings{qiu2025,
  title={Why Low-Precision Transformer Training Fails: An Analysis on Flash Attention},
  author={Qiu, Haiquan and Yao, Quanming},
  booktitle={The Fourteenth International Conference on Learning Representations},
  year={2026},
}

@article{molybog2023,
  title={A theory on {Adam} instability in large-scale machine learning},
  author={Molybog, Igor and Albert, Peter and Chen, Moya and DeVito, Zachary and Esiobu, David and Goyal, Naman and Koura, Punit Singh and Narang, Sharan and Poulton, Andrew and Silva, Ruan and others},
  journal={arXiv preprint arXiv:2304.09871},
  year={2023}
}

@article{vaswani2017,
  title={Attention is all you need},
  author={Vaswani, Ashish and Shazeer, Noam and Parmar, Niki and Uszkoreit, Jakob and Jones, Llion and Gomez, Aidan N and Kaiser, {\L}ukasz and Polosukhin, Illia},
  journal={Advances in neural information processing systems},
  volume={30},
  year={2017}
}

@inproceedings{he2016,
  title={Deep residual learning for image recognition},
  author={He, Kaiming and Zhang, Xiangyu and Ren, Shaoqing and Sun, Jian},
  booktitle={Proceedings of the IEEE conference on computer vision and pattern recognition},
  pages={770--778},
  year={2016}
}

@inproceedings{simonyan2015,
  author       = {Karen Simonyan and
                  Andrew Zisserman},
  editor       = {Yoshua Bengio and
                  Yann LeCun},
  title        = {Very Deep Convolutional Networks for Large-Scale Image Recognition},
  booktitle    = {3rd International Conference on Learning Representations},
  year         = {2015}
}

@inproceedings{dosovitskiy2021,
  author       = {Alexey Dosovitskiy and
                  Lucas Beyer and
                  Alexander Kolesnikov and
                  Dirk Weissenborn and
                  Xiaohua Zhai and
                  Thomas Unterthiner and
                  Mostafa Dehghani and
                  Matthias Minderer and
                  Georg Heigold and
                  Sylvain Gelly and
                  Jakob Uszkoreit and
                  Neil Houlsby},
  title        = {An Image is Worth 16x16 Words: Transformers for Image Recognition
                  at Scale},
  booktitle    = {9th International Conference on Learning Representations},
  year         = {2021}
}

@inproceedings{keskar2017,
title={On Large-Batch Training for Deep Learning: Generalization Gap and Sharp Minima},
author={Nitish Shirish Keskar and Dheevatsa Mudigere and Jorge Nocedal and Mikhail Smelyanskiy and Ping Tak Peter Tang},
booktitle={International Conference on Learning Representations},
year={2017}
}

@inproceedings{smith2018,
title={A Bayesian Perspective on Generalization and Stochastic Gradient Descent},
author={Samuel L. Smith and Quoc V. Le},
booktitle={International Conference on Learning Representations},
year={2018}
}

@inproceedings{orvieto2026,
title={In Search of {A}dam{\textquoteright}s Secret Sauce},
author={Antonio Orvieto and Robert M. Gower},
booktitle={The Thirty-ninth Annual Conference on Neural Information Processing Systems},
year={2026}
}

@article{stollenwerk2026,
  title={Output Embedding Centering for Stable LLM Pretraining},
  author={Stollenwerk, Felix and Lokrantz, Anna and Hertzberg, Niclas},
  journal={arXiv preprint arXiv:2601.02031},
  year={2026}
}

@inproceedings{gao2019,
title={Representation Degeneration Problem in Training Natural Language Generation Models},
author={Jun Gao and Di He and Xu Tan and Tao Qin and Liwei Wang and Tieyan Liu},
booktitle={International Conference on Learning Representations},
year={2019}
}

@article{thilak2024,
title={The Slingshot Effect: A Late-Stage Optimization Anomaly in Adaptive Gradient Methods},
author={Vimal Thilak and Etai Littwin and Shuangfei Zhai and Omid Saremi and Roni Paiss and Joshua M. Susskind},
journal={Transactions on Machine Learning Research},
issn={2835-8856},
year={2024}
}
}
\newpage
\appendix
{
\hypersetup{linkcolor=black}

\addtocontents{toc}{\protect\setcounter{tocdepth}{2}}

\etocsettocstyle{\hrule height 0.5pt \vspace{1em} \section*{Appendix Contents}}{}

\tableofcontents

\vspace{1em} \hrule height 0.5pt \vspace{2em} 
}
\section{Additional Discussion}

\subsection{More Mitigation Experiments}
\label{app:mitigation}

\subsubsection{Dataset Size}
To rigorously evaluate whether the Slingshot mechanism is an artifact confined to the small-data regime, we conducted experiments on CIFAR-10 subsets of varying magnitudes: $N \in \{200, 2000, 20000, 50000\}$, where $N=50000$ represents the full training set. 
To disentangle the effect of dataset size from the noise induced by stochastic sampling (as discussed in Appendix \ref{app:minibatch}), we employed \emph{Full Batch Adam} for all configurations.

We observed distinct Slingshot loss spikes across all dataset sizes, including the full CIFAR-10 set. 
This empirical evidence demonstrates that the occurrence of Slingshots is independent of the dataset size. 
It confirms that as long as the model possesses sufficient capacity to drive the residual probability mass $\epsilon$ below the floating-point absorption threshold, the $\mathcal{NFI}$ feedback loop will be triggered, provided that the optimization trajectory is not continuously disrupted by mini-batch noise.

\subsubsection{Learning Rate}
We conducted a sensitivity analysis by sweeping the global learning rate $\eta$ across the range $\{10^{-2}, 10^{-3}, 10^{-4}, 10^{-5}\}$. 
Counter-intuitively, we observe that \emph{smaller} learning rates are significantly more prone to triggering Slingshot instabilities.

This phenomenon implies that a larger learning rate provides a crucial form of implicit regularization. 
Large step sizes introduce stochastic noise that prevents the optimizer from settling into sharp, rugged basins of attraction, effectively smoothing the optimization trajectory. 
In contrast, a small learning rate allows the model to converge faithfully to the nearest local minimum. 
In the complex loss landscape of deep networks, these ``nearest" solutions are frequently ``bad" local minima characterized by extreme sharpness.

\subsubsection{Bias}
We find that including a bias term in the classification layer accelerates the occurrence of Slingshots. 
This instability stems from a significant scale discrepancy between the updates of weights and biases.
The gradient with respect to the weight $\boldsymbol{W}_k$ is scaled by the feature vector $\boldsymbol{h}$:
\begin{equation} 
    \nabla_{\boldsymbol{W}_k} \mathcal{L} = (\hat{y}_k - y_k)\boldsymbol{h} 
\end{equation}
In contrast, the gradient with respect to the bias $b_k$ is:
\begin{equation} 
    \nabla{b_k} \mathcal{L} = (\hat{y}_k - y_k) 
\end{equation}
In our experiments, the feature norm $\|\boldsymbol{h}\|$ grows to approximately $10^2$. 
This implies that the weight matrix $\boldsymbol{W}$ updates two orders of magnitude faster than the bias $b$. 
As a result, the bias term effectively ``lags behind'' the rapidly evolving weights. 
This dynamic mismatch destabilizes the logit computation $z_k = \boldsymbol{W}_k^T \boldsymbol{h} + b_k$, 
making the system more prone to violating the absorption threshold and triggering spikes.

\begin{figure}[t!]
  \centering
     \begin{subfigure}[b]{0.49\textwidth}
         \centering
         \includegraphics[width=\textwidth]{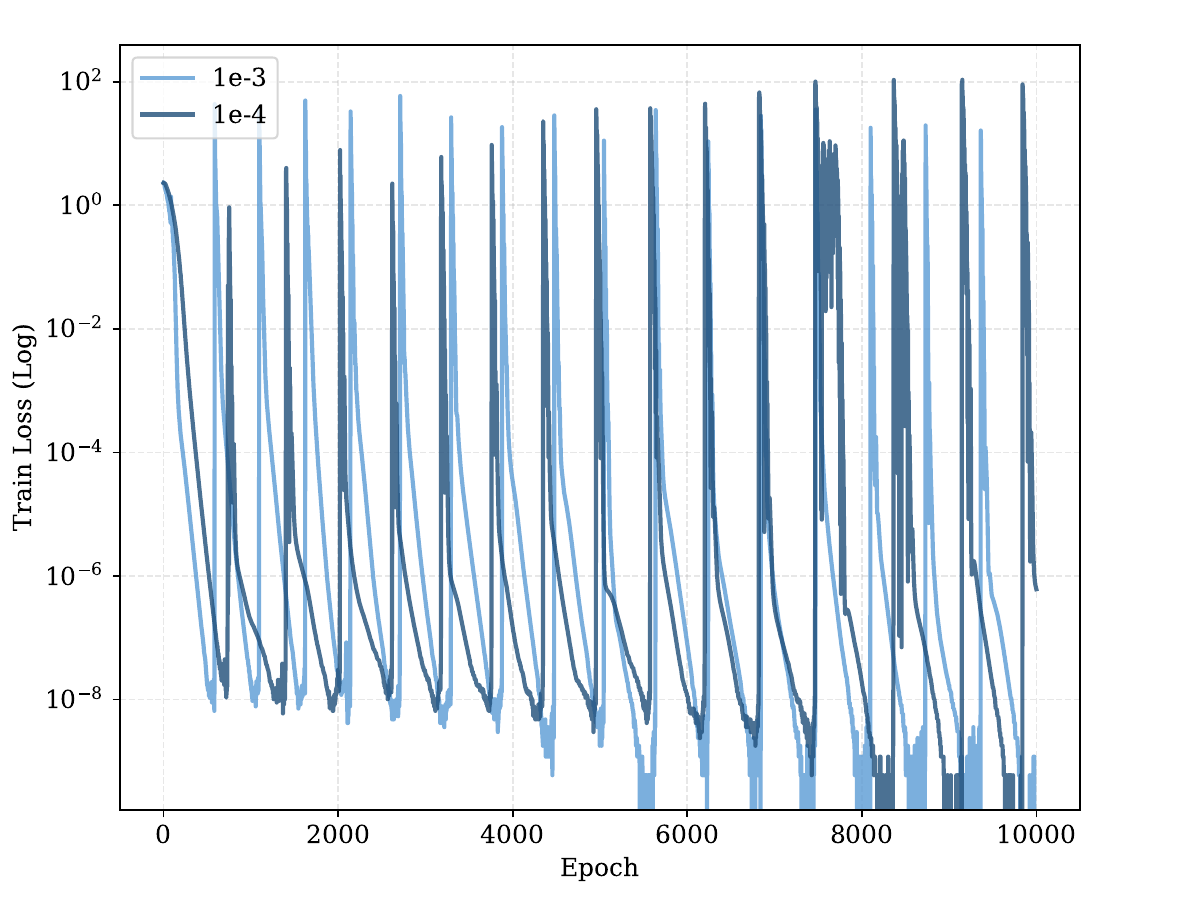}
         \caption{}
         \label{fig:lr}
     \end{subfigure}
     \hfill 
     \begin{subfigure}[b]{0.49\textwidth}
         \centering
        \includegraphics[width=\textwidth]{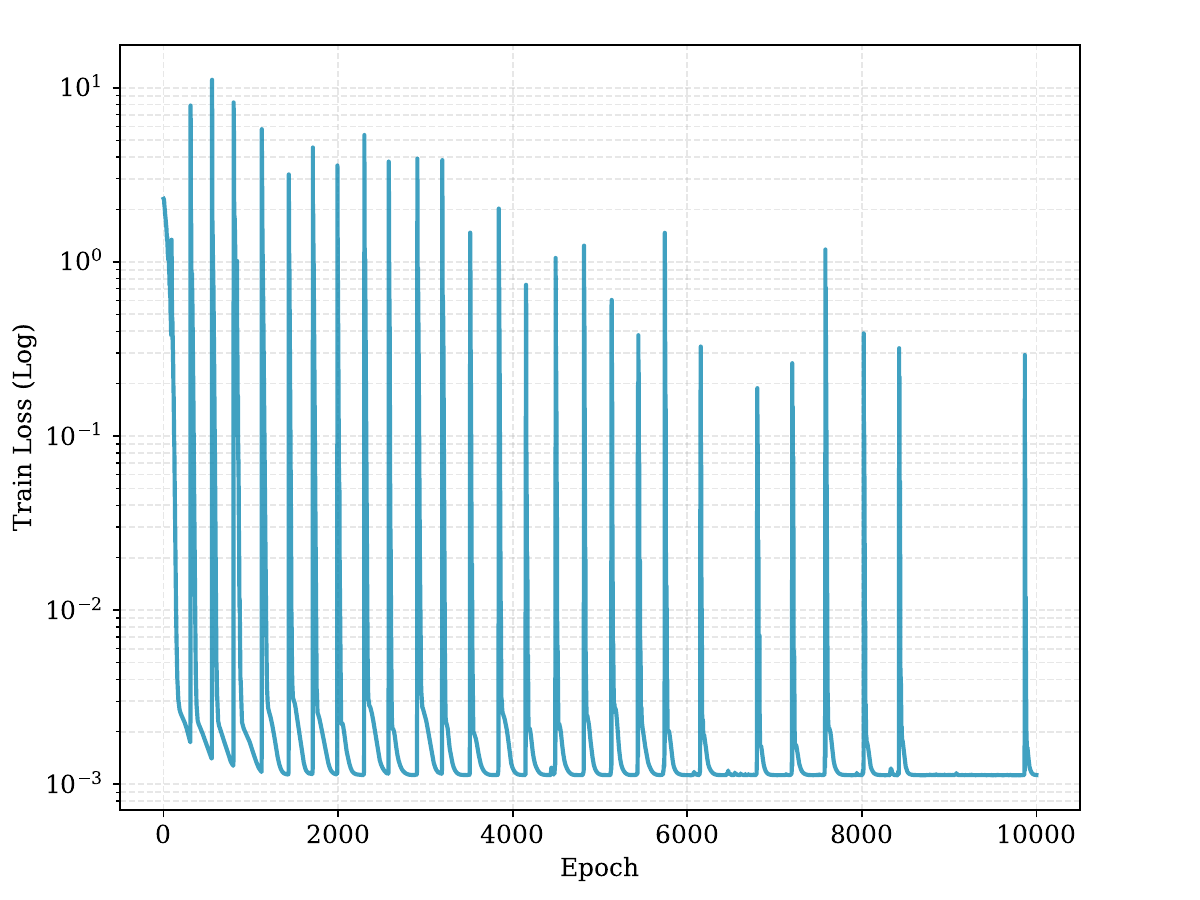}
        \caption{}
        \label{fig:ls}
    \end{subfigure}
    \caption{
    \textbf{(a)} Training Loss with different learning rates.
    \textbf{(b)} Train Loss with label smoothing.
    }
    \label{fig:add}
\end{figure}

\subsubsection{Numerical Underflow}
We demonstrate that the Slingshot mechanism is unrelated to numerical underflow (often referred to as precision overflow or range truncation).

Floating-point arithmetic is subject to two distinct types of precision loss: 
\textit{absorption error} (due to limited mantissa bits) and \textit{underflow} (due to limited exponent bits). 
Underflow occurs when a value is smaller than the minimum representable positive number, causing it to be truncated to zero. 
Specifically, for the Softmax function, the gradient with respect to the minimum logit $z_{min}$ vanishes if:
\begin{equation}
\exp(z_{min} - z_{max}) < 2^{-(p - 1 + 2^{E-1} - 2)}
\end{equation}
where $p$ denotes the number of mantissa bits and $E$ the number of exponent bits. 
For standard \texttt{float32} precision, this cutoff occurs when the logit difference satisfies $z_{max} - z_{min} > 149 \ln 2 \approx 103.28$.

To rule out underflow as a cause, 
we conducted an experiment where we explicitly clamped the logits such that the margin satisfies $z_{min} \ge z_{max} - 100$ at all times. 
This constraint guarantees that gradients remain strictly within the representable range of \texttt{float32}. 
We observed that Slingshot spikes persist despite this intervention, confirming that the phenomenon is driven purely by absorption errors in the mantissa, independent of exponent-based underflow.

\subsection{Is Slingshot an Edge of Stability Phenomenon?}
\label{app:eos}

While both the Slingshot mechanism and the Edge of Stability (EOS) phenomenon manifest as frequent oscillations and spikes in the training loss trajectory, 
our theoretical analysis and empirical evidence demonstrate that they are governed by fundamentally different mechanisms.

In the context of convex optimization, 
the condition for Gradient Descent (GD) to converge stably on a function $f$ is that the learning rate must satisfy $\eta < 2/\lambda$, 
where $\lambda$ represents the Lipschitz constant of the gradient $\nabla f$. 
For twice-differentiable functions, $\lambda$ corresponds to the upper bound of the maximum eigenvalue (spectral norm) of the Hessian matrix $\nabla^2 f(x)$.

Cohen et al.~\cite{cohen2021} identified the phenomenon of ``Progressive Sharpening'' in neural network training, 
where the maximum eigenvalue of the Hessian, $\lambda_{max}$, 
steadily increases until it reaches the stability threshold $2/\eta$. 
Upon breaching this limit, the optimization enters an unstable regime characterized by oscillations, 
which effectively regulate $\lambda_{max}$ to hover around the critical value $2/\eta$. 
This self-stabilizing behavior near the limit of divergence is termed the ``Edge of Stability''. 
The visual signature of EOS in the training loss curve is typically a series of high-frequency oscillations.

For adaptive optimizers, the stability condition extends to the maximum eigenvalue of the \textit{Preconditioned Hessian} $\mathbf{P}\mathbf{H}(w)$, 
where $\mathbf{P} = \text{diag}(1/(\sqrt{v} + \epsilon))$ acts as the preconditioner \cite{cohen2023, cohen2025}. 
In the presence of momentum (e.g., in Adam), this stability threshold is further modified by 
a coefficient $\kappa(\beta_1, \beta_2)$ dependent on the hyperparameters \cite{cohen2023, bai2025}.

Although Slingshot events also manifest as loss spikes, their underlying principle is distinct from EOS for several reasons:

\begin{figure}[t!]
  \centering
     \begin{subfigure}[b]{0.49\textwidth}
         \centering
         \includegraphics[width=\textwidth]{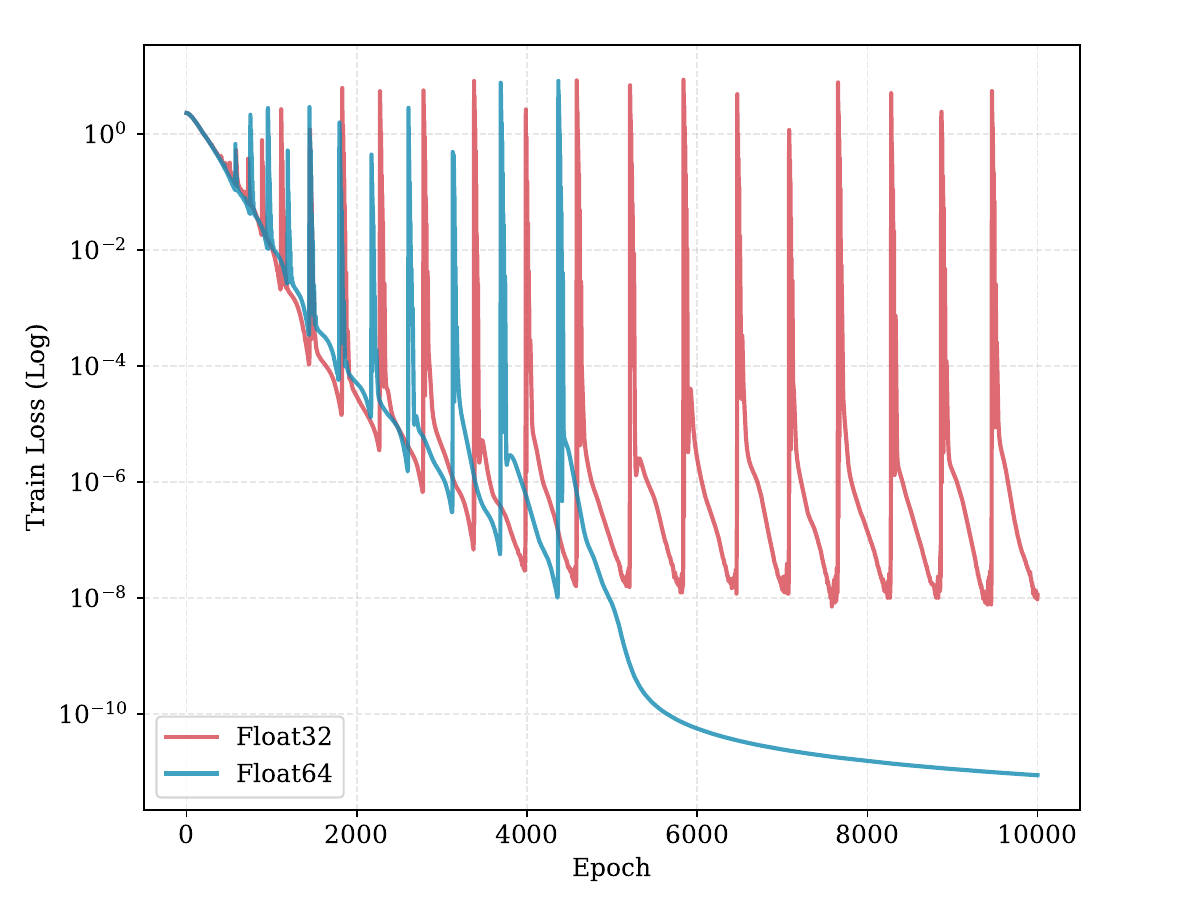}
         \caption{}
         \label{fig:eos_32}
     \end{subfigure}
     \hfill 
     \begin{subfigure}[b]{0.49\textwidth}
         \centering
        \includegraphics[width=\textwidth]{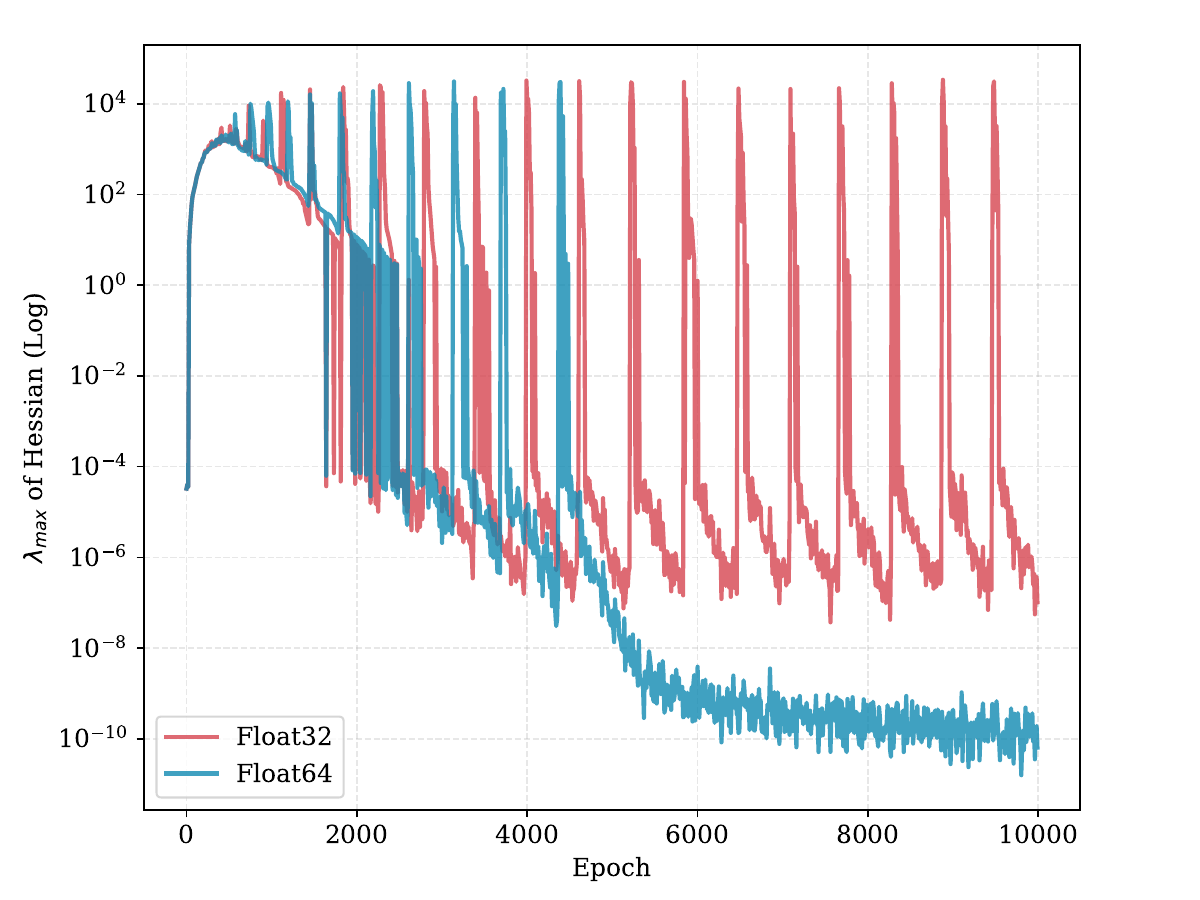}
        \caption{}
        \label{fig:eos_64}
    \end{subfigure}
    \caption{\textbf{Coexistence of EOS and Slingshot Phenomena.}
    \textbf{(a)} Training Loss showing early EOS oscillations versus late-stage numerical spikes.
    \textbf{(b)} Evolution of Maximum Hessian Eigenvalue $\lambda_{max}$.
    }
    \label{fig:eos}
\end{figure}

\subsubsection{Vanishing Hessian under Cross-Entropy.} 
First, in the specific setting of CE loss, the progressive sharpening required for EOS does not occur in the late stages of training. 
Instead, $\lambda_{max}$ tends toward zero.

\begin{theorem}
\label{thm:vanishing_hessian}
Consider a neural network with parameters $\theta \in \mathbb{R}^d$ and output logits $z(\theta, x) \in \mathbb{R}^K$. 
For a classification task with Cross-Entropy loss $\mathcal{L}$, 
if the model converges to an interpolation solution (i.e., the predicted probability vector $\hat{y}$ converges to the one-hot label $y$), 
then the maximum eigenvalue of the Hessian matrix $\lambda_{\max}(H_\theta)$ converges to 0, 
provided that the derivatives of the network are locally bounded.
\end{theorem}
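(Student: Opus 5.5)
The plan is to use the Gauss--Newton decomposition of the Hessian and show that both pieces vanish as $\hat{y}\to y$. Let the loss be an average over training samples, $\mathcal{L}(\theta)=\frac{1}{N}\sum_{i}\ell(z(\theta,x_i),y_i)$. Write $J_i=\partial z(\theta,x_i)/\partial\theta\in\mathbb{R}^{K\times d}$. The chain rule gives
\begin{equation*}
H_\theta=\frac{1}{N}\sum_i\Big(J_i^\top\,\nabla_z^2\ell_i\,J_i+\sum_{k=1}^K(\hat{y}_{i,k}-y_{i,k})\,\nabla_\theta^2 z_k(\theta,x_i)\Big),
\end{equation*}
where $\nabla_z^2\ell_i=\mathrm{diag}(\hat{y}_i)-\hat{y}_i\hat{y}_i^\top$ is the softmax Hessian. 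It suffices to show that the spectral norm of each term tends to $0$. Since $H_\theta$ is symmetric, $|\lambda_{\max}(H_\theta)|\le\|H_\theta\|_2$.

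\textbf{Step 1 (residual term).} The coefficient $\hat{y}_{i,k}-y_{i,k}$ tends to $0$ by the interpolation hypothesis. The hypothesis that the derivatives of the network are locally bounded gives $\|\nabla^2_\theta z_k\|\le C_2$ on a neighborhood containing the tail of the trajectory. I will interpret ``locally bounded'' as a uniform bound along the trajectory, i.e.\ on a compact set containing it or on a bounded region. Under that bound the second term is at most $C_2\sum_k|\hat{y}_{i,k}-y_{i,k}|\to0$.

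\textbf{Step 2 (Gauss--Newton term).} Bound $\|\mathrm{diag}(\hat{y})-\hat{y}\hat{y}^\top\|_2$ in terms of the residual mass. Let $r$ be the correct class and $\delta=1-\hat{y}_r=\sum_{k\neq r}\hat{y}_k$. The eigenvalues of this PSD matrix are bounded by its trace,
\begin{equation*}
\sum_k\hat{y}_k(1-\hat{y}_k)=\hat{y}_r\delta+\sum_{k\ne r}\hat{y}_k(1-\hat{y}_k)\le 2\delta .
\end{equation*}
Hence $\|J_i^\top\nabla_z^2\ell_i J_i\|\le 2\delta_i C_1^2$, where $C_1$ bounds $\|J_i\|$. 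This tends to $0$. Adding the two bounds, averaging over the finitely many samples, and using $\lambda_{\max}\ge0$ near a minimum (or simply $|\lambda_{\max}|\le\|H\|$) yields $\lambda_{\max}(H_\theta)\to0$.

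\textbf{Main obstacle.} The delicate point is the boundedness hypothesis. In the CE interpolation regime the parameters typically diverge, with $\|\theta\|\to\infty$, which is exactly the norm growth studied in Theorem~\ref{thm:nfc}. For homogeneous networks, $\|J\|$ and $\|\nabla^2 z\|$ grow polynomially in $\|\theta\|$, so a purely \emph{local} bound does not automatically apply along the trajectory. My first approach is to read the hypothesis as a uniform bound $C_1,C_2$ on the region the trajectory visits. As a fallback, I would strengthen the argument quantitatively. For CE, $\delta_i$ decays like $e^{-\gamma_i}$ in the margin $\gamma_i$, while the derivatives grow only polynomially in $\|\theta\|$. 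Assuming the margin grows at least linearly in $\|\theta\|$ (normalized margin bounded below, as in \cite{lyu2020}), the product $\delta_i\,\mathrm{poly}(\|\theta\|)$ still tends to $0$, and the same two-term bound goes through.
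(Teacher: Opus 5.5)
Your proposal is correct and follows essentially the same route as the paper. Both use the Gauss--Newton split $H_\theta = J^\top H_z J + \sum_k(\hat{y}_k - y_k)\nabla^2_\theta z_k$, bound $\|H_z\|_2$ by $\mathrm{tr}(H_z)=1-\|\hat{y}\|_2^2$ (your $2\delta$ is a more explicit form of that bound), and assume uniform bounds on $\|J\|$ and $\|\nabla^2_\theta z_k\|$ over the convergence region, which is how the paper reads ``locally bounded''; your caveat that such bounds conflict with the parameter divergence typical of CE interpolation is fair, since the paper does not address it, and your exponential-versus-polynomial fallback is a reasonable fix under the extra homogeneity and normalized-margin assumptions.
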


The proof is provided in \cref{app:proof_lambda}. 
Consequently, the optimization operates far below the stability threshold $2/\eta$ during the late training phase, 
implying that the instability does not arise from the curvature of the loss landscape. 

\subsubsection{Numerical Artifact vs. Intrinsic Property.} 
Second, EOS is an intrinsic property of the optimization landscape that persists regardless of numerical precision. 
In stark contrast, the Slingshot effect is a numerical artifact. 
As demonstrated in our main results, simply increasing the precision to \texttt{float64} eliminates Slingshot spikes entirely, 
whereas true EOS oscillations would persist in double precision.

\subsubsection{Coexistence of Phenomena.}
Finally, we note that these two phenomena can coexist within the same training run. 
We verified this by training a narrow network (hidden dimension $d=20$). As illustrated in \cref{fig:eos}, 
the model trained in \texttt{float32} exhibits loss spikes in both the early and late stages of training. 
However, when trained in \texttt{float64}, the late-stage spikes (Slingshots) disappear, 
while the early-stage oscillations—occurring while $\lambda_{max}$ is still finite and large—persist. 
This confirms that the early instability is genuine EOS, while the late-stage instability is strictly a result of numerical breakdown.

\subsection{Why are Slingshots Rare in Practical Settings?}
\label{app:slingshot_rarity}
The Slingshot mechanism has remained largely underexplored primarily because it rarely manifests under standard deep learning training configurations. We identify two dominant factors that naturally suppress this phenomenon in conventional settings:

\subsubsection{Weight Decay.} 
Explicit regularization effectively constrains the magnitude of weights and, by extension, the growth of output logits. 
Xie \& Li \cite{xie2024} theoretically demonstrated that for the AdamW optimizer with a weight decay factor $\lambda$, 
the parameter magnitude is bounded by $\|w\|_\infty \lesssim 1/\lambda$. Since the logits are kept within a finite, 
bounded range, the logit margin $z_m - \max_{k \neq m} z_k$ rarely exceeds the critical threshold required to trigger floating-point absorption errors (approx. 16 for \texttt{float32}). 
Without this numerical breakdown, the feedback loop cannot initiate.

\subsubsection{Mini-Batch}
\label{app:minibatch}

While we have confirmed that Slingshot events can occur on the full CIFAR-10 dataset, 
they are often more difficult to observe in standard training regimes due to the dynamics of mini-batch. We identify two primary mechanisms through which mini-batching suppresses or delays the onset of $\mathcal{NFI}$:

\emph{Stochastic Delay of Convergence.}
The onset of the $\mathcal{NFI}$ feedback loop requires the model to enter a ``silent'' regime where the residual probability mass $\epsilon$ drops below the floating-point absorption threshold (approx. $10^{-7}$ for \texttt{float32}). In full-batch gradient descent, the loss decreases monotonically and rapidly, allowing the model to hit this numerical floor quickly. 
However, the stochastic noise introduced by mini-batch sampling creates fluctuations in the optimization trajectory. These fluctuations significantly impede the precise, asymptotic convergence required to trigger absorption errors. 

\emph{Implicit Regularization towards Flat Minima.}
More fundamentally, mini-batch SGD creates an implicit regularization effect that biases the optimization towards flatter minima~\cite{keskar2017, smith2018}. 
Therefore, Slingshots are most prominent when the batch size is large enough (approximating full-batch dynamics).
\subsubsection{Implications for Low-Precision Training.}
While Slingshot events are effectively mitigated in standard \texttt{float32} training, 
our findings suggest they may pose a significant latent risk for modern low-precision paradigms. 
As the field moves toward \texttt{BF16}, \texttt{FP8}, and even \texttt{FP4} training for Large Language Models, 
the margin required to trigger absorption errors shrinks dramatically. 
We posit that loss spikes induced by $\mathcal{NFI}$ could become a critical source of instability in these aggressive quantization regimes.

\subsection{Discussion on output logit divergence in LLMs.}
\label{app:llm}
Abnormally rapid logit growth after long-term LLM training was first observed in PaLM~\cite{chowdhery2023} 
and was later termed ``output logit divergence'' by Wortsman et al.~\cite{wortsman2024}. 
In their experiments, the average logit rapidly drifts toward negative infinity, 
which appears qualitatively similar to the drift dynamics observed in our setting (see \cref{fig:drift}). 
In our setting, the origin of this drift is explicit: as $\bm{W}_G$ and $\bm{\mu}_G$ become anti-parallel and grow exponentially, every logit contains a large common component
\begin{equation}
z_k = \bm{W}_k^\top \bm{h} = (\bm{W}_G + \bm{W}_k^*)^\top(\bm{\mu}_G + \bm{h}^*) 
= \bm{W}_G^\top \bm{\mu}_G + \bm{W}_G^\top \bm{h}^* + (\bm{W}_k^*)^\top \bm{\mu}_G + (\bm{W}_k^*)^\top \bm{h}^* .
\end{equation}
The dominant term $\bm{W}_G^\top \bm{\mu}_G$ tends to $-\infty$, thereby driving all logits downward.
Both Wortsman et al. and Thilak et al.~\cite{thilak2024} suggested that these phenomena may share a common origin, 
and we initially adopted the same interpretation. 
However, when we attempted to reproduce this phenomenon, we did not observe the same behavior. 
Following the model architectures and hyperparameters described in their paper, 
we trained GPT-2 models with 19M and 150M parameters. We first implemented the experiments in PyTorch, 
but we could not reproduce the divergence of logits toward negative infinity reported in 
Figure~4 of the arXiv version of their paper (Figure~G.2 in the proceedings version). 
We then used NanoDo\footnote{\url{https://github.com/google-deepmind/nanodo}}, 
the JAX/Flax-based small GPT-2 implementation that their paper states it is based on. 
After modifying the configuration to match the paper description as closely as possible, 
we still did not observe the reported phenomenon.

After further checking related materials, we found that, despite the paper being an ICLR Oral, 
we are not aware of any independent work that has successfully reproduced the same phenomenon in the three years since its publication. 
Stollenwerk et al.~\cite{stollenwerk2026} report a successful reproduction, 
but when we ran their released source code, the observed behavior was not exactly the same. 
In addition, we noticed that only Google-related open-source models, such as PaLM and Gemma, 
discuss how to mitigate output logit divergence in their technical reports, while other open-source models do not mention this issue. 
This is surprising, because Wortsman et al. claim that the phenomenon can simply be reproduced in 
a model with only 2.4M parameters by increasing the learning rate to $0.1$.

Therefore, we suspect that this phenomenon may be specific to running JAX/Flax on TPUs, 
or that there may be additional implementation details that were not described in the paper. 
Unlike PyTorch, which provides relatively mature precision protection through autocast, 
mixed-precision training in Flax often requires manual casting when implementing neural network modules. 
This may introduce additional precision-related issues.

The phenomenon we observe in LLMs is instead consistent with Stollenwerk et al.: 
the output logits diverge toward positive infinity.

\section{Additional Results}
\label{app:results}

\begin{table}[h]
    \centering
    \caption{Summary of Loss Spike observations across different model architectures and datasets.}
    \label{tab:slingshot_summary}
    \begin{tabular}{lccl}
        \toprule
        \textbf{Model} & \textbf{Dataset} & \textbf{Loss Spike?} & \textbf{Figure} \\
        \midrule
        Transformer & Modular Division & Yes & \cref{fig:mod_trm} \\
        MLP & Modular Division & Yes & \cref{fig:mod_mlp} \\
        MLP & CIFAR-10 & Yes & \cref{fig:mlp} \\
        VGG11 & CIFAR-10 & Yes & \cref{fig:vgg11} \\
        VGG11 with BN & CIFAR-10 & Yes & \cref{fig:vgg11_bn} \\
        ViT & CIFAR-10 & Yes & \cref{fig:vit} \\
        ResNet18 & CIFAR-10 & \textbf{No} & \cref{fig:resnet18} \\
        \bottomrule
    \end{tabular}
\end{table}

\begin{figure}[h]
     \centering
     \begin{subfigure}[b]{0.45\textwidth}
         \centering
         \includegraphics[width=\textwidth]{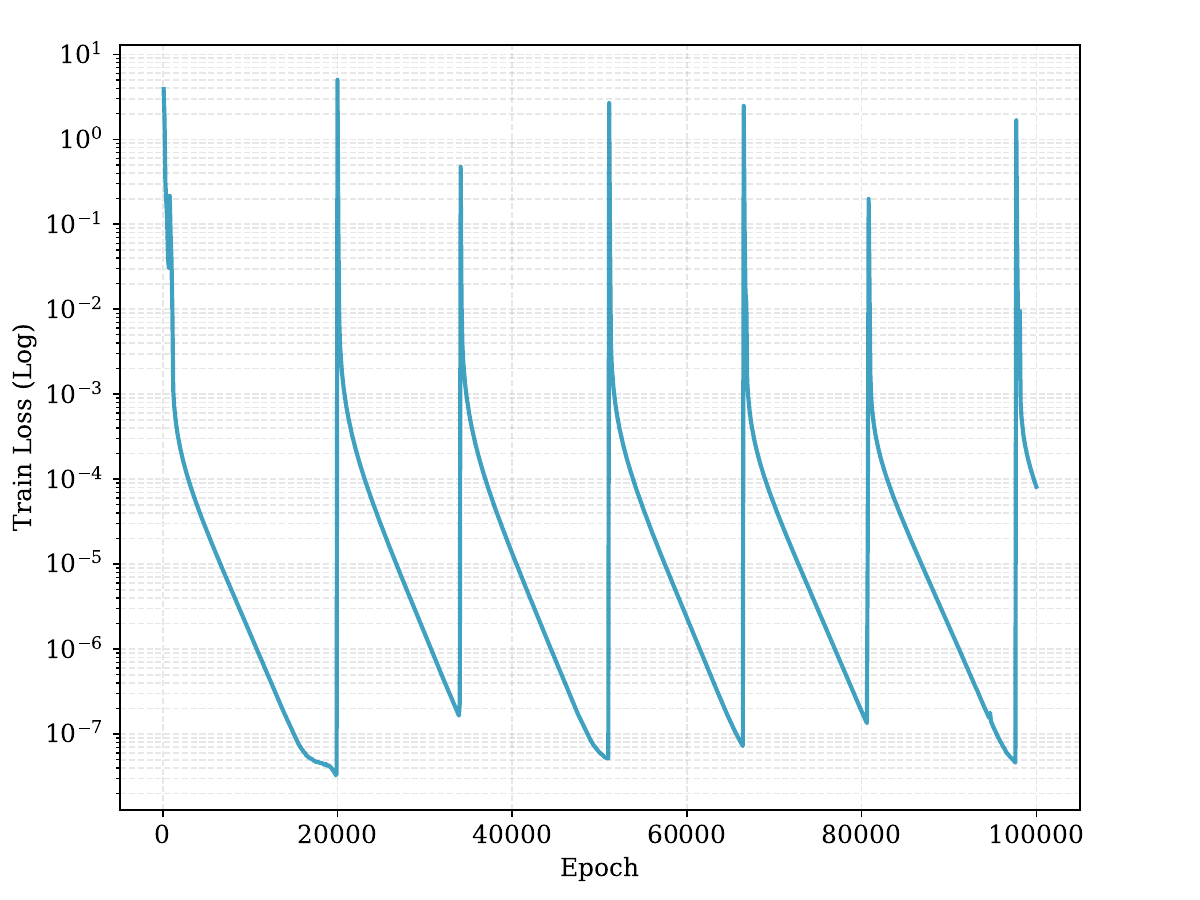}
     \end{subfigure}
     \hfill 
     \begin{subfigure}[b]{0.45\textwidth}
         \centering
        \includegraphics[width=\textwidth]{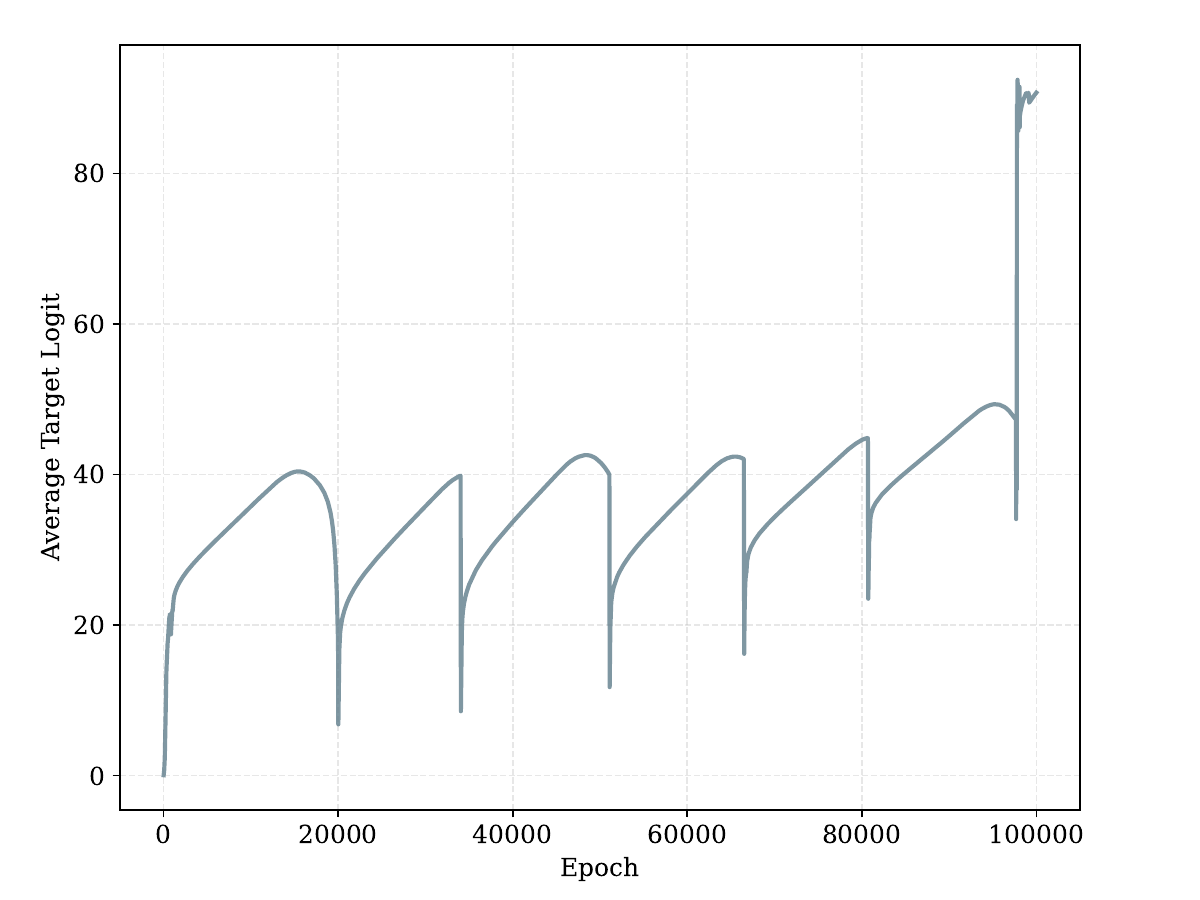}
    \end{subfigure}

     \caption{Slingshot in Transformer on modular division.}
     \label{fig:mod_trm}
\end{figure}

\begin{figure}[h]
     \centering
     \begin{subfigure}[b]{0.45\textwidth}
         \centering
         \includegraphics[width=\textwidth]{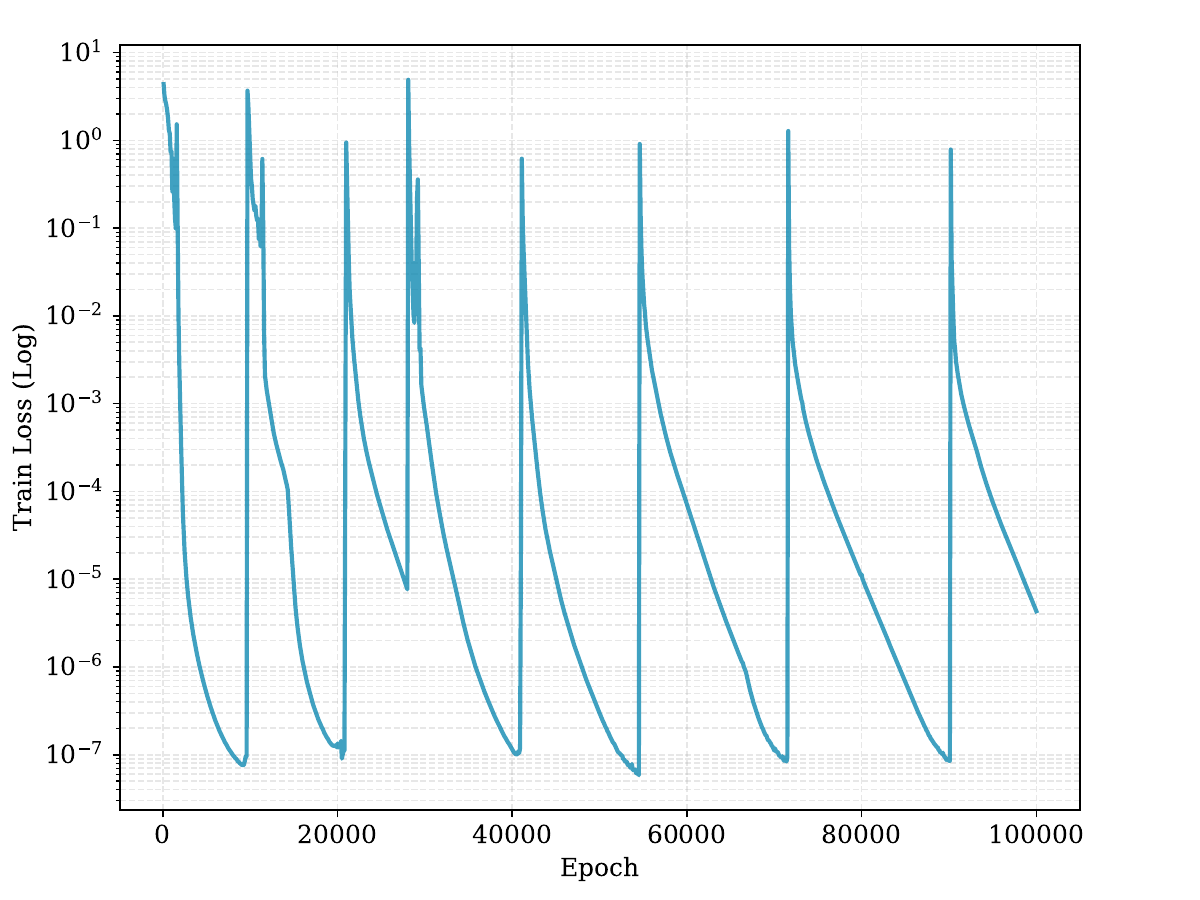}
     \end{subfigure}
     \hfill 
     \begin{subfigure}[b]{0.45\textwidth}
         \centering
        \includegraphics[width=\textwidth]{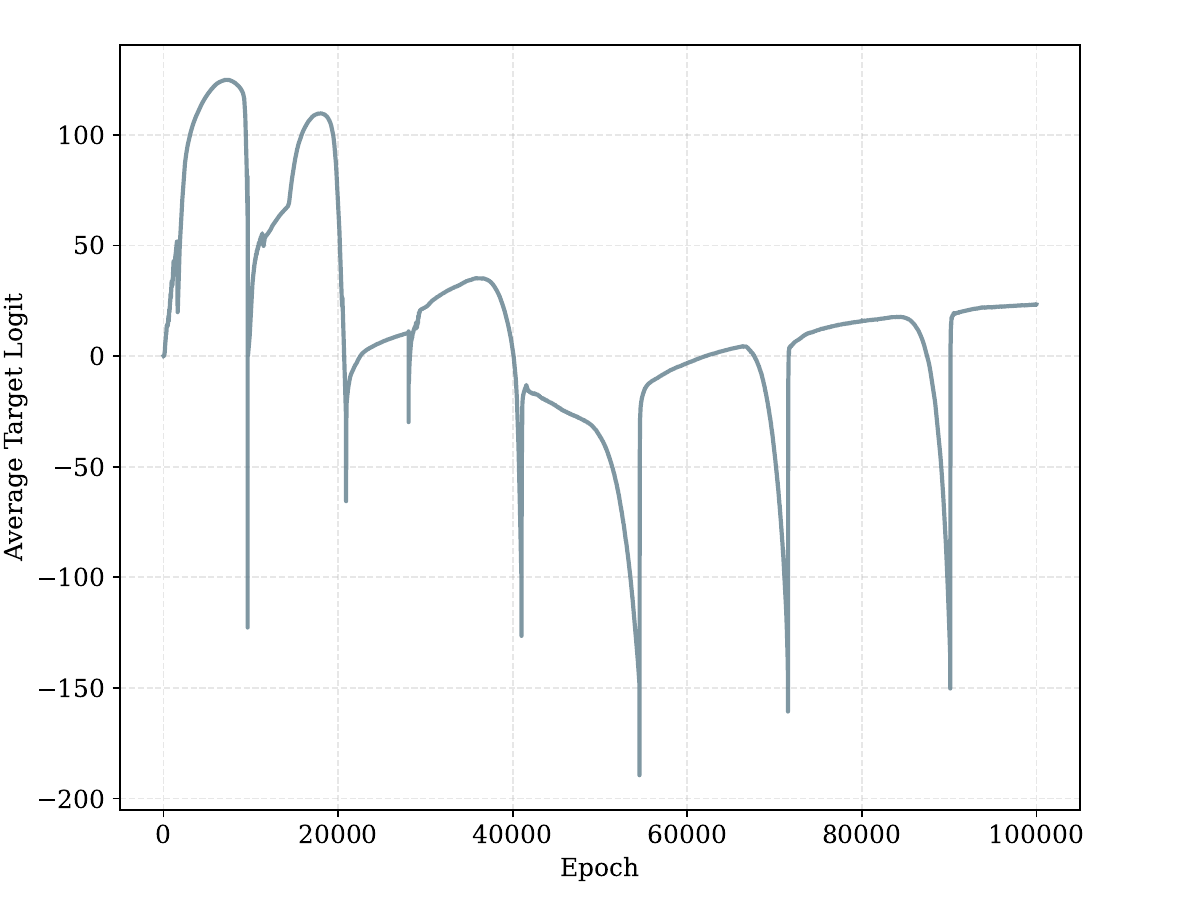}
    \end{subfigure}

     \caption{Slingshot in MLP on modular division.}
     \label{fig:mod_mlp}
\end{figure}

\begin{figure}[h]
     \centering
     \begin{subfigure}[b]{0.45\textwidth}
         \centering
         \includegraphics[width=\textwidth]{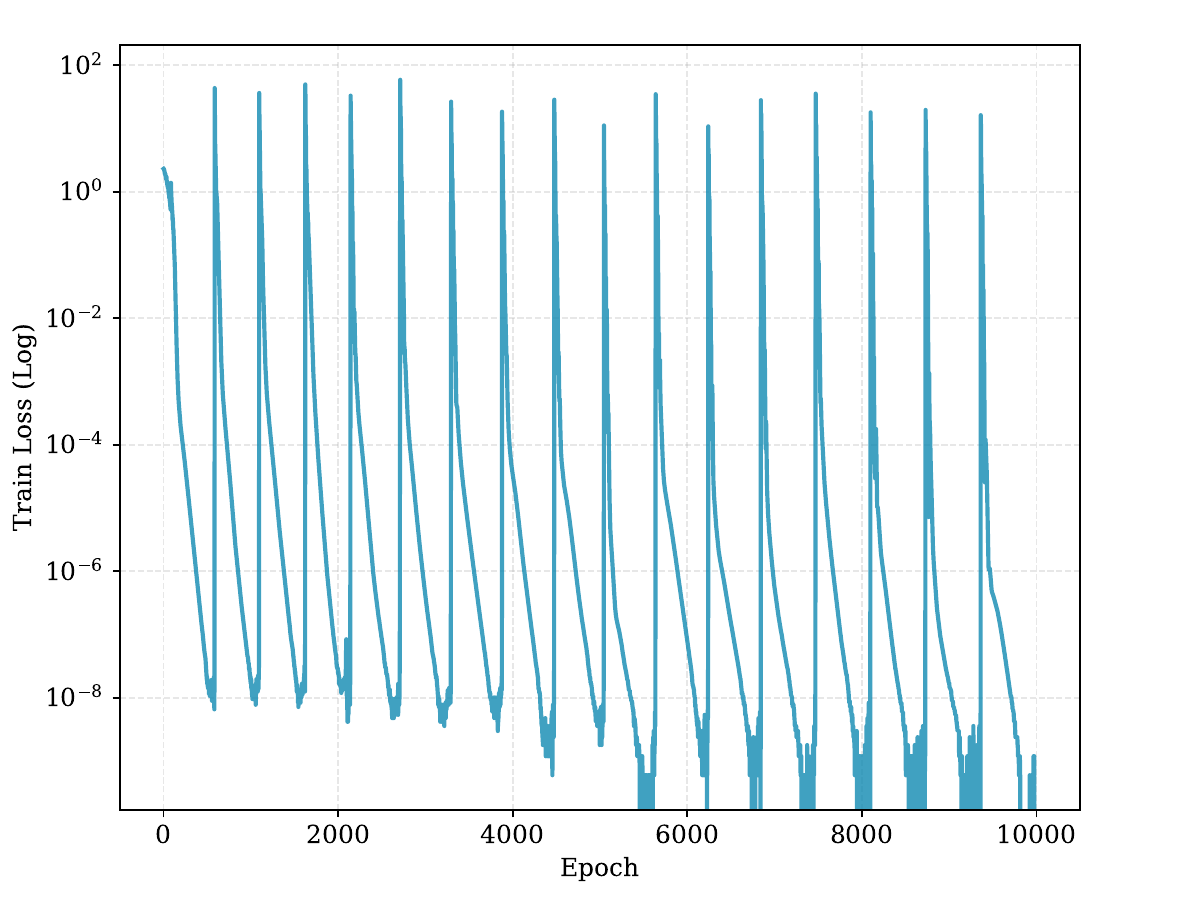}
     \end{subfigure}
     \hfill 
     \begin{subfigure}[b]{0.45\textwidth}
         \centering
        \includegraphics[width=\textwidth]{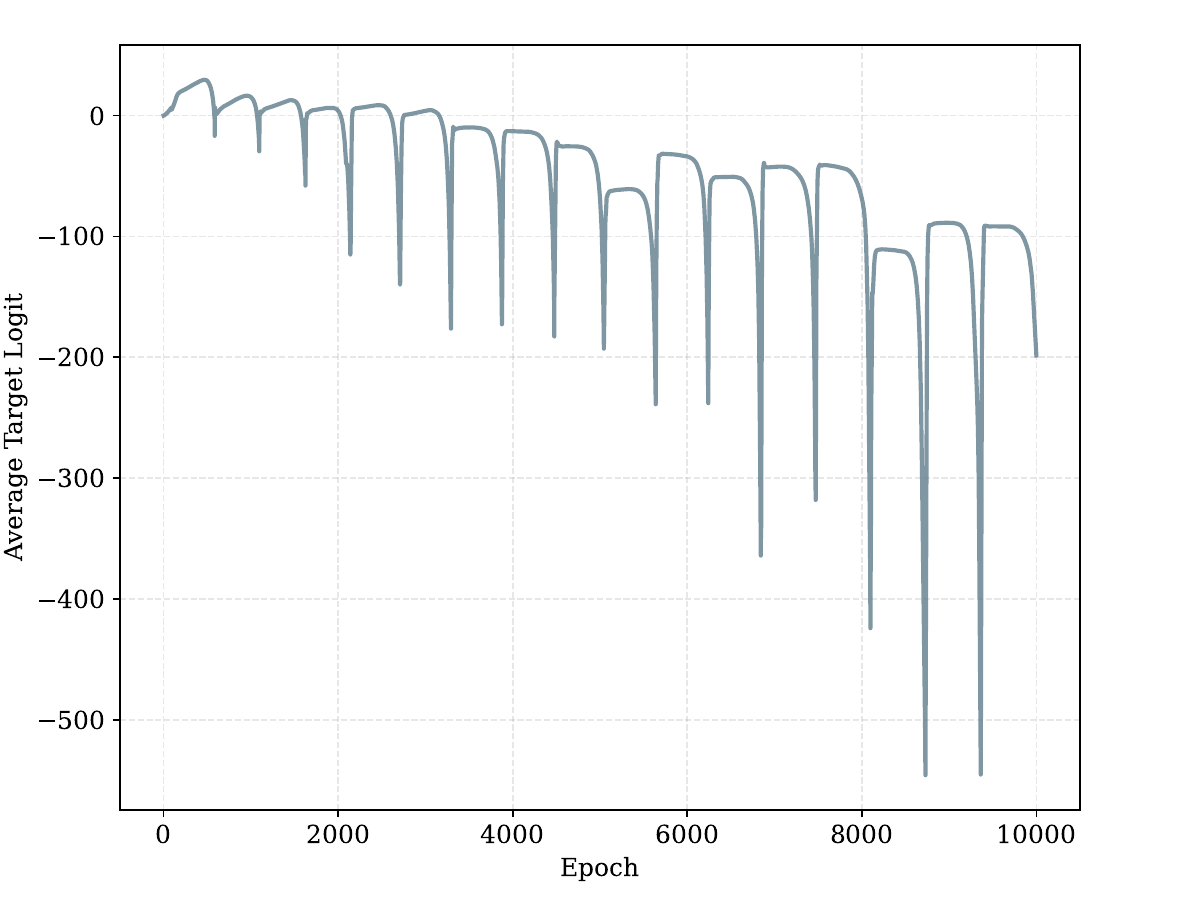}
    \end{subfigure}

     \caption{Slingshot in MLP on CIFAR-10.}
     \label{fig:mlp}
\end{figure}

\begin{figure}[h]
     \centering
     \begin{subfigure}[b]{0.45\textwidth}
         \centering
         \includegraphics[width=\textwidth]{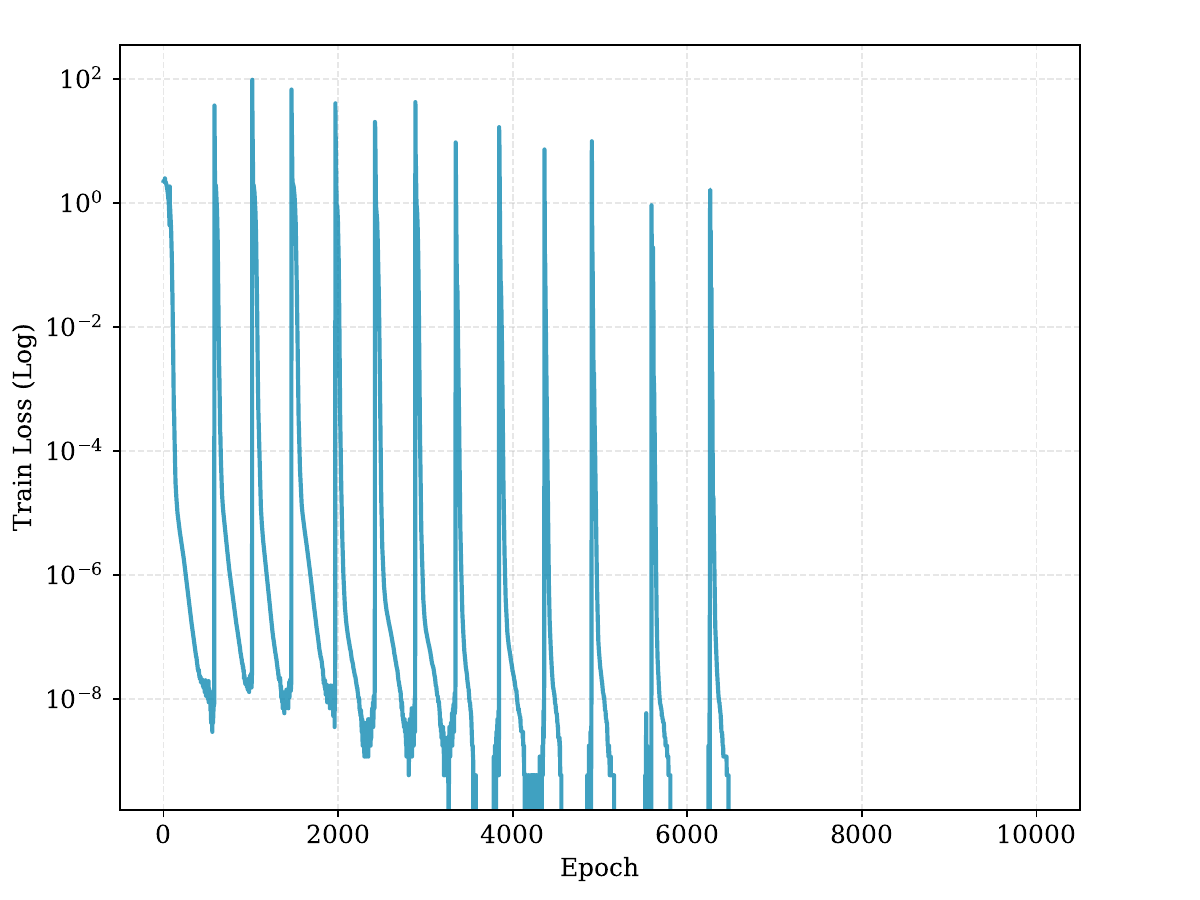}
     \end{subfigure}
     \hfill 
     \begin{subfigure}[b]{0.45\textwidth}
         \centering
        \includegraphics[width=\textwidth]{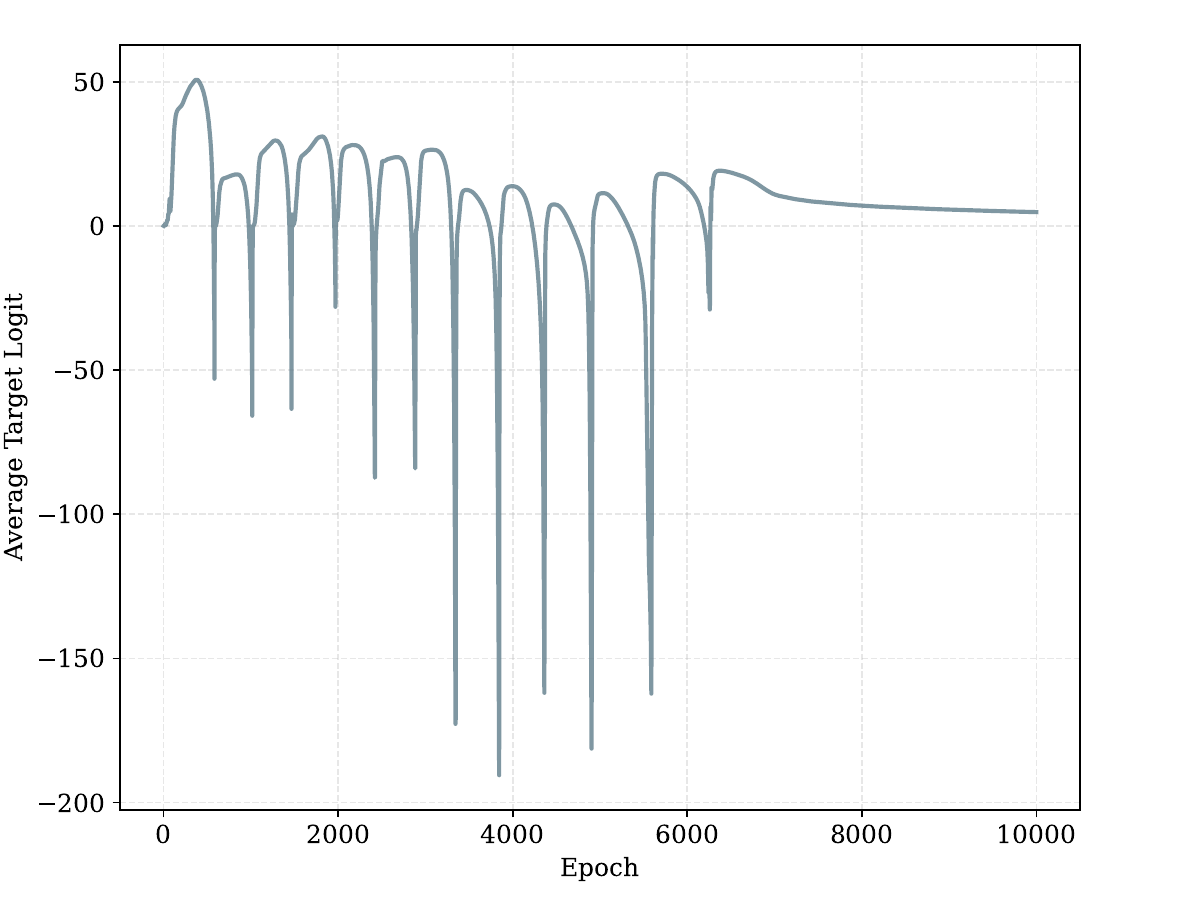}
    \end{subfigure}

     \caption{Slingshot in VGG11 on CIFAR-10.}
     \label{fig:vgg11}
\end{figure}

\begin{figure}[h!]
     \centering
     \begin{subfigure}[b]{0.45\textwidth}
         \centering
         \includegraphics[width=\textwidth]{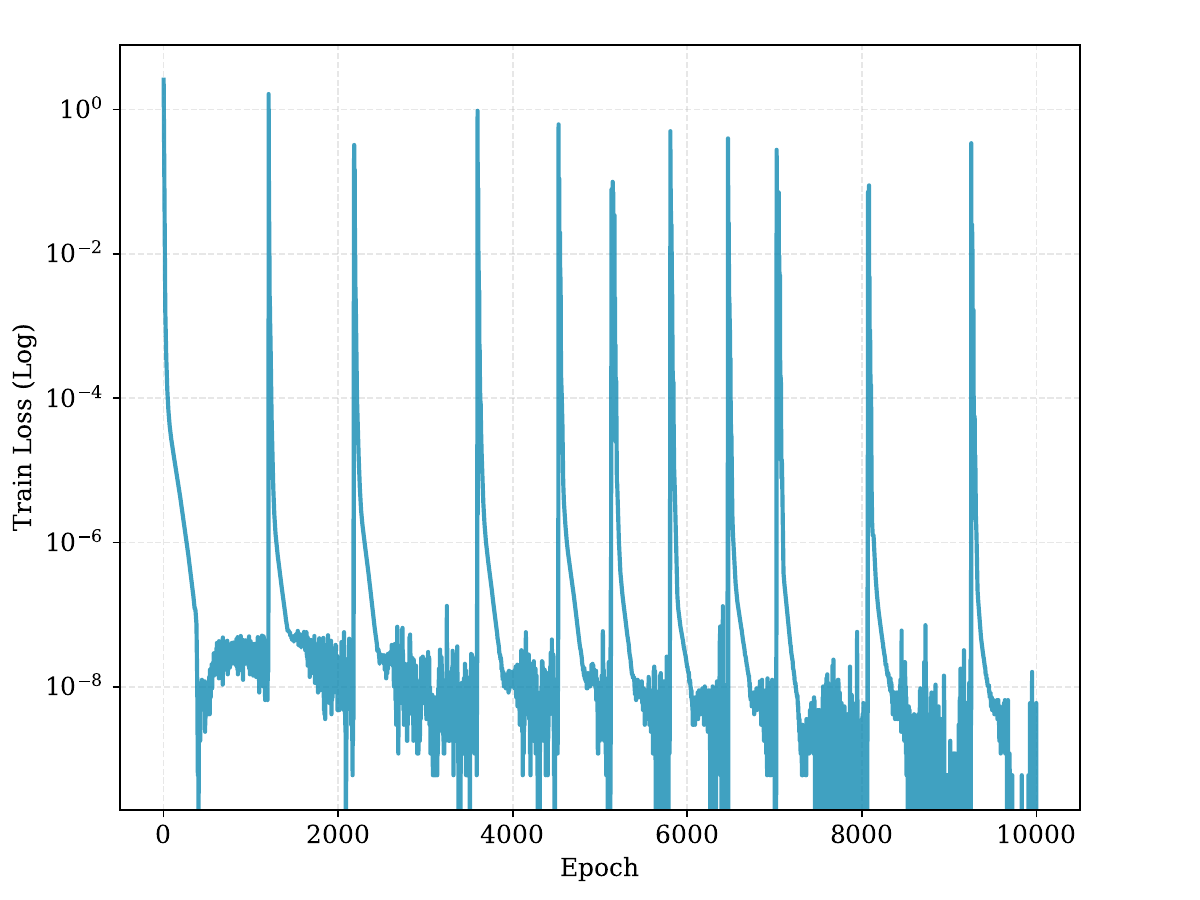}
     \end{subfigure}
     \hfill 
     \begin{subfigure}[b]{0.45\textwidth}
         \centering
        \includegraphics[width=\textwidth]{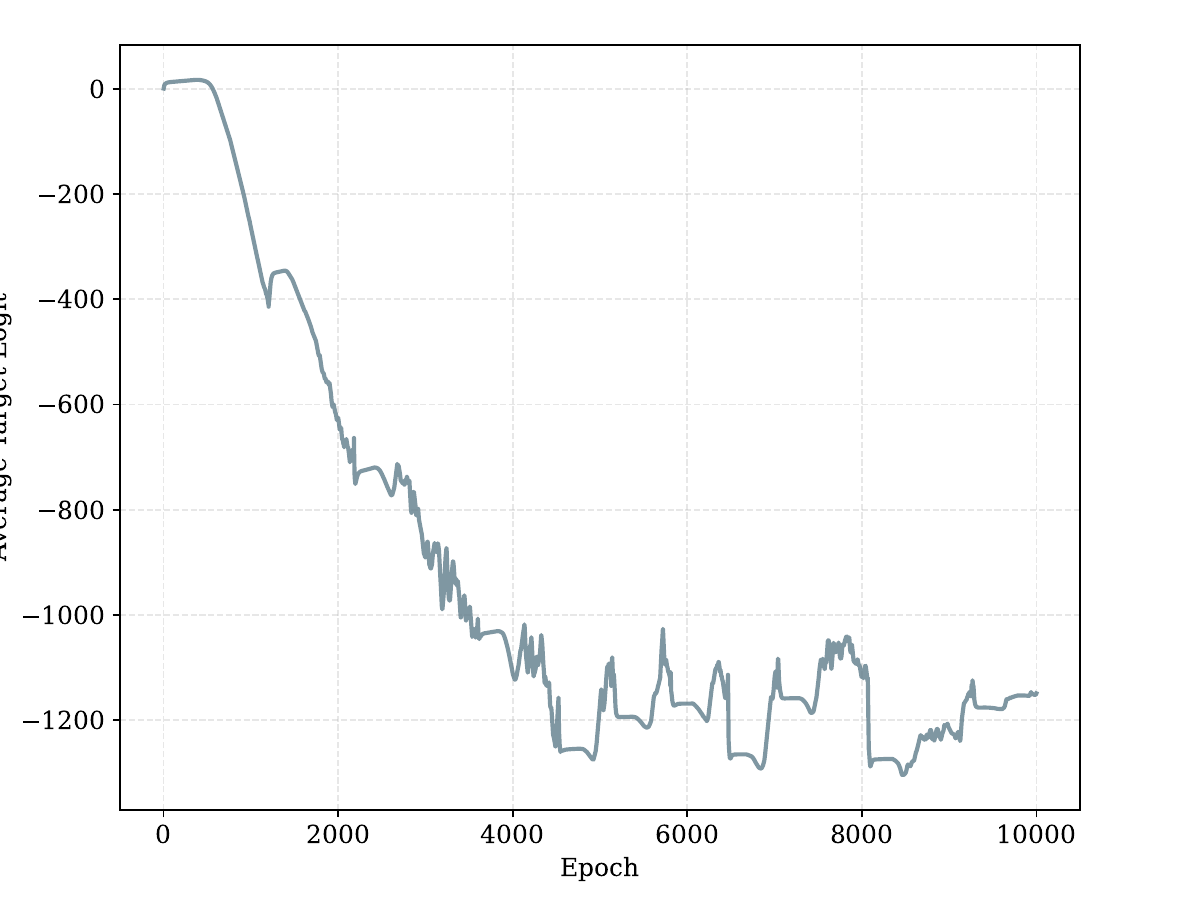}
    \end{subfigure}

     \caption{Slingshot in VGG11 with BN on CIFAR-10.}
     \label{fig:vgg11_bn}
\end{figure}

\begin{figure}[H]
     \centering
     \begin{subfigure}[b]{0.45\textwidth}
         \centering
         \includegraphics[width=\textwidth]{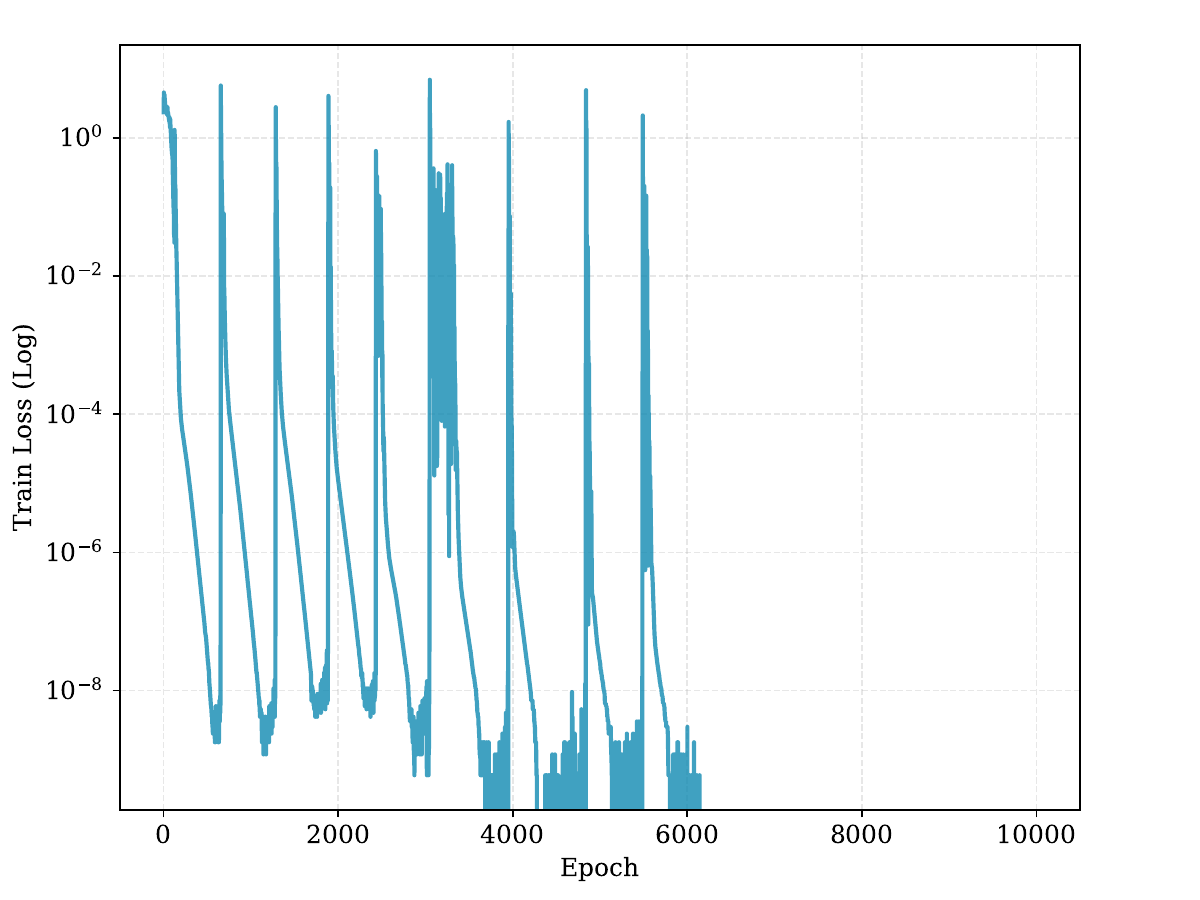}
     \end{subfigure}
     \hfill 
     \begin{subfigure}[b]{0.45\textwidth}
         \centering
        \includegraphics[width=\textwidth]{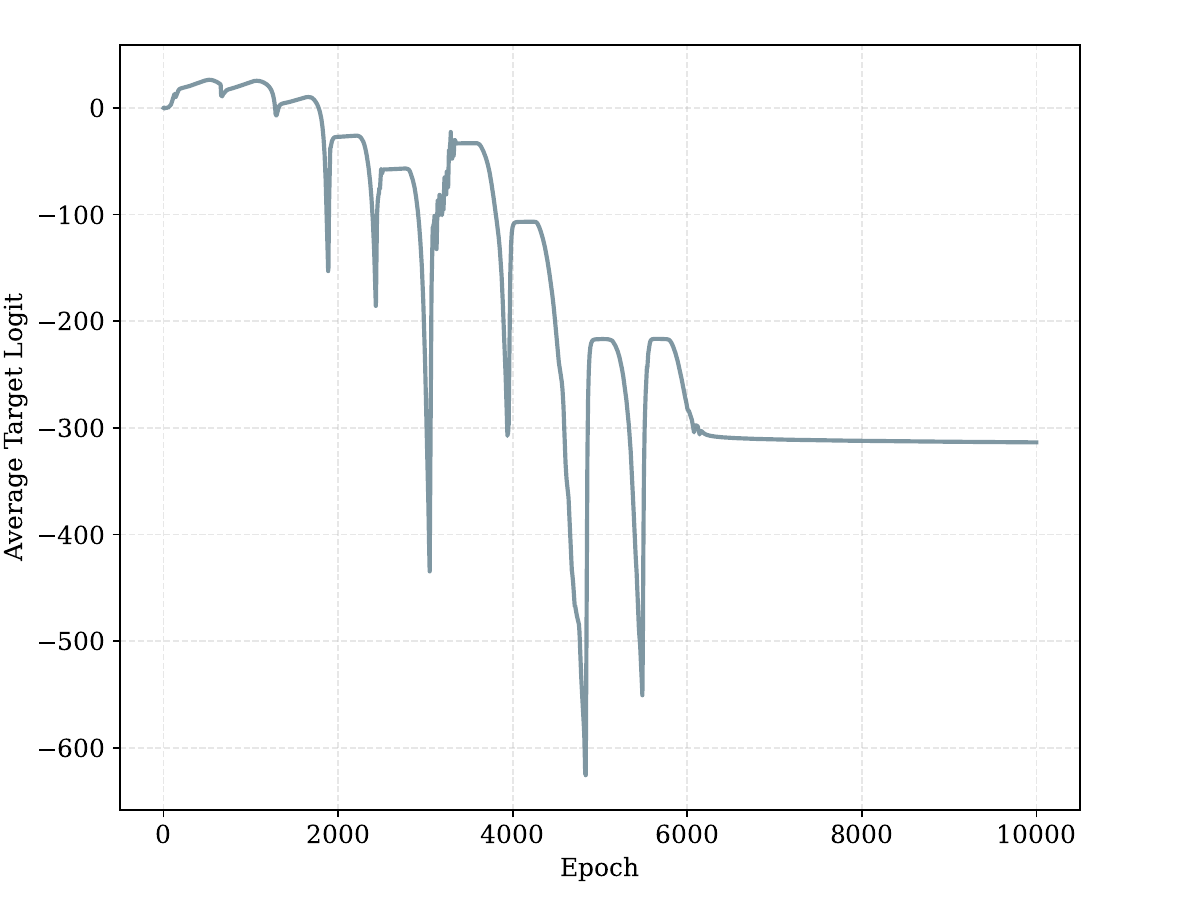}
    \end{subfigure}

     \caption{Slingshot in ViT on CIFAR-10.}
     \label{fig:vit}
\end{figure}

\begin{figure}[H]
     \centering
     \begin{subfigure}[b]{0.45\textwidth}
         \centering
         \includegraphics[width=\textwidth]{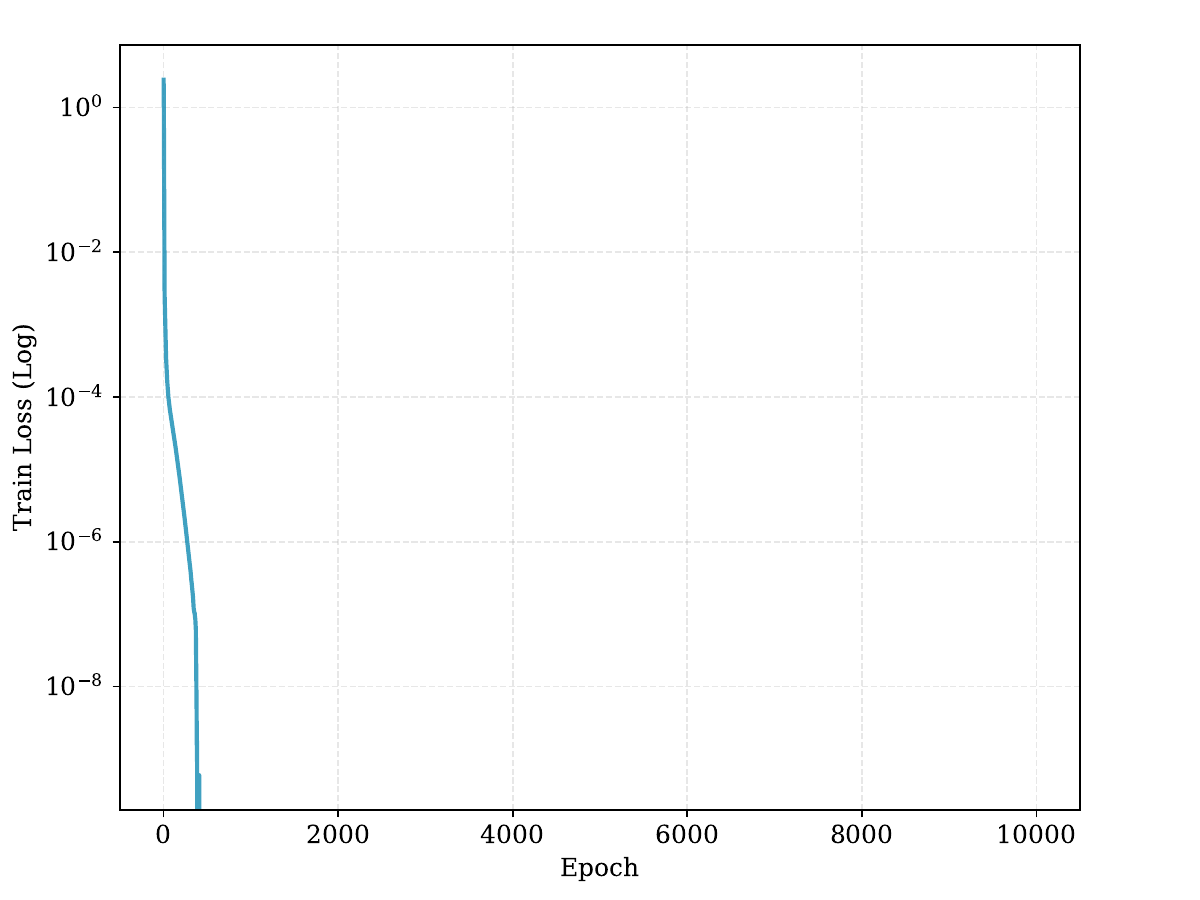}
     \end{subfigure}
     \hfill 
     \begin{subfigure}[b]{0.45\textwidth}
         \centering
        \includegraphics[width=\textwidth]{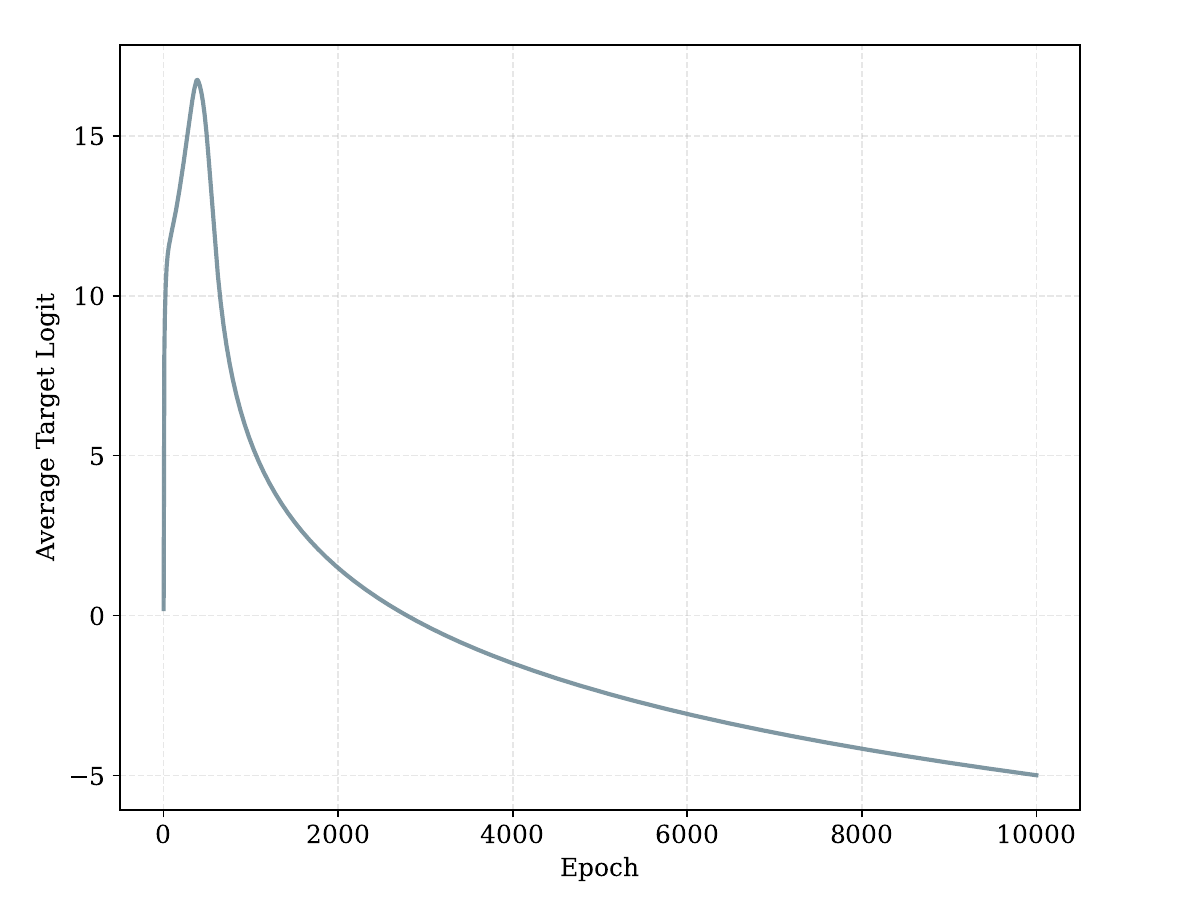}
    \end{subfigure}
        
     \caption{No Slingshot in ResNet18 on CIFAR-10.}
     \label{fig:resnet18}
\end{figure}

\begin{figure}[h]
     \centering
         \includegraphics[width=0.7\textwidth]{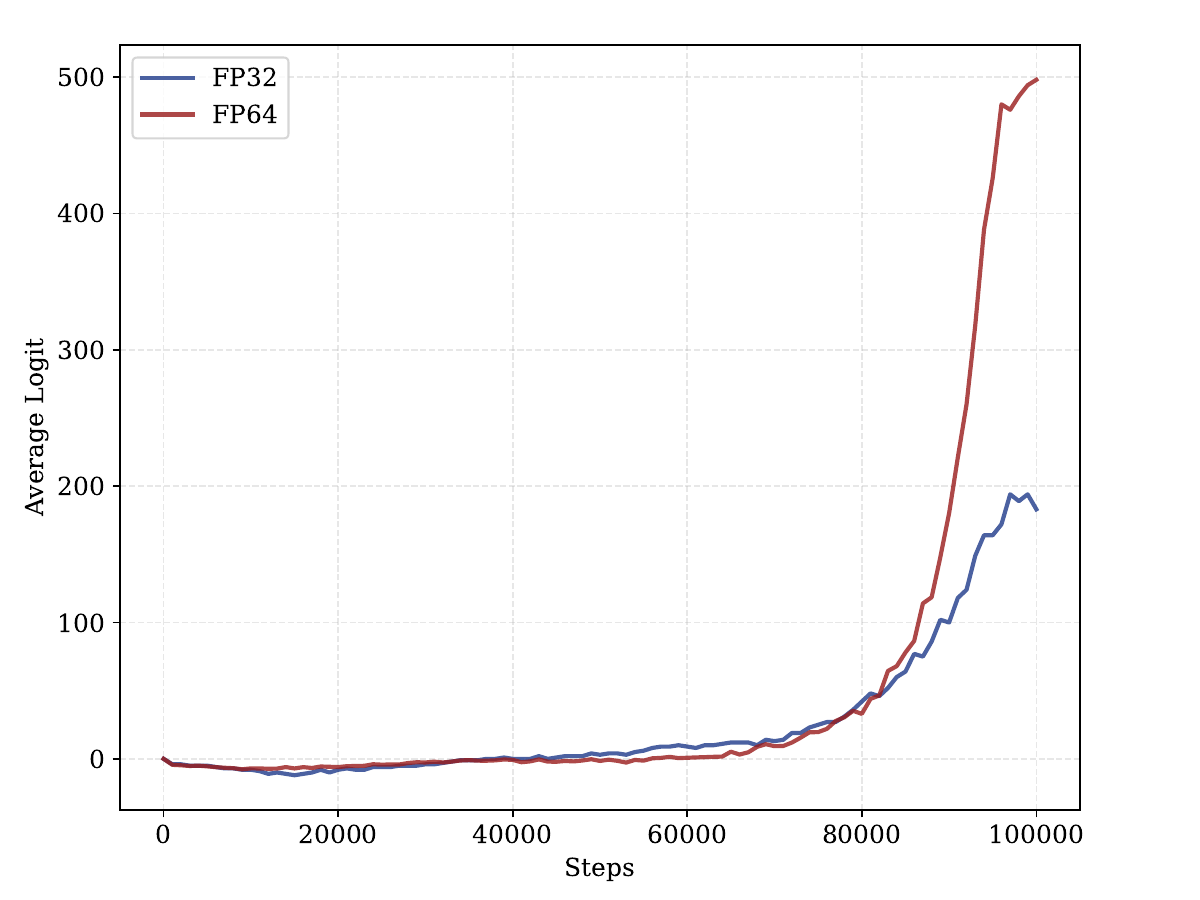} 
     \caption{The average logit of different precisions in LLM training.}
     \label{fig:llm_fp}
\end{figure}

\section{Proofs}
\label{appendix:proofs}
\subsection{Proof of \cref{thm:weight_drift}}
\label{app:proof_drift}

In this section, we provide the detailed derivation for \cref{thm:weight_drift}, demonstrating how floating-point absorption errors (Softmax Collapse) coupled with the geometry of Neural Collapse induce a deterministic drift in the global weight mean.

\subsubsection{Preliminaries and Assumptions}
We consider a classification task with $K$ classes, a batch size of $B$, and a learning rate $\eta$. Let $(\bm{x}, y)$ denote an input-label pair, and $\bm{h} \in \mathbb{R}^d$ be the feature vector of $\bm{x}$. The logits are given by $\bm{z} = \bm{W}\bm{h}$, where $\bm{W} \in \mathbb{R}^{K \times d}$.

We assume the model is in a state of approximate Neural Collapse (NC), satisfying:
\begin{itemize}
    \item \textbf{NC1:} Features collapse to class means: $\bm{h}_{k,i} \approx \bm{\mu}_k = \bm{\mu}_G + \bm{\mu}_k^*$.
    \item \textbf{NC2:} Centered class means $\bm{\mu}_k^*$ form a Simplex ETF satisfying $\langle \bm{\mu}_p^*, \bm{\mu}_q^* \rangle = -\frac{1}{K-1}\|\bm{\mu}^*\|^2$ for $p \neq q$.
    \item \textbf{NC3:} Classifiers align with features: $\bm{W}_k \propto \bm{\mu}_k^*$, implying $\|\bm{W}_k\| = \|\bm{W}\|$ and $\langle \bm{W}_p, \bm{W}_q \rangle = -\frac{1}{K-1}\|\bm{W}\|^2$ for centered weights.
\end{itemize}

\subsubsection{Breakdown of the Zero-Sum Constraint}
The gradient of the Cross-Entropy loss $\mathcal{L}$ with respect to the $k$-th classifier weight $\bm{W}_k$ for a single sample $(\bm{x}, y)$ is:
\begin{equation}
    \nabla_{\bm{W}_k} \mathcal{L} = (\hat{y}_k - y_k) \bm{h}
\end{equation}
where $y_k$ is the one-hot label (1 if $k=r$, 0 otherwise) and $\hat{y}_k$ is the softmax probability.

Consider the gradient of the global weight mean $\bm{W}_G = \frac{1}{K}\sum_{k=1}^K \bm{W}_k$. By summing the gradients over all classes:
\begin{equation}
    \nabla_{\bm{W}_G} \mathcal{L} = \frac{1}{K} \sum_{k=1}^K (\hat{y}_k - y_k) \bm{h} = \frac{1}{K} \left( \underbrace{\sum_{k=1}^K \hat{y}_k}_{1} - \underbrace{\sum_{k=1}^K y_k}_{1} \right) \bm{h} = 0
\end{equation}
In ideal arithmetic, $\bm{W}_G$ receives zero gradient and remains static.

\subsubsection{The Softmax Collapse (SC) Scenario}
Under SC, as defined in Section 3, the floating-point absorption error occurs when calculating the loss contribution of the correct class $r$. Specifically, the term $\hat{y}_r - 1$ is numerically rounded to $0$ because the precision limit prevents subtracting the small residual $(1-\hat{y}_r)$ from the large mantissa of 1.
Thus, the summation becomes:
\begin{align}
    \sum_{k=1}^K \nabla_{\bm{W}_k} \mathcal{L} &= (\hat{y}_r - y_r)\bm{h} + \sum_{k \neq r} (\hat{y}_k - 0) \bm{h} \\
    &\approx 0 \cdot \bm{h} + \sum_{k \neq r} \hat{y}_k \bm{h} \\
    &= \epsilon \bm{h}
\end{align}
where $\epsilon = \sum_{k \neq r} \hat{y}_k > 0$ represents the total probability mass assigned to incorrect classes.

\subsubsection{Quantifying the Residual $\epsilon$}
Assuming the model is in the NC state, we can approximate $\hat{y}_k$ for incorrect classes ($k \neq r$) using the ETF geometry.
The probability is given by $\hat{y}_k \approx \exp(z_k - z_r)$.
Using NC3 alignment, for a sample of class $r$:
\begin{itemize}
    \item Correct logit: $z_r = \langle \bm{W}_r, \bm{h} \rangle \approx \|\bm{W}\|\|\bm{\mu}^*\|$
    \item Incorrect logit ($k \neq r$): $z_k = \langle \bm{W}_k, \bm{h} \rangle \approx \|\bm{W}\|\|\bm{\mu}^*\| \cos \theta_{ETF} = -\frac{1}{K-1}\|\bm{W}\|\|\bm{\mu}^*\|$
\end{itemize}
Substituting these into the exponent:
\begin{align}
    \hat{y}_k &\approx \exp\left( -\frac{1}{K-1}\|\bm{W}\|\|\bm{\mu}^*\| - \|\bm{W}\|\|\bm{\mu}^*\| \right) \\
    &= \exp\left( -\frac{K}{K-1}\|\bm{W}\|\|\bm{\mu}^*\| \right)
\end{align}
Summing over the $K-1$ incorrect classes:
\begin{equation}
    \epsilon = \sum_{k \neq r} \hat{y}_k \approx (K-1) \exp\left( -\frac{K}{K-1}\|\bm{W}\|\|\bm{\mu}^*\| \right)
\end{equation}

\subsubsection{Batch Aggregation and Drift}
Finally, we consider the update rule for $\bm{W}_G$ over a batch of size $B$ with learning rate $\eta$. We assume mean reduction for the loss.
\begin{equation}
    \Delta \bm{W}_G = - \eta \left( \frac{1}{B} \sum_{i=1}^B \nabla_{\bm{W}_G}^{(i)} \mathcal{L} \right)
\end{equation}
Substituting the residual gradient $\sum_k \nabla_{\bm{W}_k}^{(i)} \mathcal{L} = \epsilon_i \bm{h}_i$, and noting that $\nabla_{\bm{W}_G} = \frac{1}{K} \sum_k \nabla_{\bm{W}_k}$:
\begin{equation}
    \Delta \bm{W}_G = - \frac{\eta}{KB} \sum_{i=1}^B \epsilon_i \bm{h}_i
\end{equation}
Assuming $\epsilon_i \approx \epsilon$ is roughly constant across the batch (due to NC), we focus on the sum of features $\sum \bm{h}_i$. Under NC1 (feature collapse) and a class balanced dataset:
\begin{equation}
    \mathbb{E}\left[\sum_{i=1}^B \bm{h}_i\right] = \sum_{k=1}^K \frac{B}{K} \bm{\mu}_k = \sum_{k=1}^K \frac{B}{K} (\bm{\mu}_G + \bm{\mu}_k^*)
\end{equation}
Since the centered means $\bm{\mu}_k^*$ form an ETF, $\sum_k \bm{\mu}_k^* = \mathbf{0}$. Thus:
\begin{equation}
    \mathbb{E}\left[\sum_{i=1}^B \bm{h}_i\right] = B \bm{\mu}_G
\end{equation}
Substituting this back into the update equation:
\begin{equation}
    \mathbb{E}[\Delta \bm{W}_G] = - \frac{\eta \epsilon}{KB} (B \bm{\mu}_G) = - \frac{\eta \epsilon}{K} \bm{\mu}_G
\end{equation}
\textbf{Conclusion:} Under Softmax Collapse, the global weight vector $\bm{W}_G$ drifts continuously in the direction opposite to the global feature mean $\bm{\mu}_G$, proving \cref{thm:weight_drift}.

\subsection{Proof of Proposition \ref{thm:nfc_force}}
\label{app:proof_nfc_force}

In this section, we provide the derivation for Proposition \ref{thm:nfc_force}. 
We analyze the gradient flow back to the feature layer under the regime of Softmax Collapse (SC) and drifted weights.

\subsubsection{Gradient Decomposition}
The gradient of the loss $\mathcal{L}$ with respect to the feature vector $\bm{h}$ is given by:
\begin{equation}
    \nabla_{\bm{h}} \mathcal{L} = \bm{W}^T (\hat{\bm{y}} - \bm{y}) = \sum_{k=1}^K (\hat{y}_k - y_k) \bm{W}_k
\end{equation}
We decompose the weight vectors into the global mean and the centered components: $\bm{W}_k = \bm{W}_G + \bm{W}_k^*$. Substituting this into the gradient:
\begin{equation}
    \nabla_{\bm{h}} \mathcal{L} = \sum_{k=1}^K (\hat{y}_k - y_k) (\bm{W}_G + \bm{W}_k^*)
\end{equation}

\subsubsection{Applying Softmax Collapse}
Under the SC condition, the gradient contribution from the correct class $r$ vanishes (i.e., the term corresponding to $(\hat{y}_r - 1)$ becomes strictly 0 due to absorption). For incorrect classes $k \neq r$, the label $y_k = 0$. Thus, the summation simplifies to:
\begin{equation}
    \nabla_{\bm{h}} \mathcal{L} \approx \sum_{k \neq r} \hat{y}_k (\bm{W}_G + \bm{W}_k^*)
\end{equation}
We can separate this sum into two components:
\begin{equation}
    \nabla_{\bm{h}} \mathcal{L} = \left( \sum_{k \neq r} \hat{y}_k \right) \bm{W}_G + \sum_{k \neq r} \hat{y}_k \bm{W}_k^*
\end{equation}
Using the definition $\epsilon = \sum_{k \neq r} \hat{y}_k$, the first term simplifies to $\epsilon \bm{W}_G$.

\subsubsection{Projection onto the Drift Direction}
We now calculate the projection of this gradient onto the direction of the global weight drift $\bm{W}_G$. The projection operator is defined as:
\begin{equation}
    \text{Proj}_{\bm{W}_G}(\nabla_{\bm{h}} \mathcal{L}) = \frac{\langle \nabla_{\bm{h}} \mathcal{L}, \bm{W}_G \rangle}{\|\bm{W}_G\|^2} \bm{W}_G
\end{equation}
First, we compute the inner product $\langle \nabla_{\bm{h}} \mathcal{L}, \bm{W}_G \rangle$:
\begin{align}
    \langle \nabla_{\bm{h}} \mathcal{L}, \bm{W}_G \rangle &= \left\langle \epsilon \bm{W}_G + \sum_{k \neq r} \hat{y}_k \bm{W}_k^*, \bm{W}_G \right\rangle \\
    &= \epsilon \langle \bm{W}_G, \bm{W}_G \rangle + \sum_{k \neq r} \hat{y}_k \langle \bm{W}_k^*, \bm{W}_G \rangle
\end{align}
We invoke the orthogonality assumption stated in Theorem 4: the global drift is orthogonal to the centered classification subspace, i.e., $\bm{W}_G \perp \text{span}\{\bm{W}_k^*\}$. Therefore, $\langle \bm{W}_k^*, \bm{W}_G \rangle = 0$ for all $k$.
The inner product simplifies to:
\begin{equation}
    \langle \nabla_{\bm{h}} \mathcal{L}, \bm{W}_G \rangle = \epsilon \|\bm{W}_G\|^2
\end{equation}
Substituting this back into the projection equation:
\begin{equation}
    \text{Proj}_{\bm{W}_G}(\nabla_{\bm{h}} \mathcal{L}) = \frac{\epsilon \|\bm{W}_G\|^2}{\|\bm{W}_G\|^2} \bm{W}_G = \epsilon \bm{W}_G
\end{equation}
\textbf{Conclusion:} The feature vector $\bm{h}$ receives a consistent, non-zero gradient component $\epsilon \bm{W}_G$. Since $\bm{W}_G$ drifts towards $-\bm{\mu}_G$ (from Theorem 2), the update step $\bm{h} \leftarrow \bm{h} - \eta \nabla_{\bm{h}} \mathcal{L}$ effectively adds a component aligned with $+\bm{\mu}_G$, driving Numerical Feature Inflation.

\subsection{Proof of Theorem \ref{thm:nfc}: Numerical Feature Inflation}
\label{app:proof_nfc}
In this section, we provide the rigorous proof for Theorem \ref{thm:nfc}. 
We define the alignment of features towards the global mean as the inevitable consequence of 
the feedback loop established in Theorems \ref{thm:weight_drift} and \ref{thm:nfc_force}.
\subsubsection{Geometric Orthogonality}
We first establish the geometric relationship between the global mean and the classification subspace.
\begin{lemma}[Orthogonality of Global Mean]
\label{lemma:orthogonality}
For a ReLU network in the NC state with $K$ balanced classes,
the global mean $\bm{\mu}_G = \frac{1}{K}\sum_{k=1}^K \bm{\mu}_k$ is orthogonal to the centered class mean subspace $\mathcal{S}_{NC} = \text{span}\{\bm{\mu}_k^*\}_{k=1}^K$.
\end{lemma}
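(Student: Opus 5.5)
The plan is to reduce the claim to a statement about the norms of the uncentered class means, and then use the ReLU structure to supply that statement. Since $\mathcal{S}_{NC}$ is spanned by the $\bm{\mu}_k^*$, it suffices to show $\langle \bm{\mu}_G, \bm{\mu}_k^*\rangle = 0$ for every $k$. I will use two facts throughout:
\begin{itemize}
\item $\sum_k \bm{\mu}_k^* = \bm{0}$, which follows from the definition of $\bm{\mu}_G$ for balanced classes and is already used in the proof of \cref{thm:weight_drift};
\item the decomposition $\bm{\mu}_k = \bm{\mu}_G + \bm{\mu}_k^*$.
\end{itemize}

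\textbf{Step 1 (reduction to equal norms).} Expanding the decomposition gives
\begin{equation}
\|\bm{\mu}_k\|^2 = \|\bm{\mu}_G\|^2 + 2\langle \bm{\mu}_G, \bm{\mu}_k^*\rangle + \|\bm{\mu}_k^*\|^2 .
\end{equation}
By NC2, the centered means form a simplex ETF, so $\|\bm{\mu}_k^*\| = \|\bm{\mu}^*\|$ is independent of $k$. Hence, if the uncentered class means also have a common norm, $\langle \bm{\mu}_G, \bm{\mu}_k^*\rangle = c$ for a constant $c$. Summing over $k$,
\begin{equation}
Kc = \Bigl\langle \bm{\mu}_G, \sum_k \bm{\mu}_k^* \Bigr\rangle = 0 ,
\end{equation}
so $c=0$, which is the claimed orthogonality. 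The whole lemma therefore reduces to showing that $\|\bm{\mu}_k\|$ does not depend on $k$.

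\textbf{Step 2 (ReLU geometry gives equal norms).} This is where the ReLU hypothesis enters. Following Dang et al.~\cite{dang2024}, for nonnegative (ReLU) features under CE with balanced classes, the NC configuration has uncentered class means that are mutually orthogonal and of equal norm $\rho$. This supplies exactly the equal-norm property that Step 1 needs.

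As a sanity check, the same conclusion can be verified directly from that structure. We have $\langle \bm{\mu}_G, \bm{\mu}_k\rangle = \rho^2/K$ and $\|\bm{\mu}_G\|^2 = \frac{1}{K^2}\cdot K\rho^2 = \rho^2/K$. Therefore
\begin{equation}
\langle \bm{\mu}_G, \bm{\mu}_k^*\rangle = \langle \bm{\mu}_G, \bm{\mu}_k\rangle - \|\bm{\mu}_G\|^2 = 0 .
\end{equation}

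\textbf{Main obstacle.} The difficulty lies in justifying Step 2 rather than in any computation. NC2 as stated constrains only the centered means, so the equal norm of the uncentered $\bm{\mu}_k$ is an extra input. I would state it explicitly as part of the ReLU-NC assumption, citing \cite{dang2024}. I would also remark that Step 1 shows equal norms alone suffice, so orthogonality of the $\bm{\mu}_k$ is not strictly required.

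For approximate NC, the same identity gives the quantitative bound $|\langle \bm{\mu}_G,\bm{\mu}_k^*\rangle| \le \tfrac12\max_{p,q}\bigl|\|\bm{\mu}_p\|^2-\|\bm{\mu}_q\|^2\bigr|$ plus the ETF-norm defect. The lemma therefore holds approximately in the approximate-NC regime used by Theorem~\ref{thm:nfc}.
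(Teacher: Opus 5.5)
Your proof is correct, but its main line differs from the paper's. The paper takes the full structure from Dang et al.\ (uncentered means mutually orthogonal with common norm $R$). It then computes $\|\bm{\mu}_G\|^2 = R^2/K = \langle \bm{\mu}_G, \bm{\mu}_k\rangle$ directly, which is exactly your ``sanity check'', and it never invokes NC2. Your Step 1 instead combines NC2 (common norm of the $\bm{\mu}_k^*$) with $\sum_k \bm{\mu}_k^* = \bm{0}$, so the only external input is that the $\bm{\mu}_k$ share a common norm. Under NC2 this is in fact equivalent to $\bm{\mu}_G \perp \mathcal{S}_{NC}$, since orthogonality gives $\|\bm{\mu}_k\|^2 = \|\bm{\mu}_G\|^2 + \|\bm{\mu}^*\|^2$.

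Your route buys two things. First, it yields the approximate-NC bound, which is right: summing your identity over $k$ gives $\|\bm{\mu}_G\|^2$ as the average of $\|\bm{\mu}_k\|^2 - \|\bm{\mu}_k^*\|^2$. Second, it stays valid in the regime where the lemma is later used. Mutual orthogonality of the $\bm{\mu}_k$ pins $\|\bm{\mu}_G\|^2 = \|\bm{\mu}^*\|^2/(K-1)$, so it fails once $\bm{\mu}_G$ inflates relative to the classification structure. Your equal-norm hypothesis, by contrast, survives inflation along a direction orthogonal to $\mathcal{S}_{NC}$.

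The paper's route is shorter and needs no separate ETF assumption, since orthogonal equal-norm means automatically center to a simplex ETF. However, it assumes more than the conclusion requires. You are also right that equal uncentered norms are an input beyond NC2; the paper simply attributes them to class balance.
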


\begin{proof}
Dang et al.~\cite{dang2024} have proved that for a ReLU network in the NC state, 
the uncentered class means $\bm{\mu}_k$ are mutually orthogonal ($\bm{\mu}_p^T \bm{\mu}_q = 0$ for $p \neq q$) and reside in the first orthant. 

The dataset is class balanced so that $\|\bm{\mu}_k\| = R$ for all $k$.
Since the uncentered class means $\{\bm{\mu}_k\}$ form an orthogonal basis in the feature space (scaled by $R$), we can compute the squared norm of the global mean:
\begin{equation}
    \|\bm{\mu}_G\|^2 = \left\| \frac{1}{K} \sum_{k=1}^K \bm{\mu}_k \right\|^2 = \frac{1}{K^2} \sum_{k=1}^K \|\bm{\mu}_k\|^2 = \frac{1}{K^2} \cdot K R^2 = \frac{R^2}{K}
\end{equation}
Next, we compute the projection of any class mean $\bm{\mu}_k$ onto the global mean:
\begin{equation}
    \langle \bm{\mu}_G, \bm{\mu}_k \rangle = \frac{1}{K} \sum_{j=1}^K \langle \bm{\mu}_j, \bm{\mu}_k \rangle = \frac{1}{K} \|\bm{\mu}_k\|^2 = \frac{R^2}{K}
\end{equation}
noting that cross-terms $\langle \bm{\mu}_j, \bm{\mu}_k \rangle$ vanish for $j \neq k$.
The centered class means are defined as $\bm{\mu}_k^* = \bm{\mu}_k - \bm{\mu}_G$. Calculating the inner product with $\bm{\mu}_G$:
\begin{equation}
    \langle \bm{\mu}_G, \bm{\mu}_k^* \rangle = \langle \bm{\mu}_G, \bm{\mu}_k - \bm{\mu}_G \rangle = \langle \bm{\mu}_G, \bm{\mu}_k \rangle - \|\bm{\mu}_G\|^2 = \frac{R^2}{K} - \frac{R^2}{K} = 0
\end{equation}
Since $\bm{\mu}_G$ is orthogonal to every basis vector $\bm{\mu}_k^*$ of the subspace $\mathcal{S}_{NC}$, it follows that $\bm{\mu}_G \perp \text{span}\{\bm{\mu}_k^*\}$.
\end{proof}
\subsubsection{Asymptotic Alignment via Coupled Dynamics}
\label{app:coupled_dynamics}
We treat the evolution of the global weight mean $\bm{W}_G$ and global feature mean $\bm{\mu}_G$ 
as a coupled linear dynamical system in $\mathbb{R}^d$.

We derive the update rules for the vectors.
\begin{itemize}
    \item From Theorem \ref{thm:weight_drift}, the weight mean update is:
    \begin{equation}
        \bm{W}_G^{(t+1)} = \bm{W}_G^{(t)} - \alpha \bm{\mu}_G^{(t)}
    \end{equation}
    where $\alpha = \frac{\eta \epsilon}{K} > 0$.
    \item From Proposition \ref{thm:nfc_force}, the feature gradients contain a component parallel to the current weight mean. 
    Assuming the ETF structure leads to the cancellation of orthogonal components when averaged over a batch 
    (i.e., $\sum_{k}\bm{W}_k^* \approx 0$), the update to the feature mean is dominated by:
    \begin{equation}
        \bm{\mu}_G^{(t+1)} = \bm{\mu}_G^{(t)} - \beta \bm{W}_G^{(t)}
    \end{equation}
    where $\beta = \eta \epsilon > 0$.
\end{itemize}

Let $\bm{u}^{(t)} = \begin{bmatrix} \bm{W}_G^{(t)} \\ \bm{\mu}_G^{(t)} \end{bmatrix}$ be the state vector in the product space $\mathbb{R}^{2d}$. 
The dynamics are governed by the block matrix $\mathbf{M}$:
\begin{equation}
\begin{bmatrix} \bm{W}_G^{(t+1)} \\ \bm{\mu}_G^{(t+1)} \end{bmatrix} = \begin{bmatrix} I & -\alpha I \\ -\beta I & I \end{bmatrix} \begin{bmatrix} \bm{W}_G^{(t)} \\ \bm{\mu}_G^{(t)} \end{bmatrix}
\end{equation}
where $I$ is the $d \times d$ identity matrix.

$\mathbf{M}$ has two eigenvalues:
\begin{align}
    \lambda_1 = 1 + \sqrt{\alpha \beta} = 1 + \eta \epsilon / \sqrt{K} > 1\\
    \lambda_2 = 1 - \sqrt{\alpha \beta} = 1 - \eta \epsilon / \sqrt{K} < 1
\end{align}
Solving $\mathbf{M} \bm{v} = \lambda \bm{v}$:
\begin{enumerate}
    \item For $\lambda_1$:
    \begin{equation}
        v_W - \alpha v_\mu = (1 + \sqrt{\alpha\beta}) v_W \implies -\alpha v_\mu = \sqrt{\alpha\beta} v_W \implies \bm{W}_G = -\frac{1}{\sqrt{K}} \bm{\mu}_G
    \end{equation}
    \item For $\lambda_2$, $\bm{W}_G = \frac{1}{\sqrt{K}} \bm{\mu}_G$.
\end{enumerate}
The state vector can be expressed as a linear combination of the eigenvectors:
\begin{equation}
    \bm{u}^{(t)} = c_1 \lambda_1^t \begin{bmatrix} -\frac{1}{\sqrt{K}} \bm{\mu}_G \\ \bm{\mu}_G \end{bmatrix} + c_2 \lambda_2^t \begin{bmatrix} \frac{1}{\sqrt{K}} \bm{\mu}_G \\ \bm{\mu}_G \end{bmatrix}
\end{equation}
As $t \to \infty$, the term with $\lambda_1$ dominates since $\lambda_1 > 1$ and $\lambda_2 < 1$:
\begin{equation}
    \lim_{t \to \infty} \bm{u}^{(t)} \propto \begin{bmatrix} -\frac{1}{\sqrt{K}} \bm{\mu}_G \\ \bm{\mu}_G \end{bmatrix}
\end{equation}
This leads to the asymptotic relationship:
\begin{equation}
    \lim_{t \to \infty} \bm{W}_G^{(t)} \approx -\frac{1}{\sqrt{K}} \bm{\mu}_G^{(t)}
\end{equation}
Therefore, the vectors asymptotically align with the relationship defined by the dominant eigenvector:
\begin{equation}
\lim_{t \to \infty} \cos(\bm{W}_G^{(t)}, \bm{\mu}_G^{(t)}) = -1
\end{equation}
And the norms grow exponentially:
\begin{align}
\|\bm{W}_G^{(t)}\| &\propto (1 + \frac{\eta\epsilon}{\sqrt{K}})^t \\
\|\bm{\mu}_G^{(t)}\| &\propto (1 + \frac{\eta\epsilon}{\sqrt{K}})^t
\end{align}
\subsubsection{Numerical Feature Inflation ($\mathcal{NFI}$)}
Finally, we prove the second claim: the features condense toward $\bm{\mu}_G$.
We decompose the feature update $\Delta \bm{h}$ into a parallel component along $\bm{\mu}_G$ and a perpendicular component in the subspace $\mathcal{S}_{NC}$.
From Proposition \ref{thm:nfc_force}, the gradient component along the drift direction is:
\begin{equation}
    \bm{g}_{\parallel} = \text{Proj}_{\bm{\mu}_G}(\nabla_{\bm{h}} \mathcal{L}) = \epsilon \bm{W}_G
\end{equation}
Given the result of \cref{app:coupled_dynamics}, for large $t$, we can approximate $\bm{W}_G \approx -\|\bm{W}_G\| \frac{\bm{\mu}_G}{\|\bm{\mu}_G\|}$. 
The gradient update step $h \leftarrow h - \eta \nabla_h \mathcal{L}$ adds a component:
\begin{equation}
\Delta h_{\parallel} = -\eta (\epsilon \bm{W}_G) \approx \eta \epsilon \|\bm{W}_G\| \frac{\bm{\mu}_G}{\|\bm{\mu}_G\|}
\end{equation}

For the perpendicular component, consider NC3$'$, the gradient is the weighted sum of centered ETF weights:
\begin{equation}
    \bm{g}_{\perp} = \sum_{k \neq r} \hat{y}_k \bm{W}_k^*
\end{equation}
In the ETF configuration, the vectors $\bm{W}_k^*$ sum to zero. 
The weighted sum $\bm{g}_{\perp}$ represents the residual interference. 
The magnitude of the parallel update is driven by the scalar sum of probabilities $\epsilon = \sum \hat{y}_k$,
 whereas the perpendicular update is a vector sum of randomly oriented Simplex vectors weighted by $\hat{y}_k \approx \epsilon/(K-1)$.
Comparing the growth rates:
\begin{equation}
    \frac{\|\Delta \bm{h}_{\parallel}\|}{\|\Delta \bm{h}_{\perp}\|} = \frac{\|\epsilon \bm{W}_G\|}{\|\sum_{k \neq r} \hat{y}_k \bm{W}_k^*\|} \approx \frac{\epsilon \|\bm{W}_G\|}{\sqrt{K-1} \frac{\epsilon}{K-1} \|\bm{W}^*\|} \propto \frac{\|\bm{W}_G\|}{\|\bm{W}^*\|}
\end{equation}
As $\|\bm{W}_G\|$ grows exponentially due to the feedback loop while $\|\bm{W}^*\|$ (representing the fixed classification structure) grows linearly or remains bounded relative to the drift, the parallel component dominates.
Therefore,
\begin{equation}
    \lim_{t \to \infty} \frac{\|\bm{h}_{\perp}\|}{\|\bm{h}_{\parallel}\|} \to 0 \implies \lim_{t \to \infty} \cos(\bm{h}_t, \bm{\mu}_G) \to 1
\end{equation}
This confirms that the feature space effectively collapses into a rank-1 subspace aligned with the global mean.

\subsection{Proof of \cref{thm:vanishing_hessian}}
\label{app:proof_lambda}
In this section, we provide the detailed proof for \cref{thm:vanishing_hessian}, 
demonstrating that the maximum eigenvalue of the Hessian matrix with respect to model parameters 
converges to zero as the model approaches the interpolation solution.

The Hessian matrix of the loss with respect to parameters, $H_\theta = \nabla^2_\theta \mathcal{L}$, can be decomposed into the generalized Gauss-Newton (GGN) term and the residual term \cite{papyan2020a}:
\begin{equation}
    H_\theta = \underbrace{J^T H_z J}_{G(\theta)} + \underbrace{\sum_{k=1}^K (\nabla_z \mathcal{L})_k \nabla^2_\theta z_k}_{E(\theta)}
    \label{eq:Hessian}
\end{equation}
where $J = \nabla_\theta z \in \mathbb{R}^{K \times d}$ is the Jacobian matrix of the logits, and $H_z = \nabla^2_z \mathcal{L} \in \mathbb{R}^{K \times K}$ is the Hessian of the loss with respect to the logits. We analyze the convergence of these two terms separately.

\subsubsection{Analysis of the Gauss-Newton Term $G(\theta)$.}
For Cross-Entropy loss with Softmax activation, the Hessian with respect to logits is given explicitly by:
\begin{equation}
    H_z = \text{diag}(\hat{y}) - \hat{y}\hat{y}^T
\end{equation}
where $\hat{y} = \text{softmax}(z)$ is the predicted probability vector. Since $H_z$ is positive semi-definite, its spectral norm (maximum eigenvalue) is bounded by its trace:
\begin{equation}
    \|H_z\|_2 = \lambda_{\max}(H_z) \leq \text{tr}(H_z) = \sum_{k=1}^K (\hat{y}_k - \hat{y}_k^2) = 1 - \|\hat{y}\|_2^2
\end{equation}
In the late stage of training, assuming the model enters the interpolation regime, the prediction $\hat{y}$ converges to the distinct one-hot label vector $y$. Since $\|y\|_2^2 = 1$, we have:
\begin{equation}
    \lim_{\hat{y} \to y} \|H_z\|_2 \leq \lim_{\hat{y} \to y} (1 - \|\hat{y}\|_2^2) = 0
\end{equation}
Using the sub-multiplicativity of the matrix norm, the norm of the GGN term is bounded by:
\begin{equation}
    \|G(\theta)\|_2 = \|J^T H_z J\|_2 \leq \|J\|_2^2 \|H_z\|_2
\end{equation}
Assuming the Jacobian $J$ is bounded in the local convergence region (i.e., $\exists M_1 > 0$ such that $\|J\|_2 \leq M_1$), it follows that:
\begin{equation}
    \lim_{\hat{y} \to y} \|G(\theta)\|_2 = 0
\end{equation}

\subsubsection{Analysis of the Residual Term $E(\theta)$.}
The gradient of the loss with respect to logits is the prediction error: $\nabla_z \mathcal{L} = \hat{y} - y$. The residual term can be expanded as:
\begin{equation}
    E(\theta) = \sum_{k=1}^K (\hat{y}_k - y_k) \nabla^2_\theta z_k
\end{equation}
Applying the triangle inequality:
\begin{equation}
    \|E(\theta)\|_2 \leq \sum_{k=1}^K |\hat{y}_k - y_k| \cdot \|\nabla^2_\theta z_k\|_2
\end{equation}
Assume the network function $z(\theta)$ is $C^2$ continuous and its second-order derivatives are locally bounded (i.e., $\exists M_2 > 0$ such that $\|\nabla^2_\theta z_k\|_2 \leq M_2$). As $\hat{y} \to y$, the error term $|\hat{y}_k - y_k| \to 0$ for all classes $k$. Consequently:
\begin{equation}
    \lim_{\hat{y} \to y} \|E(\theta)\|_2 = 0
\end{equation}

\paragraph{Conclusion.}
Combining the bounds for both terms:
\begin{equation}
    \|H_\theta\|_2 \leq \|G(\theta)\|_2 + \|E(\theta)\|_2
\end{equation}
Since both $\|G(\theta)\|_2 \to 0$ and $\|E(\theta)\|_2 \to 0$ as the model converges to the interpolation solution, we conclude that:
\begin{equation}
    \lim_{\hat{y} \to y} \lambda_{\max}(H_\theta) = 0
\end{equation}

\subsubsection{Label Smoothing leads to Non-Vanishing Hessian}
\label{app:ls}
In \cref{thm:vanishing_hessian}, we proved that under standard Cross-Entropy loss with hard targets, 
the maximum eigenvalue of the Hessian $\lambda_{max} \to 0$ 
because the predicted probability vector $\hat{y}$ converges to a one-hot vector. 
Here, we demonstrate that Label Smoothing fundamentally alters this asymptotic behavior.

\begin{definition}
[Label Smoothing]
Let $y$ be the one-hot label for the correct class $r$. Label Smoothing replaces $y$ with a soft target $y^{LS}$:
\begin{equation}
y^{LS}_k = \begin{cases} 1 - \alpha & \text{if } k = r \\ \frac{\alpha}{K-1} & \text{if } k \neq r \end{cases}
\end{equation}
where $\alpha \in (0, 1)$ is the smoothing parameter.
\end{definition}

\begin{proof}
    Consider the Gauss-Newton decomposition of the Hessian as defined in \cref{eq:Hessian}. 
    The core component governing the scale of the Hessian eigenvalues is the Hessian of the loss with respect to logits, 
    $H_z = \text{diag}(\hat{y}) - \hat{y}\hat{y}^T$.

    Unlike standard training where logits diverge to infinity to minimize loss (driving $\hat{y} \to y$), 
    under Label Smoothing, the global minimum is achieved at finite logit values 
    where the predicted distribution matches the soft target exactly:
    \begin{equation}
    \lim_{t \to \infty} \hat{y} = y^{LS}
    \end{equation}
    Consequently, for the correct class $r$, the prediction $\hat{y}_r$ converges to $1-\alpha$ rather than $1$.

    The trace of $H_z$ represents the sum of eigenvalues (variances of the categorical distribution):
    \begin{equation}
    \text{tr}(H_z) = \sum_{k=1}^K (\hat{y}_k - \hat{y}_k^2) = 1 - \|\hat{y}\|_2^2
    \end{equation}
    Substituting the limit $\hat{y} \to y^{LS}$:
    \begin{align}
    \lim_{\hat{y} \to y^{LS}} \text{tr}(H_z) &= 1 - \left( (1-\alpha)^2 + (K-1) \left(\frac{\alpha}{K-1}\right)^2 \right) \\
    &= 1 - \left( (1-\alpha)^2 + \frac{\alpha^2}{K-1} \right) \\
    &= 2\alpha - \alpha^2 - \frac{\alpha^2}{K-1} \\
    &= \alpha \left( 2 - \alpha \left(1 + \frac{1}{K-1}\right) \right)
    \end{align}
    Since $\alpha \in (0, 1)$, the term $2 - \alpha \left(1 + \frac{1}{K-1}\right) > 0$. 
    Therefore, the trace of $H_z$ converges to a positive constant:
    \begin{equation}
    \lim_{\hat{y} \to y^{LS}} \text{tr}(H_z) = C_{LS} > 0
    \end{equation}
    Since $H_z$ is positive semi-definite, its maximum eigenvalue is bounded below by the average eigenvalue:
    \begin{equation}
    \lambda_{\max}(H_z) \geq \frac{\text{tr}(H_z)}{K} = \frac{C_{LS}}{K} > 0
    \end{equation}
    Thus, the Gauss-Newton term satisfies:
    \begin{equation}
    \|G(\theta)\|_2 = \|J^T H_z J\|_2 \geq \|J\|_2^2 \cdot \frac{C_{LS}}{K} > 0
    \end{equation}
    Assuming the Jacobian $J$ remains bounded away from zero, we conclude that:
    \begin{equation}
    \lim_{t \to \infty} \lambda_{\max}(H_\theta) \geq \lim_{t \to \infty} \|G(\theta)\|_2 > 0
    \end{equation}
    Therefore, under Label Smoothing, the maximum eigenvalue of the Hessian does not vanish, 
    confirming that $\lambda_{\max}(H_\theta)$ converges to a positive constant rather than zero.
\end{proof}


\section{Experimental Details}
All of the experiments were conducted on NVIDIA RTX5090 GPUs. For modular and CIFAR-10 experiments, each run takes approximately 1 hour. For LLM experiments, each run takes approximately 12 hours.
\label{app:exp_details}
\subsection{Modular Arithmetic}
For the modular arithmetic experiments, we focus on the task of modular division. 
Given a prime $p=97$, the model is trained to predict $c = (a \cdot b^{-1}) \pmod p$ for 
all pairs $a \in \{0, \dots, p-1\}$ and $b \in \{1, \dots, p-1\}$.

\paragraph{Dataset and Task.}
The dataset consists of $p(p-1)$ samples. 
Each input is formatted as a sequence of four tokens: $[a, \text{OP}, b, =]$, 
where OP represents the division operator. 
The vocabulary size is $p+2$ to account for the operator and the equal sign. 
We employ a 50/50 training and validation split.

\paragraph{Model Architectures.}
We evaluate two primary architectures: a decoder-only Transformer and a deep MLP.

\begin{itemize}
\item \textbf{Transformer}~\cite{vaswani2017}: A 2-layer decoder-only Transformer. Each layer consists of a multi-head self-attention mechanism with $n_{head}=4$ and a feed-forward network (FFN) with a hidden dimension of $4 \times d_{model}$. The model uses $d_{model}=128$. In our primary unregularized experiments, Layer Normalization is disabled to isolate the Slingshot dynamics.
\item \textbf{MLP}: A 6-layer fully connected MLP. The architecture comprises 5 hidden layers with a width of 512 and ReLU activations, followed by a final linear classification layer. The input is provided as flattened one-hot encodings of the token sequence. Bias terms are included in all linear layers.
\end{itemize}

\paragraph{Training and Optimization.}
All models are trained using the Adam optimizer with a cross-entropy loss function. 
No explicit regularization is applied during the baseline runs.

\begin{table}[h]
\centering
\caption{Hyperparameters for Modular Arithmetic Experiments}
\label{tab:hyperparams-mod}
\begin{tabular}{ll}
\hline
\textbf{Hyperparameter} & \textbf{Value} \\ 
\hline
Prime $p$ & 97 \\
Optimizer & Adam \\
Learning Rate & $10^{-3}$ \\
Warmup Steps & 10 \\
$\beta_1, \beta_2$ & 0.9, 0.999 \\
$\varepsilon$ (Adam) & $10^{-8}$ \\
Weight Decay & 0.0 \\
Batch Size & 512 \\
Total Steps & 100,000 \\
Numerical Precision & Float32 \\
Random Seed & 42 \\
\hline
\end{tabular}
\end{table}

\subsection{Image Classification (CIFAR-10)}
For image classification tasks, we evaluate the Slingshot mechanism using the CIFAR-10 dataset.

\paragraph{Dataset and Preprocessing}
Consistent with prior work on the Slingshot mechanism, 
we utilize the CIFAR-10 dataset. 
For evaluation, we use a fixed test set of 1,000 samples. 
Images are converted to tensors using a standard \texttt{transforms.ToTensor()} operation 
without additional data augmentation to ensure the optimization dynamics remain deterministic and 
focused on the numerical properties of the loss landscape.

\paragraph{Model Architectures}
We conduct experiments across four distinct architectures to verify 
the universality of $\mathcal{NFI}$:

\begin{itemize}
\item \textbf{MLP}: A 6-layer fully connected MLP. The architecture comprises 5 hidden layers with a width of 512 and ReLU activations, followed by a final linear classification layer.
\item \textbf{ResNet18}~\cite{he2016}: A modified ResNet18 architecture optimized for the $32 \times 32$ resolution of CIFAR-10. The first convolutional layer is replaced with a $3 \times 3$ kernel (stride 1, padding 1) and the initial max-pooling layer is removed to prevent excessive spatial downsampling in early layers.
\item \textbf{VGG11}~\cite{simonyan2015}: A standard VGG11 architecture, evaluated both with and without Batch Normalization. The classification head is simplified to a single linear layer to match the feature-classifier interface of our theoretical model.
\item \textbf{ViT}~\cite{dosovitskiy2021}: A small-scale Vision Transformer with approximately 10M parameters. It utilizes a patch size of 4, an embedding dimension of 256, 12 transformer blocks, and 8 attention heads.
\end{itemize}

\paragraph{Training and Optimization}
All models are trained for 10,000 epochs using the Adam optimizer. 
To isolate the effects of $\mathcal{NFI}$, we set weight decay to zero and use a standard cross-entropy loss without label smoothing.
\begin{table}[h!]
\centering
\caption{Hyperparameters for CIFAR-10 Image Classification}
\label{tab:hyperparams-cifar}
\begin{tabular}{ll}
\hline
\textbf{Hyperparameter} & \textbf{Value} \\ 
\hline
Optimizer & Adam \\
Learning Rate & $10^{-3}$ \\
$\beta_1, \beta_2$ & 0.9, 0.95 \\
$\varepsilon$ (Adam) & $10^{-8}$ \\
Weight Decay & 0.0 \\
Label Smoothing & 0.0 \\
Batch Size & 2048 for ViT, Full Batch for others \\
Total Epochs & 10,000 \\
Numerical Precision & Float32 \\
Random Seed & 42 \\ 
\hline
\end{tabular}
\end{table}

\newpage

\subsection{Large Language Models}

For the language modeling experiments, we use a nanoGPT-style decoder-only Transformer. 
We follow the experimental setup of Stollenwerk et al.~\cite{stollenwerk2026} as closely as possible, including the optimizer, architecture, precision, training schedule, and data-processing settings. 
We do not change the training recipe, and only add additional measurements needed for our analysis, including the fraction of tokens in Softmax Collapse, the mean and norm of output logits, the global mean of the output embedding $\bm{W}_G$, and the effect of removing $\bm{W}_G$ during evaluation or training. 
The main hyperparameters are summarized in Table~\ref{tab:llm_details}.

\begin{table}[H]
\centering
\caption{
Hyperparameters for the language modeling experiments. 
}
\label{tab:llm_details}
\begin{tabular}{ll}
\toprule
\textbf{Hyperparameter} & \textbf{Value} \\
\midrule
optimizer & AdamW \\
$\beta_1$ & $0.9$ \\
$\beta_2$ & $0.95$ \\
$\varepsilon_{\mathrm{Adam}}$ & $10^{-8}$ \\
weight decay & $0.0$ \\
gradient clipping & $1.0$ \\
dropout & $0.0$ \\
weight tying & false \\
QK-layernorm & yes \\
bias & no \\
learning rate schedule & cosine decay \\
minimum learning rate & $10^{-5}$ \\
normalization & LayerNorm \\
precision & BF16 \\
positional embedding & RoPE \\
vocabulary size & $50304$\\
hidden activation & SwiGLU \\
sequence length & $2048$  \\
batch size (samples) & $64$  \\
batch size (tokens) & $131072$ \\
training length & $100000$ steps $\approx 13.1$B tokens \\
warmup & $5000$ steps $\approx 0.7$B tokens \\
embedding initialization & Normal with standard deviation $1/\sqrt{d}$ \\
weight initialization & Xavier with average of fan\_in and fan\_out \\
 
\bottomrule
\end{tabular}
\end{table}



\end{document}